\newcommand{\RR}{\mathbb{R}} 
\newcommand{\NN}{\mathbb{N}} 
\newcommand{\EE}{\mathbb{E}} 
\newcommand{\PP}{\mathbb{P}} 
\newcommand{\calL}{\mathcal{L}} 
\newcommand{\calO}{\mathcal{O}} 
\DeclareMathOperator{\dg}{dg} 
\DeclareMathOperator{\Id}{I} 
\DeclareMathOperator{\conv}{conv} 
\newcommand{\bydef}{\stackrel{\mbox{\tiny\textnormal{\raisebox{0ex}[0ex][0ex]{def}}}}{=}}
\theoremstyle{plain}
\newtheorem{theorem}{Theorem}[section]
\newtheorem*{theorem*}{Theorem}
\newtheorem{proposition}[theorem]{Proposition}
\newtheorem{lemma}[theorem]{Lemma}
\newtheorem{corollary}[theorem]{Corollary}
\theoremstyle{definition}
\newtheorem{assumption}[theorem]{Assumption}
\newtheorem*{assumption*}{Assumption}
\theoremstyle{remark}
\newtheorem{remark}[theorem]{Remark}
\title[Improved Overparametrization Bounds for Global SGD Convergence]{Improved Overparametrization Bounds for Global Convergence of SGD for Shallow Neural Networks}
\author{Bart\l{}omiej Polaczyk}
\address{Institute of Informatics, University of Warsaw, Banacha 2, Warsaw, Poland.}
\author{Jacek Cyranka}
\email{b.polaczyk@mimuw.edu.pl (BP)}
\email{j.cyranka@mimuw.edu.pl (JC)}
\begin{document}

\begin{abstract}
	We study the overparametrization bounds required for the global convergence of stochastic gradient descent algorithm for a class of one hidden layer feed-forward neural networks equipped with ReLU activation function. 
	We improve the existing state-of-the-art results in terms of the required hidden layer width. 
	We introduce a new proof technique combining nonlinear analysis with properties of random initializations of the network.
\end{abstract}
\maketitle 

\tableofcontents

\section{Introduction}
\label{sec:introduction}
The study of convergence properties of mini-batch stochastic gradient descent (SGD) iterations applied to feed-forward neural nets (NN) is at the core of modern machine learning research. 
SGD with its variants like ADAM is the most common optimization scheme applied for supervised training of~NN. 
In principle however, the loss landscape encountered when training NN is highly nonconvex, especially for deep nonlinear NN as revealed, e.g., by visualizations performed by~\cite{visualize}, and construction proofs of spurious local minima by \cite{explocmin,brutzkus2018sgd}. 
The nonconvexity may have severe consequences for practical NN training routines, as SGD may potentially get stuck at a spurious local minimum or a saddle point and cease to converge further down the loss valley. 
Yet, practice suggests that with enough overparametrization, SGD iterations achieve global minima most of the times.
This phenomenon is not fully understood yet and is the main theme of this paper.

Contemporary research on NN convergence theory was initiated with the study of linear networks. The loss landscape in this setting was fully characterized by~\cite{kawaguchi}, solving the problem stated by~\cite{choromanska}. 
The research revealed the feasibility of global SGD convergence for deep NN despite the loss landscape nonconvexity.

Even though it seems difficult to fully characterize the loss landscape in the nonlinear setting, proving the global SGD convergence is still feasible. 
Recent research suggests that SGD converges globally with high probability for random initialization of weights, under the assumption of sufficiently large overparametrization expressed in terms of NN layers' widths. 
The first result of this kind required an unrealistic level of overparametrization of polynomial order in the number of samples, cf.~\cite{allenzhu}. 
The following series of related results (see Table~\ref{tab:related_works}) further reduced the required level of overparametrization using various techniques and assumptions on training data. 
Especially in the case of Deep NN equipped with analytic activation functions, an overparametrization of the linear order with respect to the number of training examples is sufficient. 
However, such tight overparametrization results do not apply in the case of a non-differentiable ReLU activation function (see Table~\ref{tab:related_works}).
Existing theoretical bounds still require a significantly larger number of parameters than used in practice. The question about an exact boundary marking the minimal number of parameters required for the global convergence is still open even for shallow (one hidden layer) ReLU NN, see, e.g.,~\cite{oymak}.

\subsection{Main Contribution.}
We establish a new theoretical order of overparametrization required for SGD convergence towards a global minimizer for one hidden layer NN with ReLU activations, improving known state-of-the-art bounds. 
We introduce a new proof technique based on nonlinear analysis.  
First, we show the global convergence of continuous solutions of the differential inclusion (DI) being a nonsmooth analog of the gradient flow for the MSE loss. 
Second, using the existing nonsmooth analysis results, we establish closeness of continuous trajectories to SGD sequences until convergence for a sufficiently small learning rate. 

The concept of studying the dynamics of continuous solutions pursued in this work already appeared earlier~\cite{arora2018a,du_provably}. 
However, the authors treated the convergence of SGD sequences independently from the analysis of continuous solutions, which served motivational purpose only.
We develop a rigorous method for for establishing the convergence of SGD sequences via the convergence of continuous solutions, which works for general nonsmooth approximators including deep NN and general loss functions.

\subsection{Informal statements.}
We derive the global convergence results under the following assumptions and notation (made precise later on). 
Let $N$ be the sample size.
The input data comes from the i.i.d. sub-Gaussian distribution on the sphere in $\RR^{d_0}$, where $d_0\in[N^{\delta_0}, N]$ for some $\delta_0\in(0,1)$. 
The initial weight vector $\theta_0$ is obtained via LeCun scheme (variance scales with width). 
$\mathcal{L}(\theta)$ is the MSE loss for some output matrix, weight vector $\theta$ and NN equipped with ReLU activation function.
The subdifferential in the sense of Clarke is denoted by $\partial$ and $\tilde{\Omega}$ is the $\Omega$ notation hiding the logarithmic terms.
All presented results hold with high probability (WHP), meaning that the probability of the event converges to one as the number of samples $N$ diverges to infinity, a convention widely adopted in the literature.

Our first main result provides a condition for the global convergence of the continuous solutions of the nonsmooth analog of gradient flow for $\mathcal{L}$.
\begin{theorem}[Informal Corollary~\ref{C:g_flow_convergence}]	\label{T:1_informal}
	Let the width of the shallow NN satisfy 
	$
	d_1 = \tilde{\Omega}(N^{1.25}).
	$
	Then, any solution $\theta\colon\RR_+\to\RR$ to the DI Cauchy problem  $\theta(0) = \theta_0$, $\dot{\theta}(t) 
	\in - \partial \calL(\theta(t))$
	satisfies 
	$
	\calL(\theta(t))\le 
	\calL(\theta(0)) \exp
	(
	-ctd_1
	)
	$
	for all $t\ge 0$ and some constant $c>0$ WHP.
\end{theorem}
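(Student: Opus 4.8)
The plan is to establish exponential decay of the loss along any trajectory of the differential inclusion by controlling the "effective gradient" from below in terms of the loss value itself. Writing $r(t) = f_{\theta(t)}(X) - Y$ for the residual vector on the training set, the MSE loss is $\calL(\theta) = \tfrac{1}{2N}\|r\|^2$, and along an absolutely continuous solution $\theta(t)$ of $\dot\theta \in -\partial\calL(\theta)$ we have, for a.e. $t$,
\begin{equation*}
\frac{d}{dt}\calL(\theta(t)) = \langle g(t), \dot\theta(t)\rangle = -\|g(t)\|^2,
\end{equation*}
where $g(t) \in \partial\calL(\theta(t))$ is the chosen subgradient (using that $\calL$ is regular/the chain rule for Clarke subdifferentials along trajectories, which I will cite from the nonsmooth analysis results invoked earlier). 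So it suffices to prove a Polyak–Łojasiewicz–type inequality $\|g(t)\|^2 \ge c\, d_1\, \calL(\theta(t))$ holding along the whole trajectory WHP, after which Grönwall gives $\calL(\theta(t)) \le \calL(\theta(0))e^{-ctd_1}$.

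The heart of the argument is therefore the PL inequality. Any Clarke subgradient $g$ of $\calL$ has the form $g = \tfrac{1}{N} J_\theta^\top r$, where $J_\theta$ is (a measurable selection of) the Jacobian of the residual map $\theta\mapsto r$; hence $\|g\|^2 \ge \tfrac{1}{N^2}\lambda_{\min}(J_\theta J_\theta^\top)\,\|r\|^2 = \tfrac{2}{N}\lambda_{\min}(K_\theta)\,\calL(\theta)$, where $K_\theta = \tfrac1N J_\theta J_\theta^\top$ is the (finite-width, empirical) neural tangent kernel / Gram matrix. Thus the claim reduces to showing that $\lambda_{\min}(K_{\theta(t)})$ stays bounded below by a constant multiple of $d_1$ along the trajectory. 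This is where the overparametrization width $d_1 = \tilde\Omega(N^{1.25})$ and the random LeCun initialization enter. First I would show that at initialization $\lambda_{\min}(K_{\theta_0}) \gtrsim d_1\,\lambda_0$ WHP, where $\lambda_0>0$ is the smallest eigenvalue of the limiting NTK (positive by the sub-Gaussian-on-the-sphere data assumption and the $d_0\ge N^{\delta_0}$ lower bound on the input dimension, which prevents data collinearity). Then I would prove a perturbation bound: $|\lambda_{\min}(K_\theta) - \lambda_{\min}(K_{\theta_0})|$ is small provided $\|\theta - \theta_0\|$ stays within a radius $R$, exploiting that the ReLU Jacobian changes only through sign-pattern flips and that, WHP over the initialization, only a controlled fraction of neurons can change activation pattern within radius $R$. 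The scaling $N^{1.25}$ (rather than the $N^{2}$ or worse of earlier work) should come from a sharper accounting of how many neurons are "at risk" of flipping — this is the novel nonlinear-analysis ingredient the introduction advertises.

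To close the loop I would run a continuity/bootstrap argument: define $T^\ast = \sup\{t: \lambda_{\min}(K_{\theta(s)}) \ge \tfrac12 c\,d_1 \ \forall s\le t\}$. On $[0,T^\ast)$ the PL inequality holds, so $\calL$ decays exponentially, which in turn bounds the trajectory length via
\begin{equation*}
\|\theta(t)-\theta_0\| \le \int_0^t \|\dot\theta(s)\|\,ds = \int_0^t \|g(s)\|\,ds \le \int_0^t \sqrt{2\calL(\theta(s))}\,\|J_{\theta(s)}\|/\sqrt N\,ds,
\end{equation*}
and since $\calL(\theta(s))^{1/2}$ decays like $e^{-cd_1 s/2}$ while $\|J_\theta\|/\sqrt N = O(\sqrt{d_1})$ uniformly on the relevant ball, the total displacement is $O(\sqrt{d_1}\cdot\sqrt{\calL(\theta_0)}/(d_1)) = O(\sqrt{\calL(\theta_0)/d_1})$, which is $\le R$ once $d_1$ is large enough. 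Hence $\theta(t)$ never leaves the radius-$R$ ball, the perturbation bound keeps $\lambda_{\min}(K_{\theta(t)})$ above $\tfrac12 c\,d_1$, and therefore $T^\ast = \infty$. I expect the main obstacle to be the nonsmooth subtleties: justifying the chain rule $\tfrac{d}{dt}\calL(\theta(t)) = -\|g(t)\|^2$ for Clarke subgradients along an arbitrary DI solution, and ensuring the Jacobian/Gram-matrix perturbation estimates are valid for the measurable selections arising in $\partial\calL$ rather than only at points of differentiability — this is precisely where the cited nonsmooth analysis machinery has to be deployed carefully.
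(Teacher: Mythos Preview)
Your high-level bootstrap scheme --- a PL-type inequality along the trajectory, Gr\"onwall for the loss, a bound on the total displacement $\|\theta(t)-\theta_0\|$, and a continuity argument closing the loop --- is indeed the skeleton of the paper's proof, and your concern about the chain rule for Clarke subgradients is exactly the issue the paper handles via Proposition~\ref{P:DI_existence}. But the step where you locate the improvement to $N^{1.25}$ is wrong, and that is a genuine gap.

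The paper never counts neuron flips. Two concrete differences from your outline drive the bound. First, the PL inequality is obtained using only the $V$-block of the subgradient: $\|\partial_V\calL(\theta)\|_F^2=\|H^T(Y-\hat Y)\|_F^2\ge 2\,\sigma_{\min}^2(H^T)\,\calL(\theta)$, so the quantity to control is $\alpha_0(t)=\sigma_{\min}(H(t)^T)$ with $H=\phi(XW)$, not the full NTK Gram matrix. This matters because $H$ depends only on $W$ and involves no subgradient selection. Second, the perturbation of $\alpha_0$ is handled by Weyl plus the $1$-Lipschitzness of ReLU, $|\alpha_0(t)-\alpha_0(0)|\le\|H(t)-H(0)\|_F\le\|X\|_{op}\|W(t)-W(0)\|_F$, with no sign-pattern analysis at all. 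The bootstrap is then closed not by a radius argument on $K_\theta$ but by showing that $\bar\calL(t)=\int_0^t\sqrt{\calL(\theta(s))}\,ds$ obeys an explicit one-dimensional differential inequality (Proposition~\ref{P:calL_solves_DI}) whose solutions remain bounded whenever a single scalar condition $F(\theta_0,X,Y)<1/8$ holds at initialization (Lemma~\ref{L:ODE_solutions}, Theorem~\ref{T:deterministic_convergence_guarantee}); this ODE step is the ``nonlinear analysis'' advertised in the introduction. The exponent $1.25$ then falls out of verifying $F<1/8$ WHP from the lower bound $\alpha_0(0)\gtrsim\sqrt{d_0d_1}\,\beta_w$ (Theorem~\ref{T:alpha_0_init}) and concentration for $\|X\|_{op},\|W_0\|_F,\|V_0\|_F,\calL(\theta_0)$ (Lemma~\ref{L:init_loss}): under LeCun scaling the dominant term of $F$ is $\tilde\calO(N^{2.5}/d_1^{2})$.

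By contrast, the activation-flip route you sketch is exactly the mechanism of \cite{du_provably,oymak,nguyen}, and those analyses give $N^2$ (or $N^4/d_0^3$) for ReLU. Your sentence ``the scaling $N^{1.25}$ should come from a sharper accounting of how many neurons are at risk of flipping'' is a promissory note, not an argument; absent a new idea for the flip count it will reproduce the existing $N^2$ barrier, not beat it.
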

The second main result establishes the global convergence for the mini-batch SGD iterates WHP.
\begin{theorem}[Informal Theorem~\ref{T:GD_main}]\label{T:2_informal}
	Let the width of the shallow NN satisfy 
	$d_1 = \tilde{\Omega}(N^{1.25})$.
	Then, for any error $\varepsilon > 0$ and any mini-batch size, the mini-batch SGD sequences with step size small enough achieve the loss value below $\varepsilon$ at a linear convergence rate WHP.
\end{theorem}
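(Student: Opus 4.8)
The plan is to transfer the continuous-time convergence guarantee of Theorem~\ref{T:1_informal} to the discrete mini-batch SGD iterates by a trajectory-comparison argument. First I would fix the overparametrization $d_1 = \tilde\Omega(N^{1.25})$ so that the event $E$ on which Theorem~\ref{T:1_informal} holds occurs WHP; all subsequent statements are made on this event. On $E$, the loss decays exponentially along every continuous solution of the differential inclusion $\dot\theta(t)\in-\partial\calL(\theta(t))$, $\theta(0)=\theta_0$, with rate $c d_1$. The key structural fact I expect to need — and which should be extractable from the analysis underlying Theorem~\ref{T:1_informal} — is that this decay comes with a uniform lower bound on the relevant Clarke subgradient norms (a Polyak–Łojasiewicz-type inequality $\|g\|^2\gtrsim d_1\calL(\theta)$ for $g\in\partial\calL(\theta)$) valid on a neighborhood of the initialization, together with a local Lipschitz/boundedness control of $\partial\calL$ on that neighborhood. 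These two ingredients confine every near-gradient trajectory to a bounded region and force geometric loss decrease.

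Second, I would set up the discrete iteration. For mini-batch size $b$ and step size $\eta$, the SGD update reads $\theta_{k+1} = \theta_k - \eta g_k$ where $g_k$ is a stochastic Clarke subgradient of the mini-batch loss, an unbiased estimate of an element of $\partial\calL(\theta_k)$. Using the PL inequality and the local Lipschitz bound, a standard descent-lemma computation gives, as long as the iterate stays in the good neighborhood, $\EE[\calL(\theta_{k+1})\mid\theta_k] \le (1-c'\eta d_1)\calL(\theta_k) + \eta^2 C \sigma^2$ for a variance proxy $\sigma^2$ depending on $b$ and the data; choosing $\eta$ small enough (scaling like $\varepsilon/(d_1\sigma^2)$, with logarithmic corrections) makes the linear term dominate until $\calL$ drops below $\varepsilon$. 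One then runs a standard induction/stopping-time argument: with high probability the iterates never leave the neighborhood (because the cumulative displacement $\eta\sum\|g_k\|$ is geometrically summable once the loss is geometrically decaying, and a martingale/Azuma bound controls the stochastic fluctuations of the path), so the one-step contraction applies at every step, yielding $\calL(\theta_k)\le\varepsilon$ after $O(\log(1/\varepsilon)/(\eta d_1))$ steps — i.e. a linear rate.

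Alternatively, and perhaps more in the spirit of the paper's stated technique, I would invoke an existing nonsmooth-analysis result (of the type alluded to in the introduction) that quantifies the closeness of SGD sample paths to solutions of the differential inclusion on compact time intervals, with error controlled by the step size: $\max_{k:\,k\eta\le T}\|\theta_k-\theta(k\eta)\|\le \rho(\eta,T)$ where $\rho(\eta,T)\to0$ as $\eta\to0$ for fixed $T$. Picking $T$ large enough that the continuous loss is below $\varepsilon/2$ (possible since the continuous decay rate is $cd_1$, so $T=\tilde O(1/d_1)$ suffices), and then $\eta$ small enough that $\rho(\eta,T)$ is small enough to guarantee $\calL(\theta_k)\le\calL(\theta(k\eta))+\varepsilon/2$ via local Lipschitzness of $\calL$, finishes the argument; the linear-rate claim follows because $T$ scales only logarithmically in $1/\varepsilon$.

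The main obstacle I anticipate is the uniformity of the local geometry under the stochastic perturbations: Theorem~\ref{T:1_informal} is a statement about exact gradient-flow trajectories, and one must ensure the PL inequality and Lipschitz bounds survive along the entire (random) SGD path, not merely in a fixed ball, before the loss has had time to contract. This requires either a self-improving induction (loss small $\Rightarrow$ subgradients small $\Rightarrow$ displacement small $\Rightarrow$ still in the good region $\Rightarrow$ loss contracts again), carefully combined with a high-probability martingale bound on the accumulated noise, or a sufficiently strong off-the-shelf DI-approximation theorem that already packages this stability. Handling the nonsmoothness of ReLU — so that "the" subgradient $g_k$ is well-defined and the chain-rule/descent estimates are valid almost everywhere along the path — is the technical subtlety threaded through all of this, and is presumably where the cited nonsmooth analysis results do the heavy lifting.
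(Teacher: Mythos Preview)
Your second approach --- invoke an off-the-shelf DI-approximation theorem to show that the interpolated SGD path is uniformly close to some DI solution on a finite horizon, then use the continuous exponential decay and local Lipschitzness of $\calL$ to conclude --- is exactly the paper's route (the black box is Theorem~\ref{T:Bianchi_etal}, due to \cite{bianchi2020convergence}, and the transfer is packaged as Theorem~\ref{P:abstract_convergence}). Your identification of the main obstacle is also accurate.

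There is one concrete gap. The approximation theorem you want to invoke has hypotheses that the raw mini-batch loss $\calL^b$ does \emph{not} satisfy: Theorem~\ref{T:Bianchi_etal} requires a linear growth bound $\EE_\xi\kappa(x,\xi)\le K(1+\|x\|)$ on a local Lipschitz constant, whereas for the MSE loss with ReLU the subgradient norm grows like a cubic polynomial in $\|\theta\|$. The paper's fix is not a self-improving induction but a localization: multiply each $\tilde{\calL}_i$ by a smooth cutoff $\zeta$ that equals $1$ on a $\Delta$-widening $G_\Delta$ of the trapping set $G$ and vanishes outside $G_{2\Delta}$, so that $f(\theta,A)=\tilde{\calL}^b(\theta,A)\zeta(\theta)$ is compactly supported, hence globally Lipschitz, and Theorem~\ref{T:Bianchi_etal} applies to $f$. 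One then argues that (i) the DI for $\EE f$ agrees with the DI for $\frac{b}{N}\tilde{\calL}$ inside $G_\Delta$, (ii) DI solutions started in $Q$ stay in $G\subset G_\Delta$, (iii) the $f$-SGD iterates are $\Delta$-close to such a DI solution and therefore stay in $G_\Delta$, so they coincide with the $\tilde{\calL}^b$-SGD iterates up to the relevant horizon. Without this cutoff step your ``sufficiently strong off-the-shelf theorem'' does not apply, and the alternative you mention (Azuma-type control of the stochastic path) is not how the paper closes the argument.

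Your first approach (PL inequality plus a one-step descent lemma with explicit variance terms and a quantitative step-size condition) is a genuinely different route. The paper does not attempt it; in fact Theorem~\ref{T:GD_main} is explicitly \emph{qualitative} in $\eta$ (only existence of some $\eta_0$, for a.e.\ $\eta<\eta_0$), precisely because the Bianchi et al.\ machinery gives no constructive rate on the approximation error $\rho(\eta,T)$. If your first approach could be made rigorous for ReLU (the descent-lemma step is delicate without a bounded Hessian, and the PL inequality must hold for every Clarke subgradient the SGD oracle might return), it would yield a stronger, quantitative statement than the paper proves.
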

We obtain Theorem~\ref{T:2_informal} via the following result. 
It is stated for general approximators  (including deep ReLU NN) and general losses (including hinge loss, cross-entropy etc.). We believe it is of independent interest. 
We drop the assumption on the MSE loss and particular NN, and use the notion of an arbitrary loss $\tilde{\calL}$.

\begin{theorem}(Informal Theorem~\ref{P:abstract_convergence})\label{T:3_informal}
	Let the loss function $\tilde{\calL}$ be arbitrary satisfying some mild technical conditions.
	Additionally, assume there exists a nonempty compact set $Q$, s.t. any solution $\theta$ to the DI $\dot{\theta}(t)\in-\partial\tilde{\calL}(\theta(t))$ if initialized in $Q$, remains in some compact set $G$ and converges to zero as $\tilde{\calL}(\theta(t))\le \tilde{\calL}(\theta(0))e^{-\gamma t}$.
	Then, for any $\varepsilon>0$, the SGD sequences initialized in $Q$ and with step size small enough achieve the loss value below $\varepsilon$ at a linear convergence rate WHP.
\end{theorem}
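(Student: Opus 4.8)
The plan is to derive the discrete statement from the continuous one via a quantitative ``ODE method'' for differential inclusions, carried out over a single finite time horizon whose length is dictated by the target accuracy~$\varepsilon$.

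\textbf{Reduction to a finite horizon.} Fix $\varepsilon>0$. Since $Q$ is compact and, by the mild conditions, $\tilde\calL$ is continuous and nonnegative, $M\bydef\sup_{x\in Q}\tilde\calL(x)<\infty$. Put $T\bydef\gamma^{-1}\log(2M/\varepsilon)$, so that the assumed exponential decay yields $\tilde\calL(\theta(t))\le Me^{-\gamma t}\le\varepsilon/2$ for every $t\ge T$ and every DI solution $\theta$ started in $Q$; moreover every such $\theta$ stays in the fixed compact set $G$. Fix a bounded open neighbourhood $G'\supseteq G$, and, using uniform continuity of $\tilde\calL$ on $\overline{G'}$, choose $\delta>0$ with $|\tilde\calL(x)-\tilde\calL(y)|\le\varepsilon/2$ whenever $x,y\in\overline{G'}$ and $\|x-y\|\le\delta$, and small enough that the $\delta$-neighbourhood of $G$ lies inside $G'$.

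\textbf{Closeness of SGD to a DI solution on $[0,T]$.} Let $\eta$ be the step size, and let $\bar\theta^{\eta}$ be the piecewise-linear interpolation of the SGD iterates $\theta_0\in Q,\theta_1,\theta_2,\dots$ with time reparametrized so that one step corresponds to one $\eta$-unit of time. I would then invoke the cited nonsmooth-analysis results on the convergence of SGD to solutions of $\dot\theta\in-\partial\tilde\calL(\theta)$: they apply here because the technical conditions ensure that $\partial\tilde\calL$ is nonempty, compact-valued and locally bounded, that the mini-batch stochastic subgradients are conditionally unbiased, square-integrable selections of $\partial\tilde\calL$, and that $\tilde\calL$ admits a chain rule along absolutely continuous curves. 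This gives: for the fixed $\delta,T$ there is $\bar\eta>0$ such that for all $\eta\le\bar\eta$, with probability tending to one, the interpolation remains in $G'$ on $[0,T]$ and there is a DI solution $\theta$ with $\theta(0)=\theta_0$ and $\sup_{t\in[0,T]}\|\bar\theta^{\eta}(t)-\theta(t)\|\le\delta$. The only point needing a separate short argument is the a priori confinement $\bar\theta^{\eta}([0,T])\subseteq G'$: I would run a stopping-time bootstrap, letting $\tau$ be the first exit time of the interpolation from $G'$; on $[0,\tau\wedge T]$ the per-step increments are $\calO(\eta)$ and the martingale part is $\calO(\sqrt{\eta})$ with high probability (maximal inequality over the $\lceil T/\eta\rceil$ steps), so the interpolation tracks a genuine DI solution, which by hypothesis never leaves $G\subsetneq G'$; hence $\tau>T$ with probability $1-o(1)$, closing the loop.

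\textbf{Transferring the rate.} On the event of the previous step, for every $t\in[0,T]$ both $\bar\theta^{\eta}(t)$ and its companion $\theta(t)$ lie in $\overline{G'}$ and are within $\delta$, so
\[
\tilde\calL\bigl(\bar\theta^{\eta}(t)\bigr)\ \le\ \tilde\calL(\theta(t))+\tfrac{\varepsilon}{2}\ \le\ Me^{-\gamma t}+\tfrac{\varepsilon}{2}.
\]
Evaluating at the grid times $t=k\eta$, $k=0,\dots,\lceil T/\eta\rceil$, gives $\tilde\calL(\theta_k)\le Me^{-\gamma\eta k}+\varepsilon/2$: the SGD loss decays geometrically with ratio $e^{-\gamma\eta}=1-\gamma\eta+\calO(\eta^{2})$ per step while it exceeds $\varepsilon/2$, hence it drops below $\varepsilon$ after $k\ge T/\eta=\gamma^{-1}\eta^{-1}\log(2M/\varepsilon)$ steps, with probability $1-o(1)$; since the failure probability in the previous step can be made arbitrarily small by shrinking $\eta$, this is exactly the claimed linear-rate convergence ``WHP''. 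I expect the main obstacle to be precisely the a priori confinement of the SGD trajectory to $G'$: the tracking estimate is only valid while the iterates stay near $G$, yet it is the tracking estimate that keeps them there, so the coupling must be broken by the stopping-time bootstrap above, with the margin $G\subsetneq G'$ supplying the slack and with the step size necessarily taken small as a function of $\varepsilon$ (through $T$) so that the accumulated noise over the horizon stays negligible.
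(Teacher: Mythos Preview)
Your high-level architecture matches the paper's: pick a finite horizon $T$ from the exponential decay, track the SGD interpolant by a DI solution on $[0,T]$, then transfer the loss bound via a Lipschitz/uniform continuity argument on a neighbourhood of $G$. The paper also singles out confinement of the SGD trajectory as the crux.

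The substantive difference is \emph{how} the confinement is handled so that the Bianchi et al.\ tracking theorem (the paper's Theorem~\ref{T:Bianchi_etal}) actually applies. That result needs a global linear-growth Lipschitz bound $\EE\kappa(x,\xi)\le K(1+\|x\|)$, which the raw losses $\tilde\calL_i$ need not satisfy (they are only locally Lipschitz). The paper fixes this by multiplying $\tilde\calL^b$ by a smooth cutoff $\zeta$ supported in a slight widening $G_{2\Delta}$ of $G$; the modified $f=\tilde\calL^b\cdot\zeta$ is compactly supported, hence globally Lipschitz, so the black-box theorem fires. It then defines auxiliary $f$-SGD iterates $(\chi_k^\eta)$ that agree with $(\theta_k^\eta)$ until first exit from $G_\Delta$ and argues: the tracked DI solution for $f$ coincides with the one for $\tilde\calL$ (both stay in $G$), so the interpolant stays in $G_\Delta$, so the truncation is never activated and $\chi_k^\eta=\theta_k^\eta$ throughout. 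Your stopping-time bootstrap is the same idea in spirit, but as written it tries to invoke the tracking theorem \emph{before} verifying its hypotheses; the cutoff trick is precisely what lets you call the theorem once, cleanly, without re-entering its proof.

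Two smaller points you gloss over. First, $\EE_{A_b}\tilde\calL^b(\cdot,A_b)=\tfrac{b}{N}\tilde\calL(\cdot)$, so the averaged DI is a time-rescaling of~\eqref{eq:DI_prop}; the paper therefore takes horizon $T=1+\tfrac{N}{b}T^\ast$ and the step count acquires the $N/b$ factor in~\eqref{eq:k_ast_SGD}. Second, the Bianchi et al.\ conclusion holds only for a.e.\ $\eta\in(0,\eta_0)$, not for every small $\eta$; the paper keeps this qualifier, and you should too.
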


Let us comment briefly on some key aspects of our results.

\subsection*{Overparametrization Bound Improvement.}
Theorem~\ref{T:2_informal} improves state-of-the-art overparameterization bounds for global SGD convergence for shallow ReLU NN -- in Table~\ref{tab:related_works} we compare it to the selected works that we find most related.
For instance, \cite{nguyen} require $d_1=\Omega(N^2)$. 
Similarly, \cite{oymak} require $d_1=\Omega(N^4/d_0^3)$ (which is better for $d_0$ in a small neighborhood of $N$), where they train the first weight matrix only and the second weights matrix remains fixed, cf.~Remarks~\ref{R:scaling} and~\ref{R:improvement} for a detailed discussion and Section~\ref{sec:num_exps} for numerical experiments comparing both setups. We also note that we have more general data assumptions than \cite{oymak}.

On the other hand, results from~\cite{kawaguchi_huang} and~\cite{belkin} require only linear overparametrization and work for more general data. 
However, they do not apply to ReLU as they rely heavily on the smoothness of the activation function.
In particular, analysis of non-smooth activation
functions seems to be a much more challenging task, see
e.g., a result showing the existence of spurious local minima
in the ReLU setting~\cite{safranshamir}.

\subsection*{Discrete vs Continuous Convergence.}
The idea of establishing a link between continuous solutions to the gradient flow and their discrete GD analogs for deep linear networks was introduced recently by \cite{cohen}. 
Their method require the Hessian to exist and to be bounded along the continuous trajectories. 
Such approach does not work when a nonsmooth activation function, e.g. ReLU, is employed -- we provide additional evidence supporting this claim in Section~\ref{sec:num_exps}.
Our approach of passing from continuous solutions to SGD sequences is more general because it works in the differential inclusions setting, which treats nondifferentiable objectives (in contrast to~\cite{cohen}).

\subsection*{SGD step size.}
One should keep in mind that Theorem~\ref{T:2_informal} is qualitative -- it does not provide a constructive condition for the step size to guarantee convergence.
However, existing quantitative results for ReLU NNs give to the best of our knowledge no better bound than $\calO(1/N^2)$, which is still far from the learning rates used in ML practice.
\begin{table*}[th!]
	\scriptsize
	\centering
	\caption{A Perspective on related work.  Reported results using notation $\widetilde{\Omega}$ hides logarithmic terms, $N$ is the number of train samples, $d_0$ is the input dimension, $L$ is the number of layers of deep NN}
	\begin{tabular}{lccccc}
	\toprule
	\textbf{Work} & \textbf{Algorithm} & \textbf{ReLU} & \textbf{Deep} & \textbf{Data} & \textbf{Scaling}\\
	\midrule
	\cite{dulee} & GD & no & yes & $\begin{array}{c}\text{non degenerate} \\ \text{normalized}\end{array}$ & $\tilde{\Omega}(2^{O(L)}N^4)$\\[2pt]
	\begin{tabular}{@{}l@{}}
	\cite{kawaguchi_huang}
	\end{tabular} 
		& GD & no & yes & normalized & $\begin{array}{c}\tilde{\Omega}(Nd_0)\,\text{(shallow)}\\\tilde{\Omega}(N+d_0L^2)\,\text{(deep)}\end{array}$\\[7pt]
	\cite{belkin} & SGD  & no & yes & $\begin{array}{c}\text{non degenerate} \\ \text{normalized}\end{array}$ & $\tilde{\Omega}(N)$\\[2pt]
	\midrule
	\cite{allenzhu} & SGD & yes & no & separable & $\tilde{\Omega}(N^{24}L^{12})$\\[2pt]
	\cite{arora} & GD & yes & yes & unif. on sphere & $\tilde{\Omega}(N^7)$\\[2pt]
	\cite{zougu} & SGD & yes & yes & separable & $\tilde{\Omega}(N^8L^{12})$\\[2pt]
	\begin{tabular}{@{}l@{}}
	\cite{oymak}
	\end{tabular} 
	& $\begin{array}{c}\text{SGD}\\\text{(on layer 1)}\end{array}$ & yes & no & unif. on sphere & $\tilde{\Omega}(N^4/d_0^3)$\\[7pt]
	\cite{nguyen} & GD & yes & yes & subgaussian & 
	$\begin{array}{c}\tilde{\Omega}(N^2)\text{ (shallow)}\\\tilde{\Omega}(N^3)\text{ (deep)}\end{array}$\\[7pt]
	\midrule
	\textbf{Ours} & SGD & yes & no & subgaussian & $\tilde{\Omega}(N^{1.25})$\\
	\bottomrule
	\end{tabular}
	\label{tab:related_works}
\end{table*}

\subsection{Other Related Work.}
We summarize the current literature concerning the question of SGD global convergence for NN equipped with the MSE loss in Table~\ref{tab:related_works}. We split the results into two groups, first the ones working for smooth activations and second, the results for ReLU activation function, also considered in this work.
Similar and, in some cases, tighter overparametrization results have been established for training deep NN equipped with cross-entropy loss \cite{li_liang,Ji2020Polylogarithmic,chen2021how}.
All existing results are derived under the assumption that there is a significant overparametrization of the NN under study (at least one wide hidden layer). Earlier work focused on the non-existence of spurious local minima without consideration of SGD dynamics \cite{pmlr-v54-xie17a}. The extreme case of overparametrization, i.e., infinite layer width, has also been analyzed in \cite{opttransport_infinite,ntk_infinite,pdeconv}.

One can also find negative results in the literature, demonstrating, e.g., the existence of spurious local minima in underparameterized regimes,~\cite{loc_min}, or convergence towards spurious local minima,~\cite{brutzkus2018sgd}.
As for other fundamental properties, nonlinear NN are universal approximators~\cite{cybenko,shaham}. 
NN memorization property has also been extensively studied -- in the case of shallow NN, known overparametrization bounds for perfect memorization of the data are near-optimal \cite{zhang2016understanding,moritz_hardt,Nguyen_express,baldi,Yun_Sra,bubeck}.

\subsection{Organization of this paper}
In Section~\ref{sec:prelim} we introduce the notation and recall some facts regarding differential inclusions.
In Section~\ref{sec:di_dynamics} we study the properties of the DI solutions for MSE loss.
In Section~\ref{sec:di_globconv} we prove the global convergence result for DI solutions under random initialization.
In Section~\ref{sec:gradient} we extend the result of Section~\ref{sec:di_globconv} to SGD iterates.
In Section~\ref{sec:num_exps} we present some numerical experiments related to our results.
We summarize our findings in Section~\ref{sec:conclusion}.

\section{Preliminaries}
\label{sec:prelim}
Let $X\in \RR^{N\times d_0}$ be a matrix of the training inputs (arranged rowwise) and $Y\in\RR^{N\times d_2}$ be a matrix of training labels, where $N\in \NN_+ \bydef 1,2\ldots$ is the sample size and $d_0,d_2\in\NN_+$ are the dimensions of the input and output respectively. Consider the following one hidden-layer feed-forward NN
\begin{equation*}
\hat{Y} \bydef \phi(XW)V,
\end{equation*}
where for some $d_1\in\NN_+$, $W\in \RR^{d_0\times d_1}$ and $V\in \RR^{d_1\times d_2}$ are the weight matrices and $\phi\colon \RR\to\RR$ is the ReLU activation function applied element-wise.
We often assume that $X,Y$ are fixed and known from context, whence they are not explicitly mentioned as parameters, e.g., in the loss function formula.
We denote the hidden layer by $H$, i.e., $H\bydef \phi(XW)\in \RR^{N\times d_1}$. 
We write $D \bydef d_0d_1 + d_1d_2$ and denote parameter vector by $\theta \in \RR^D$, i.e., $\theta$ is obtained by stacking vectorized matrices $W,V$. 
We identify matrices with their vectorized forms and write simply $\theta = (W,V)$.
 
The standard dot product and Euclidean distance on $\RR^d$ for $d\in\NN_+$ are denoted by $\langle\cdot,\cdot\rangle$ and $\|\cdot\|$.
For $x\in\RR^{d}$ and $r>0$, $B(x,r)\bydef \{\, y\in\RR_{d}\colon \|y-x\|\le r \,\}$ is the closed ball with radius $r$ centered at $x$.
For a matrix $A\in \RR^{n_r\times n_c}$, $A_{i:}$ denotes the $i$-th row vector of $A$ for $i\in [n_r]$, and $A_{:i}$ denotes the $i$-th column vector of $A$ for $i\in [n_c]$, where $[k]\bydef \{1,\ldots,k\}$ for $k\in\NN_+$.
Finally, we denote the minimal eigen- and singular values of $A$ by $\lambda_{min}(A)$ and $\sigma_{min}(A)$, i.e., $\sigma_{min}(A) = \sqrt{\lambda_{min}(A^TA)}$, while the operator and Frobenius norms of $A$ are denoted by $\|A\|_{op}$ and $\|A\|_{F}$.

Our aim is to optimize the MSE loss function $\calL\colon\RR^{D}\to\RR_{+}$, defined via $
\calL(\theta) \bydef \frac{1}{2}\| Y - \hat{Y} \|_F^2$.
The widely applied ReLU activation function is non-differentiable at $x=0$ but the generalized derivative in the sense of Clarke, cf.~\cite{clarke1983oan}, exists and is equal to the interval $[0,1]$. 
We denote the Clarke subdifferential by $\partial$ and refer the reader to~\cite{rockafellar2009variational} for a detailed treatment of generalized gradients.

Recall that a curve\footnote{We use the same symbols to denote points and curves.} $x\colon \RR_+\to \RR^d$ is absolutely continuous if there exists a map $v\colon \RR_+ \to \RR^d$ that is integrable on compact intervals and s.t. $x(t) - x(0) = \int_0^t v(s)\,ds$ for all $t\ge 0$.
To lighten the notation we sometimes write $\frac{d}{dt}x(t) = \dot{x}(t)$ and call any absolutely continuous curve an arc.
We are interested in finding arcs $x$ that are solutions to the following differential inclusion Cauchy problem
\begin{equation}\label{eq:DI_generic}
x(0) = x_0,\quad 
\dot{x}(t) \in -\partial f(x(t))
\;
\text{ for a.e. }
t\ge 0,
\end{equation}
where $x_0 \in \RR^d$ and $f\colon \RR^d\to\RR$ are given.
The following property plays a crucial role in analyzing such problems -- we say that $f$ satisfies the \textit{chain rule} if for any arc $x\colon\RR_+\to\RR^d$,
\begin{equation}\label{eq:chain_rule}
\big\{\,
\langle v,\, \dot{x}(t) \rangle
\colon
v\in \partial f( x(t) )
\,\big\}
=
\big\{\, 
\frac{d}{dt}(f\circ x)(t)
\,\big\}
\quad 
\text{for a.e. } t\ge 0.
\end{equation}

Consider the dynamics given by the following DI obtained from~\eqref{eq:DI_generic} by taking $f=\calL$,
\begin{equation}\label{eq:dynamics}
\theta(0) = \theta_0,\quad
\dot{\theta}(t) 
\in - \partial \calL(\theta(t))
\;
\text{ for a.e. }
t\ge 0,
\end{equation}
where $\theta_0\in\RR^{D}$ is some initial value.
Note that a-priori it is unknown if there exists a solution to~\eqref{eq:dynamics} defined on the whole interval $[0,\infty)$.
Recall the notation $H=\phi(XW)$.
The following standard result is due to the fact that $\calL$ satisfies the chain rule~\eqref{eq:chain_rule}, cf.~\cite{davis2020}, combined with usual arguments regarding DIs, subdifferential of $\calL$ and Gr\"{o}nwall's lemma. 
Since we were unable to find such statement that rigorously treats its existential component connected to the theory of DIs, we provide its detailed proof in Appendix~\ref{A:pf_existence}.
\begin{proposition}\label{P:DI_existence}
	For any initial $\theta_0\in \RR^{D}$, there exists $T>0$ and a solution $\theta\colon [0,T)\to\RR^D$ to the DI~\eqref{eq:dynamics}. 
	Moreover, for any bounded domain $G\ni \theta_0$, each solution $\theta$ to~\eqref{eq:dynamics} can be extended to infinity or up until it hits the boundary of $G$.
	Finally, for any such $\theta$, denoting $\alpha_0(s) \bydef \sigma_{min}(H^T(\theta(s)))$, one gets
	$$
		\calL(\theta(t)) \le 
		\calL(\theta(0))
		\exp
		\big(
		-2\int_0^t \alpha_0^2(s)\,ds
		\big)\quad\text{for every $t\in[0,T)$.}
	$$ 	
\end{proposition}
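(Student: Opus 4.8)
\emph{Proof plan.} The plan is to establish the three assertions — local existence of a solution, its maximal extension inside a bounded domain, and the exponential loss bound — separately, using classical differential-inclusion theory for a locally Lipschitz right-hand side together with the chain rule~\eqref{eq:chain_rule} available for $\calL$. For \emph{existence}, I would first record that $\calL$ is locally Lipschitz on $\RR^D$ (a composition of the $1$-Lipschitz ReLU with polynomial maps) and semialgebraic, so that its Clarke subdifferential $\partial\calL$ is a nonempty, convex- and compact-valued, locally bounded, upper semicontinuous set-valued map (cf.~\cite{clarke1983oan,rockafellar2009variational}). The Cauchy problem~\eqref{eq:dynamics} then fits the hypotheses of the classical existence theorems for differential inclusions with upper semicontinuous convex-valued right-hand side, which yield an absolutely continuous arc $\theta\colon[0,T)\to\RR^D$ solving~\eqref{eq:dynamics} for some $T>0$.

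For the \emph{extension} claim, fix a bounded domain $G\ni\theta_0$ and take a solution with values in $G$ defined on a maximal interval $[0,T^\ast)$ (the existence of a maximal one follows from a Zorn-type argument). Local boundedness of $\partial\calL$ and compactness of $\overline{G}$ give $M<\infty$ with $\|v\|\le M$ for all $v\in\partial\calL(x)$, $x\in\overline{G}$, hence $\|\dot\theta(t)\|\le M$ for a.e.\ $t$; thus $\theta$ is $M$-Lipschitz and extends continuously to $T^\ast$ with $\theta(T^\ast)\in\overline{G}$. If $T^\ast<\infty$ and $\theta(T^\ast)$ were interior to $G$, re-applying the local existence result at $\theta(T^\ast)$ would extend $\theta$ beyond $T^\ast$, contradicting maximality; so either $T^\ast=\infty$ or $\theta$ reaches $\partial G$.

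For the \emph{decay estimate}, I would invoke that $\calL$, being semialgebraic and locally Lipschitz, satisfies the chain rule~\eqref{eq:chain_rule} (cf.~\cite{davis2020}). Along a solution this gives, for a.e.\ $t$, $\frac{d}{dt}\calL(\theta(t))=\langle v,\dot\theta(t)\rangle$ for every $v\in\partial\calL(\theta(t))$; choosing $v=-\dot\theta(t)\in\partial\calL(\theta(t))$ gives $\frac{d}{dt}\calL(\theta(t))=-\|\dot\theta(t)\|^2$. The next step is a lower bound on $\|\dot\theta(t)\|$: writing $\theta=(W,V)$, $H=\phi(XW)$, $R\bydef\hat{Y}-Y$, the map $V\mapsto\calL(W,V)$ is a polynomial with $\nabla_V\calL=H^T R$, and $\nabla_V\calL$ is jointly continuous in $(W,V)$, so Clarke's limiting-gradient description of $\partial\calL$ forces the $V$-block of every $g\in\partial\calL(\theta)$ to equal $H^T R$. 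Hence for any $g\in\partial\calL(\theta)$,
\[
\|g\|^2 \ \ge\ \|H^T R\|_F^2 \ =\ \sum_{j}\,(R_{:j})^T H H^T R_{:j} \ \ge\ \lambda_{min}(HH^T)\,\|R\|_F^2 ,
\]
and since $\|R\|_F^2=2\calL(\theta)$ and $\lambda_{min}(HH^T)=\sigma_{min}(H^T)^2$, applying this with $g=-\dot\theta(t)$ (so $H=H(\theta(t))$ and $\sigma_{min}(H^T)=\alpha_0(t)$) yields $\frac{d}{dt}\calL(\theta(t))=-\|\dot\theta(t)\|^2\le -2\alpha_0^2(t)\,\calL(\theta(t))$ for a.e.\ $t$; Grönwall's lemma (integrating factor $\exp(2\int_0^t\alpha_0^2(s)\,ds)$) then produces the stated bound.

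I expect the main obstacle to be assembling the differential-inclusion existence-and-continuation part rigorously — the measurable selection, upper semicontinuity, and maximal-arc bookkeeping — which is precisely the point the authors flag as hard to cite off the shelf; a secondary subtlety is the structural fact that the $V$-block of $\partial\calL$ collapses to the classical partial gradient $H^T R$, which I would isolate as a small lemma resting on joint continuity of $\nabla_V\calL$ and Clarke's characterization of $\partial\calL$ via limits of gradients. The Grönwall step and the singular-value estimate are routine.
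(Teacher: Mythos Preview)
Your proposal is correct and follows essentially the same route as the paper's proof in Appendix~\ref{A:pf_existence}: existence and continuation via classical DI theory for locally Lipschitz maps (the paper cites Filippov), the chain rule~\eqref{eq:chain_rule} from~\cite{davis2020} to obtain $\frac{d}{dt}\calL(\theta(t))=-\|\dot\theta(t)\|^2$, the observation that the $V$-block of every element of $\partial\calL$ is the single point $H^T(\hat{Y}-Y)$ (the paper's Lemma~\ref{L:derivatives}), and then the singular-value lower bound plus Gr\"onwall. The only cosmetic difference is that the paper routes the chain-rule step through the Davis et al.\ lemma asserting $\|\dot\theta(t)\|=\min\{\|v\|\colon v\in\partial\calL(\theta(t))\}$, whereas you shortcut this by directly taking $v=-\dot\theta(t)$; both yield the same differential inequality.
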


\section{Dynamics of the Differential Inclusion}
\label{sec:dynamicsofdi}
\label{sec:di_dynamics}
In this section, we show that the integral of the loss (square root) along the parameter $\theta$ trajectories determined by the DI~\eqref{eq:dynamics} satisfies a simple one-dimensional differential inequality.
From that we infer boundedness properties of the loss along trajectories.
The constants appearing in the differential inequality depend on the initialization properties only which allows us to provide WHP estimates in Section~\ref{sec:di_globconv}. 

Recall the notation $H = \phi(XW)$ and $\alpha_0(s)=\sigma_{min}(H^T(\theta(s)))$.
By Weyl's inequality, cf., e.g.,~\cite[Theorem~4]{dax2013eigenvalues}, and Lemma~\ref{L:mtx_product_norm_inequality},
\begin{align}\label{eq:Weyl}
\begin{split}
\vert \alpha_0(t) - \alpha_0(0) \vert 
\le 
\| H(t) - H(0) \|_{F}
\le 
\| X(W(t) - W(0)) \|_{F}
\le 
\| X \|_{op} \| W(t) - W(0) \|_{F}.
\end{split}
\end{align}
Therefore, to use Proposition~\ref{P:DI_existence}, in lemma below we bound the quantity $\|X\|_{op}\| W(t) - W(0) \|_{F}$.
We defer its proof, which is based on a careful application of Gr\"{o}nwall's lemma, to Appendix~\ref{A:pf_W_increments}.
\begin{lemma}\label{L:W_increments_bound}
	Any solution $\theta\colon [0,T)\to\RR$, $\theta = (W,V)$, to the DI~\eqref{eq:dynamics} satisfies
	\begin{equation}\label{eq:theta_increment_bound_result}
	\| \theta(t) - \theta(0) \|
	\le 
	\sqrt{2}\|X\|_{op} 
	\big(\|W(0)\|_{F} + \|V(0)\|_{F}\big)	
	\bar{\calL}(t)
	\exp\big(
	\sqrt{2}\|X\|_{op} \bar{\calL}(t)
	\big)
	\end{equation}
	for every $t\in[0,T)$, where $\bar{\calL}(t) = \int_0^t\sqrt{\calL(\theta(s))}\,ds$.
	Moreover
	\begin{equation}\label{eq:W_increment_bound_result}
	\|X\|_{op} \| W(t) - W(0) \|_{F}
	\le 
	\frac{1}{2}
	\Big(
	c_1 \bar{\calL}(t) 
	+
	c_2 \big( \bar{\calL}(t) \big)^2
	\Big)
	\exp\Big(
	c\big(\bar{\calL}(t) \big)^{2}
	\Big)
	\end{equation} 
	for every $t\in[0,T)$, where
	\begin{equation}\label{eq:c1_c2_c_def}
	c_1 \bydef 2\sqrt{2}\|X\|_{op}^2\|V(0)\|_F,\quad 
	c_2 \bydef 2\|X\|_{op}^{3}\| W(0) \|_{F},\quad 
	c \bydef \|X\|_{op}^2.
	\end{equation}
\end{lemma}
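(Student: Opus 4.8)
The plan is to bound both quantities by first controlling the speeds $\|\dot W(t)\|_F$ and $\|\dot V(t)\|_F$ along an arbitrary solution $\theta=(W,V)$ of~\eqref{eq:dynamics}, and then feeding the resulting coupled integral inequalities into Grönwall's lemma. Write $R(t)\bydef\phi(XW(t))V(t)-Y$ for the residual, so that $\calL(\theta(t))=\tfrac12\|R(t)\|_F^2$ and $\sqrt{\calL(\theta(t))}=\tfrac{1}{\sqrt2}\|R(t)\|_F$. Since $\calL$ is smooth in $V$ and piecewise smooth in $W$, every element of $\partial\calL(\theta)$ has the form $\big(X^\top(S\odot(RV^\top)),\,H^\top R\big)$ with $H=\phi(XW)$ and $S$ a matrix whose entries lie in $[0,1]$ (and equal $\phi'$ off the kink locus). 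Using $|\phi(x)|\le|x|$, the submultiplicativity $\|AB\|_F\le\|A\|_{op}\|B\|_F$ from Lemma~\ref{L:mtx_product_norm_inequality}, and the fact that a Hadamard product with a $[0,1]$-valued matrix is Frobenius-norm nonincreasing, one gets for a.e.~$t\in[0,T)$
\[
\|\dot W(t)\|_F\le\sqrt2\,\|X\|_{op}\,\|V(t)\|_F\,\sqrt{\calL(\theta(t))},\qquad
\|\dot V(t)\|_F\le\sqrt2\,\|X\|_{op}\,\|W(t)\|_F\,\sqrt{\calL(\theta(t))},
\]
where the second bound also uses $\|H\|_{op}\le\|H\|_F\le\|XW\|_F\le\|X\|_{op}\|W\|_F$.

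Introduce the nondecreasing functions $u(t)\bydef\int_0^t\|\dot W(s)\|_F\,ds$ and $w(t)\bydef\int_0^t\|\dot V(s)\|_F\,ds$, which dominate $\|W(t)-W(0)\|_F$ and $\|V(t)-V(0)\|_F$; hence $\|\theta(t)-\theta(0)\|\le u(t)+w(t)$. Bounding $\|V(s)\|_F\le\|V(0)\|_F+w(s)$ and $\|W(s)\|_F\le\|W(0)\|_F+u(s)$ in the speed estimates and summing, $z\bydef u+w$ satisfies
\[
z(t)\le\sqrt2\,\|X\|_{op}\int_0^t\big(\|W(0)\|_F+\|V(0)\|_F+z(s)\big)\sqrt{\calL(\theta(s))}\,ds .
\]
Grönwall's lemma in integral form, applied with respect to the finite measure $d\mu(s)=\sqrt{\calL(\theta(s))}\,ds$ of total mass $\bar\calL(t)$, gives $\|W(0)\|_F+\|V(0)\|_F+z(t)\le\big(\|W(0)\|_F+\|V(0)\|_F\big)\exp\!\big(\sqrt2\|X\|_{op}\bar\calL(t)\big)$, and the elementary bound $e^x-1\le xe^x$ turns this into~\eqref{eq:theta_increment_bound_result}.

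For~\eqref{eq:W_increment_bound_result} the point is to iterate the speed estimate once before invoking Grönwall, which replaces the clock $\bar\calL$ by $\bar\calL^2$ in the exponent. Substituting $\|V(s)\|_F\le\|V(0)\|_F+w(s)$, then $w(s)\le\sqrt2\|X\|_{op}\int_0^s(\|W(0)\|_F+u(r))\sqrt{\calL(\theta(r))}\,dr\le\sqrt2\|X\|_{op}(\|W(0)\|_F+u(s))\bar\calL(s)$ (using monotonicity of $u$), and the identity $\bar\calL(s)\sqrt{\calL(\theta(s))}\,ds=\tfrac12\,d\!\big(\bar\calL(s)^2\big)$, one arrives at
\[
u(t)\le\sqrt2\,\|X\|_{op}\|V(0)\|_F\,\bar\calL(t)+\|X\|_{op}^2\int_0^t\big(\|W(0)\|_F+u(s)\big)\,d\!\big(\bar\calL(s)^2\big) .
\]
Grönwall's lemma with respect to $d(\bar\calL^2)$, whose total mass on $[0,t]$ is $\bar\calL(t)^2$, bounds $\|W(0)\|_F+u(t)$ by $\big(\|W(0)\|_F+\sqrt2\|X\|_{op}\|V(0)\|_F\bar\calL(t)\big)\exp\!\big(\|X\|_{op}^2\bar\calL(t)^2\big)$; multiplying through by $\|X\|_{op}$, subtracting $\|X\|_{op}\|W(0)\|_F$, and using $e^x-1\le xe^x$ once more gives precisely the right-hand side of~\eqref{eq:W_increment_bound_result} with $c_1,c_2,c$ as in~\eqref{eq:c1_c2_c_def}.

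I expect the main obstacle to be the bookkeeping in the last step: one must arrange the iterated integral inequality so that $\bar\calL$ enters quadratically — this is what later lets $\bar\calL$ be absorbed into a small multiplicative error — and then track the constants carefully enough to land exactly on $c_1,c_2,c$. A lesser point is the first step, where the set-valuedness of $\partial\calL$ and the nonsmoothness of $\phi$ at $0$ must be addressed; but since every selection obeys the same norm bound, this is harmless. The two Grönwall applications themselves are routine once phrased with respect to Stieltjes measures.
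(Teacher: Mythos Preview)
Your proposal is correct and follows essentially the same route as the paper: the same subdifferential-based speed bounds $\|\dot W\|_F\le\sqrt2\|X\|_{op}\|V\|_F\sqrt{\calL}$ and $\|\dot V\|_F\le\sqrt2\|X\|_{op}\|W\|_F\sqrt{\calL}$, the same sum-then-Gr\"onwall for~\eqref{eq:theta_increment_bound_result}, and the same ``substitute the $V$-increment into the $W$-increment and recognize $\bar\calL(s)\sqrt{\calL}\,ds=\tfrac12\,d(\bar\calL^2)$'' for~\eqref{eq:W_increment_bound_result}. The only difference is cosmetic: in the second part the paper divides by $\bar\calL(t)$ (working with $\Delta(t)=\|W(t)-W(0)\|_F/\bar\calL(t)$) before applying Gr\"onwall and then multiplies back, whereas you add $\|W(0)\|_F$, apply Gr\"onwall, and clean up with $e^x-1\le xe^x$; both land exactly on the constants $c_1,c_2,c$.
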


Using Lemma ~\ref{L:W_increments_bound} and Proposition~\ref{P:DI_existence} we infer in the proposition below that loss trajectories along solutions to the DI~\eqref{eq:dynamics} obey some specific differential inequality.
This observation is crucial for obtaining the main results of this paper, i.e., Corollary~\ref{C:g_flow_convergence} and Theorem~\ref{T:GD_main}.
\begin{proposition}\label{P:calL_solves_DI}
	Let $c,c_1,c_2$ be as in Lemma~\ref{L:W_increments_bound},~\eqref{eq:c1_c2_c_def}.
	Set 
	\begin{equation}\label{eq:a_alpha_def}
		a \bydef \sqrt{\calL(\theta(0))}, 
		\quad 
		\alpha \bydef \sigma_{min}(H^T(\theta(0))).
	\end{equation}
	If for some $T>0$, $\theta\colon[0,T)\to\RR^{D}$ solves the DI~\eqref{eq:dynamics}, then $\bar{\calL}(t) \bydef \int_0^t\sqrt{\calL(\theta(s)}\,ds$ is a solution $y\colon[0,T)\to\RR$ to the problem
	\begin{align}\label{eq:ODE_problem}
	y(0) = 0;
	\quad 
	y'(t) \le 
	a \exp
	\big(
	\alpha t 
	(
	c_1y(t)
	+
	c_2y^2(t)
	)
	e^{cy^2(t)}
	-\alpha^2 t
	\big)
	\;
	\text{ for all }
	\;
	t\in [0,T).
	\end{align}
\end{proposition}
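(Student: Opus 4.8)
The plan is to identify $y$ with $\bar\calL$, read off $y'$, bound it via the loss-decay estimate of Proposition~\ref{P:DI_existence}, and then control the exponent appearing there by means of Weyl's inequality~\eqref{eq:Weyl} together with the increment bound~\eqref{eq:W_increment_bound_result} of Lemma~\ref{L:W_increments_bound}. The initial condition $y(0)=\bar\calL(0)=0$ is immediate. Since $\theta$ is an arc it is continuous and $\calL$ is continuous, so $s\mapsto\sqrt{\calL(\theta(s))}$ is continuous; hence $y$ is $C^1$ on $[0,T)$ with $y'(t)=\sqrt{\calL(\theta(t))}\ge 0$, and in particular $y$ is nondecreasing.

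First I would invoke Proposition~\ref{P:DI_existence}: writing $\alpha_0(s)=\sigma_{min}(H^T(\theta(s)))$ and recalling $a=\sqrt{\calL(\theta(0))}$ and $\alpha=\alpha_0(0)$ from~\eqref{eq:a_alpha_def}, it gives $\calL(\theta(t))\le\calL(\theta(0))\exp(-2\int_0^t\alpha_0^2(s)\,ds)$, so that
\[
y'(t)=\sqrt{\calL(\theta(t))}\ \le\ a\exp\Big(-\int_0^t\alpha_0^2(s)\,ds\Big),\qquad t\in[0,T).
\]
It therefore remains only to lower-bound $\int_0^t\alpha_0^2(s)\,ds$.

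Next I would combine~\eqref{eq:Weyl} with~\eqref{eq:W_increment_bound_result} (and use $\bar\calL(s)=y(s)$) to obtain, for every $s\in[0,T)$,
\[
\alpha_0(s)\ \ge\ \alpha-\|X\|_{op}\|W(s)-W(0)\|_F\ \ge\ \alpha-\tfrac{1}{2}\big(c_1 y(s)+c_2 y^2(s)\big)e^{cy^2(s)}.
\]
Denoting the right-most expression by $\alpha-\beta(s)$ with $\beta(s)\ge 0$, a short case distinction yields $\alpha_0^2(s)\ge\alpha^2-2\alpha\beta(s)$ for every $s$: when $\beta(s)\le\alpha$ this is $(\alpha-\beta(s))^2\ge\alpha^2-2\alpha\beta(s)$ after discarding $\beta^2(s)\ge 0$, and when $\beta(s)>\alpha$ the right-hand side is negative while $\alpha_0^2(s)\ge 0$.

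Finally I would integrate this pointwise estimate and exploit monotonicity. Since $y$ is nondecreasing and $u\mapsto(c_1u+c_2u^2)e^{cu^2}$ is nondecreasing on $[0,\infty)$ (a product of nonnegative nondecreasing functions), for $0\le s\le t$ one has $2\beta(s)=(c_1 y(s)+c_2 y^2(s))e^{cy^2(s)}\le(c_1 y(t)+c_2 y^2(t))e^{cy^2(t)}$, hence
\[
\int_0^t\alpha_0^2(s)\,ds\ \ge\ \alpha^2 t-\alpha t\big(c_1 y(t)+c_2 y^2(t)\big)e^{cy^2(t)},
\]
and substituting this into the bound on $y'(t)$ above produces exactly the inequality in~\eqref{eq:ODE_problem}. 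The only step that needs a little care is the sign case-analysis passing from the linear lower bound on $\alpha_0(s)$ to the quadratic one on $\alpha_0^2(s)$; everything else is routine calculus and monotonicity, and the substance of the statement is supplied entirely by Proposition~\ref{P:DI_existence} and Lemma~\ref{L:W_increments_bound}, so I anticipate no genuine obstacle.
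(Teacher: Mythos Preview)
Your proof is correct and follows essentially the same route as the paper. The only cosmetic difference is that where you perform an explicit case split to pass from $\alpha_0(s)\ge\alpha-\beta(s)$ to $\alpha_0^2(s)\ge\alpha^2-2\alpha\beta(s)$, the paper invokes the elementary inequality $(u-v)^2\ge u^2-2u|v|$ (valid for $u\ge 0$) with $u=\alpha$ and $v=\alpha-\alpha_0(s)$ before applying Weyl and Lemma~\ref{L:W_increments_bound}; the two devices are equivalent, and the remaining steps (Proposition~\ref{P:DI_existence}, monotonicity of $\bar\calL$, bounding $\bar\calL(s)\le\bar\calL(t)$) coincide.
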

\begin{proof} 
Using Proposition~\ref{P:DI_existence}, 
the inequality $(u-v)^2\ge u^2 - 2u\vert v\vert$ for $u\ge 0$, $v\in\RR$, 
and the estimate from~\eqref{eq:Weyl},
we get for all $t\in [0,T)$,
\begin{align*}
\begin{split}
	\sqrt{\calL(\theta(t)}
	&\le 
	\sqrt{\calL(\theta(0))}
	\cdot 
	\exp\Big(
	-\int_0^t\alpha_0^2(s)\,ds
	\Big)
	\\&\le 
	\sqrt{\calL(\theta(0))}
	\cdot 
	\exp\Big(
	-t\alpha_0^2(0) +
	2\alpha_0(0)
	\int_0^t\vert \alpha_0(s) - \alpha_0(0) \vert \,ds
	\Big)
	\\&\le 
	\sqrt{\calL(\theta(0))}
	\cdot 
	\exp\Big(
	-t\alpha_0^2(0)
	+2\alpha_0(0)
	\int_0^t \|X\|_{op}\|W(s)-W(0)\|_F\,ds
	\Big).
\end{split}
\end{align*}
Using the bound from~\eqref{eq:W_increment_bound_result} due to Lemma~\ref{L:W_increments_bound} and noting that $\bar{\calL}'(t)=\sqrt{\calL(\theta(t))}$, we arrive at 
\begin{align*}
\begin{split}
	\bar{\calL}'(t)
	&\le 
	a
	\cdot 
	\exp\Big(
	-t\alpha^2
	+2\alpha
	\int_0^t \|X\|_{op}\|W(s)-W(0)\|_F\,ds
	\Big)
	\\&\le 
	a\cdot
	\exp\Big(
	-t\alpha^2
	+\alpha
	\int_0^t
	\big(
	c_1 \bar{\calL}(s) 
	+
	c_2 \big( \bar{\calL}(s) \big)^2
	\big)
	\exp\big(
	c\big(\bar{\calL}(s) \big)^{2}
	\big)
	\,ds
	\Big)
\end{split}
\end{align*}
and the conclusion follows by estimating $\bar{\calL}(s)\le \bar{\calL}(t)$ for all $s\in[0,t]$.
\end{proof}
Perhaps surprisingly, due to the double exponential dependence on $y^2(t)$, a simple condition involving $a,c,c_1,c_2,\alpha$ determines that solutions to~\eqref{eq:ODE_problem} remain bounded by $2a/\alpha^2$ for all times, as demonstrated in Lemma~\ref{L:ODE_solutions} below. 
This property is illustrated in Figure~\ref{fig:ode}.
\begin{lemma}\label{L:ODE_solutions}
	Let $a,\alpha,c,c_1,c_2$ be some arbitrary parameters of~\eqref{eq:ODE_problem}. 
	If $\alpha>0$ and
	\begin{equation}\label{eq:ODI_condition}
	4
	\Big(
	\frac{ac_1}{\alpha^3}
	+
	\frac{2a^2c_2}{\alpha^5}
	\Big)
	\exp\big(
	4ca^2/\alpha^4
	\big)
	<
	1,
	\end{equation}
	then for any $T>0$, any solution $y\colon[0,T)\to\RR$ to the problem from~\eqref{eq:ODE_problem} is bounded from above by $2a/\alpha^2$ and its derivative at any time $t\in [0,T)$ is bounded by $ae^{-\alpha^2t/2}$.
\end{lemma}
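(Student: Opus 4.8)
The plan is a standard continuity (bootstrap) argument, whose only genuine content is the observation that hypothesis~\eqref{eq:ODI_condition} is precisely the statement that the ``drift'' appearing in the exponent of~\eqref{eq:ODE_problem} is strictly negative as long as $y$ stays in the band $[0,2a/\alpha^2]$. So first I would rewrite~\eqref{eq:ODI_condition} in transparent form. Put $g(u)\bydef(c_1u+c_2u^2)e^{cu^2}$; since $c,c_1,c_2\ge0$ (the regime of~\eqref{eq:c1_c2_c_def}, which is all we need), $g$ is nondecreasing on $[0,\infty)$, so $\max_{u\in[0,2a/\alpha^2]}g(u)=g(2a/\alpha^2)=\big(\tfrac{2ac_1}{\alpha^2}+\tfrac{4a^2c_2}{\alpha^4}\big)e^{4ca^2/\alpha^4}$, which is exactly $\tfrac{\alpha}{2}$ times the left-hand side of~\eqref{eq:ODI_condition}. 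Hence~\eqref{eq:ODI_condition} is equivalent to $\max_{0\le u\le 2a/\alpha^2}g(u)<\alpha/2$.

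With this, the key local estimate is immediate: whenever $0\le y(t)\le 2a/\alpha^2$, the exponent on the right of~\eqref{eq:ODE_problem} is $\alpha t\,g(y(t))-\alpha^2 t=\alpha t\big(g(y(t))-\alpha\big)\le\alpha t\big(\tfrac{\alpha}{2}-\alpha\big)=-\tfrac{\alpha^2 t}{2}$, so $y'(t)\le a e^{-\alpha^2 t/2}$; and $\int_0^\infty a e^{-\alpha^2 s/2}\,ds=2a/\alpha^2$.

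Then comes the bootstrap. I would dispose of the trivial case $a=0$ first (the right side of~\eqref{eq:ODE_problem} vanishes, so $y'\le0$ and $y\le y(0)=0$). Otherwise, using that $y$ is continuous and nonnegative (it is $\bar{\calL}$ in the application of Proposition~\ref{P:calL_solves_DI}), set $\tau\bydef\sup\{t\in[0,T):y(s)\le 2a/\alpha^2\text{ for all }s\le t\}$, which is positive since $y(0)=0$. If $\tau<T$, then by continuity $y\le 2a/\alpha^2$ on $[0,\tau]$ while $y(\tau)=2a/\alpha^2$, so the local estimate applies on $[0,\tau]$ and yields $2a/\alpha^2=y(\tau)=\int_0^\tau y'(s)\,ds\le\int_0^\tau a e^{-\alpha^2 s/2}\,ds<2a/\alpha^2$, a contradiction (the strict inequality uses $\tau<\infty$, or the strictness of $\max g<\alpha/2$). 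Hence $\tau=T$, i.e.\ $y\le 2a/\alpha^2$ throughout $[0,T)$, and feeding this back into the local estimate gives $y'(t)\le a e^{-\alpha^2 t/2}$ for every $t\in[0,T)$.

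The main (mild) obstacle is getting the bootstrap set-up right: the bound $y'\le a e^{-\alpha^2 s/2}$ is only available while $y$ lies in the band, so one must argue --- as above --- that $y$ cannot leave it, which is the self-referential step. A couple of minor points I would also record: the strict inequality in~\eqref{eq:ODI_condition} is what rules out $y$ actually touching $2a/\alpha^2$; and if one wants the upper bound for solutions a priori allowed to go negative, it still holds by running the same estimate from the last time before $\tau$ at which $y=-2a/\alpha^2$ (if any), since $g(u)\le\max_{[0,2a/\alpha^2]}g<\alpha/2$ also for $u\in[-2a/\alpha^2,0]$ when $c_1,c_2,c\ge0$ --- although the derivative bound itself genuinely needs $y\ge0$, which is the case in all our applications.
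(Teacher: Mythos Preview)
Your proof is correct and follows essentially the same continuity/bootstrap argument as the paper. The only cosmetic difference is the choice of stopping time: the paper sets $t_0=\inf\{t:\alpha g(y(t))=\alpha^2/2\}$ and derives the contradiction $\alpha^2/2=\alpha g(y(t_0))\le\alpha g(2a/\alpha^2)<\alpha^2/2$, whereas you set $\tau=\sup\{t:y\le 2a/\alpha^2\text{ on }[0,t]\}$ and derive the contradiction $2a/\alpha^2=y(\tau)<2a/\alpha^2$; both hinge on the same rewriting of~\eqref{eq:ODI_condition} as $g(2a/\alpha^2)<\alpha/2$ and the same integration of $y'\le ae^{-\alpha^2 t/2}$.
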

\begin{proof}
	Let $y\colon [0,T)\to\RR$ be any solution to~\eqref{eq:ODE_problem}.
	Set
	\[
	t_0 = \inf\big\{\,
	t\in[0,T)\colon \;
	\alpha\big( 
	c_1 y(t)
	+c_2y^2(t)
	\big) e^{cy^2(t)} = \alpha^2/2
	\,\big\}.
	\]
	By assumption $y(0)=0$ and $\alpha>0$, whence by continuity of $y$, $t_0>0$.
	Moreover, for a.e. $t< t_0$, $y'(t) \le  ae^{-\alpha^2 t/2}$, whence $y(t)\le 2a/\alpha^2 \cdot (1 - e^{-\alpha^2 t/2}) < 2a/\alpha^2$ for all $t<t_0$.
	If $t_0<T$, then by continuity $y(t_0)\le 2a/\alpha^2$ as well, whence
	\[
	\alpha^2/2 
	= 
	\alpha\big( 
	c_1 y(t_0)
	+c_2y^2(t_0)
	\big) e^{cy^2(t_0)}
	\le 
	\alpha
	\Big(
	\frac{2ac_1}{\alpha^2}+
	\frac{4a^2c_2}{\alpha^4}
	\Big)
	\exp\big(
	4ca^2/\alpha^4
	\big)
	\] 
	but this yields a contradiction with~\eqref{eq:ODI_condition}.
	Therefore $t_0=T$ as desired.
\end{proof}

Using Proposition~\ref{P:calL_solves_DI} in conjunction with Lemma~\ref{L:ODE_solutions}, we obtain in the theorem below the announced global convergence guarantee for continuous parameter trajectories.
\begin{figure}[h!]
\centering
\includegraphics[width=0.75\textwidth,trim=.5cm .5cm .5cm .5cm,clip]{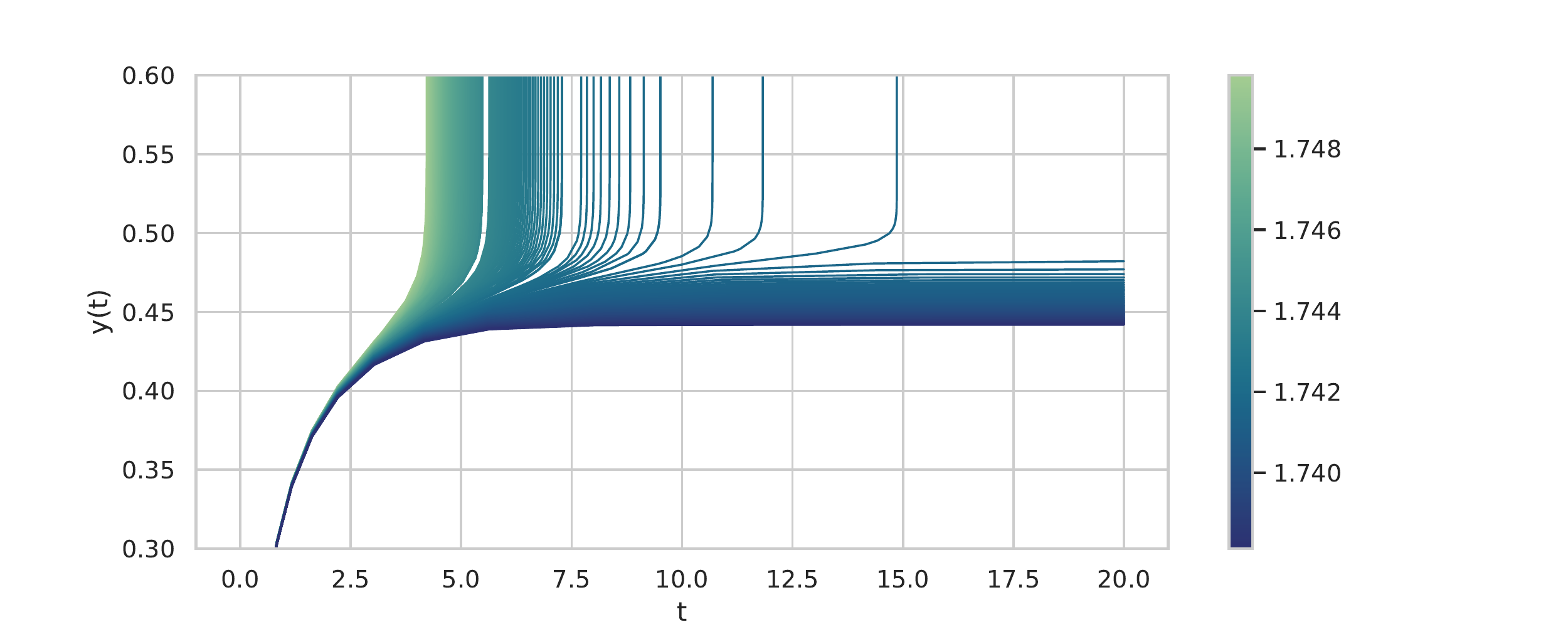}
\caption{
Numerical illustration of the solutions to~\eqref{eq:ODE_problem} for fixed $a, c_1,c_2,c=1$ and $\alpha\in[2.042, 2.045]$. The color scale encodes the values of $4({ac_1}/{\alpha^3}+{2a^2c_2}/{\alpha^5})\exp(4ca^2/\alpha^4)$, the quantity determining \eqref{eq:ODI_condition}. It is visible that the solution $y(t)$ either remains bounded or explodes rapidly depending on the condition involving the constants. Observe that empirically the upper bound is smaller than the derived theoretical bound.}
\label{fig:ode}
\end{figure}
\begin{theorem}\label{T:deterministic_convergence_guarantee}
	Let $a,\alpha,c,c_1,c_2$ be as in Proposition~\ref{P:calL_solves_DI}.
	Assume that $\alpha>0$ and that at initialization 
	\begin{equation}\label{eq:conditions_original_params}
	F(\theta(0), X, Y)
	\bydef 
	\big(
	\frac{ac_1}{\alpha^3}
	+
	\frac{a^2 c_2}{\alpha^5}
	\big)
	\exp\big(
	\frac{4ca^2}{\alpha^4}
	\big)
	< \frac{1}{8}.
	\end{equation}
	Then, any solution $\theta\colon[0,T)\to\RR^D$ to the DI~\eqref{eq:dynamics} can be extended to a solution on $\RR_+$ and any such extension satisfies for all $t\ge 0$,
	\begin{equation}\label{eq:deterministic_exponential_convergence}
	\calL(\theta(t))\le \calL(\theta(0))
	\exp(
	-t\alpha_0^2(0)
	)
	\end{equation}
	and for $u\bydef 4\|X\|_{op}\sqrt{\calL(\theta(0))}/\alpha_0(0)^2$,
	\begin{equation}\label{eq:theta_increments_initial_params_bound}
	\| \theta(t) - \theta(0) \|
	\le
	u\|\theta(0)\|
	e^{u}.
	\end{equation}
\end{theorem}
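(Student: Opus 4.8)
The plan is a bootstrap (continuation) argument whose engine is the uniform a priori bound on $\bar{\calL}(t)=\int_0^t\sqrt{\calL(\theta(s))}\,ds$ supplied by Lemma~\ref{L:ODE_solutions}. First I would check that~\eqref{eq:conditions_original_params} implies hypothesis~\eqref{eq:ODI_condition} of Lemma~\ref{L:ODE_solutions}: since $a,c,c_1,c_2\ge0$ and $\alpha>0$ we have $\frac{ac_1}{\alpha^3}+\frac{2a^2c_2}{\alpha^5}\le 2\big(\frac{ac_1}{\alpha^3}+\frac{a^2c_2}{\alpha^5}\big)$, so $4\big(\frac{ac_1}{\alpha^3}+\frac{2a^2c_2}{\alpha^5}\big)e^{4ca^2/\alpha^4}\le 8\,F(\theta(0),X,Y)<1$. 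Then, for any local solution $\theta\colon[0,T)\to\RR^D$ of~\eqref{eq:dynamics}, Proposition~\ref{P:calL_solves_DI} tells us that $\bar{\calL}$ solves~\eqref{eq:ODE_problem} with the parameters $a,\alpha,c,c_1,c_2$, and Lemma~\ref{L:ODE_solutions} then gives, on $[0,T)$ and with a bound independent of $T$,
\[
\bar{\calL}(t)\le \frac{2a}{\alpha^2}, \qquad \sqrt{\calL(\theta(t))}=\bar{\calL}'(t)\le a\,e^{-\alpha^2 t/2}.
\]
This one estimate drives the rest of the argument.

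Next I would feed $\bar{\calL}(t)\le 2a/\alpha^2$ back into Lemma~\ref{L:W_increments_bound}. Using monotonicity (in $\bar{\calL}(t)$) of the right-hand side of~\eqref{eq:W_increment_bound_result}, one gets $\|X\|_{op}\|W(t)-W(0)\|_F\le\big(\tfrac{ac_1}{\alpha^2}+\tfrac{2a^2c_2}{\alpha^4}\big)e^{4ca^2/\alpha^4}=\alpha\big(\tfrac{ac_1}{\alpha^3}+\tfrac{2a^2c_2}{\alpha^5}\big)e^{4ca^2/\alpha^4}\le 2\alpha\,F(\theta(0),X,Y)<\alpha/4$. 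Combined with Weyl's inequality~\eqref{eq:Weyl} this yields $|\alpha_0(t)-\alpha_0(0)|<\alpha/4$ and hence $\alpha_0(t)>\tfrac34\,\alpha_0(0)$ for all $t\in[0,T)$, so in particular $\alpha_0$ stays bounded away from $0$. Consequently $2\int_0^t\alpha_0^2(s)\,ds\ge\tfrac98\,t\,\alpha_0^2(0)\ge t\,\alpha_0^2(0)$, and plugging this into the loss estimate of Proposition~\ref{P:DI_existence} gives exactly~\eqref{eq:deterministic_exponential_convergence}.

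For the displacement bound and the extension to $\RR_+$ I would again use $\bar{\calL}(t)\le 2a/\alpha^2$, now in~\eqref{eq:theta_increment_bound_result}: together with $\|W(0)\|_F+\|V(0)\|_F\le\sqrt2\,\|\theta(0)\|$ and $\sqrt2\,\|X\|_{op}\bar{\calL}(t)\le 2\|X\|_{op}\bar{\calL}(t)\le 4\|X\|_{op}a/\alpha^2=u$, this gives $\|\theta(t)-\theta(0)\|\le u\|\theta(0)\|e^u$, which is~\eqref{eq:theta_increments_initial_params_bound}, on the lifespan of every solution, with the right-hand side $M:=u\|\theta(0)\|e^u$ independent of that lifespan. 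Taking the bounded domain $G=\{x\in\RR^D:\|x-\theta(0)\|<M+1\}$ and applying the continuation clause of Proposition~\ref{P:DI_existence}, a maximal solution either is defined on all of $\RR_+$ or reaches $\partial G$ in finite time; the latter is impossible because every solution stays inside $\{\|x-\theta(0)\|\le M\}\subset\mathrm{int}\,G$. Hence every solution extends to $\RR_+$, and the estimates of the two previous paragraphs hold for all $t\ge 0$.

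The step I expect to be the real crux is closing the bootstrap without circularity: the single inequality $\bar{\calL}(t)\le 2a/\alpha^2$ must simultaneously keep $\alpha_0$ away from zero (so the loss genuinely decays), trap $\theta$ in a fixed compact set (so there is no finite-time blow-up), and be available along the maximal solution produced by Proposition~\ref{P:DI_existence}. Everything else is routine constant chasing — in particular translating~\eqref{eq:conditions_original_params} into~\eqref{eq:ODI_condition} via the harmless factor-of-two slack between $\tfrac{a^2c_2}{\alpha^5}$ and $\tfrac{2a^2c_2}{\alpha^5}$, and checking that the resulting $\tfrac34$ still exceeds the $\tfrac1{\sqrt2}$ threshold needed to recover the clean exponent $\alpha_0^2(0)$ in~\eqref{eq:deterministic_exponential_convergence}.
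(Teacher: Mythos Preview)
Your proof is correct and follows essentially the same bootstrap-via-Lemma~\ref{L:ODE_solutions} route as the paper. Your detour through $\alpha_0(t)>\tfrac34\alpha_0(0)$ is harmless but unnecessary: squaring the bound $\sqrt{\calL(\theta(t))}=\bar{\calL}'(t)\le a\,e^{-\alpha^2 t/2}$ that you already have from Lemma~\ref{L:ODE_solutions} gives~\eqref{eq:deterministic_exponential_convergence} directly, which is how the paper proceeds.
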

\begin{proof}
	If $\calL(\theta(0)) = 0$, then the result holds.
	If $\calL(\theta(0)) > 0$, then set 
	$
	U(s) \bydef \sqrt{2}\|X\|_{op}
	\big(\|W(0)\|_{F} + \|V(0)\|_{F}\big) s \cdot e^{\sqrt{2}\|X\|_{op} s}
	$
	and let $G \bydef B(\theta(0),2U(2a/\alpha^2))$.
	By Proposition~\ref{P:DI_existence}, there exists $T>0$ and a solution $\theta\colon [0,T)\to\RR^{D}$ to the DI~\eqref{eq:dynamics}, which can be extended up until it hits the boundary of $G$.
	Assume that $\theta$ is already such an extension.
	By Proposition~\ref{P:calL_solves_DI}, $\bar{\calL}(t)=\int_0^t\sqrt{\calL(\theta(s))}\,ds$ solves~\eqref{eq:ODE_problem}, whence Lemma~\ref{L:ODE_solutions} asserts that if $\alpha>0$ and~\eqref{eq:conditions_original_params} is satisfied, then $\bar{\calL}(t)$ is bounded from above by $2a/\alpha^2$ and ${\bar{\calL}'(t)}=\sqrt{\calL(\theta(t))}$ is bounded from above by $ae^{-t\alpha^2/2}$ for all $t\in[0,T)$.
	
	By Lemma~\ref{L:W_increments_bound},
	$
	\| \theta(t) - \theta(0) \|
	\le
	U(\bar{\calL}(t))
	\le 
	U(2a/\alpha^2)
	$
	for all $t\in [0,T)$, so $\theta$ never reaches the boundary of $G$, whence $T=\infty$ and~\eqref{eq:deterministic_exponential_convergence} follows. 
	Estimating $\|W(0)\|_F+\|V(0)\|_F \le \sqrt{2}\|\theta(0)\|$ gives~\eqref{eq:theta_increments_initial_params_bound}.
\end{proof}

\section{Convergence of the Differential Inclusion Trajectories}
\label{sec:di_globconv}
To verify that~\eqref{eq:conditions_original_params} holds WHP at initialization, we need to impose some additional assumptions on the data matrices $X$, $Y$, and on the initialization scheme of $\theta_0$.
In this section, all the complexity notations $\calO$, $\Omega$, $\Theta$, etc., are understood in terms of $N$ approaching infinity, e.g., for any space $\mathcal{X}$ and a function $f\colon\mathcal{X}\times \NN \to \RR$, we say that $f(x,N) = \calO(N)$ if $|f(x,N)|\le CN$ for some constant $C>0$ and all $x\in\mathcal{X}$.

Recall that a random variable $z\in\RR$ is sub-Gaussian if its Orlicz norm defined as
$\|z\|_{\psi_2} \bydef \inf\{ t > 0\colon \EE \exp(z^2/t^2) \le 2\}$
is finite.
A random vector $Z\in\RR^n$ is said to be sub-Gaussian if 
$\|Z\|_{\psi_2} \bydef \sup_{t\in\RR^n,\, \| t \|_2 =  1} \| \langle Z,t \rangle\|_{\psi_2}$ is finite.
For more refined treatment of the Orlicz norms and sub-Gaussian random variables, we refer the reader to~\cite{Vershynin_hdp}.

In the sequel, we impose the following assumption.

\begin{assumption}\label{A:init_assumptions}
	\hfill
	\begin{enumerate}[leftmargin=2em]
		\item $X_{i:}$'s are random i.i.d. sub-Gaussian vectors s.t. $\|X_{i:}\|_2=\sqrt{d_0}$ and $\|X_{i:}\|_{\psi_2}=\calO(1)$ for $i\in [N]$.
		\item $(W_0)_{ij}\sim \mathcal{N}(0,\beta^2_w)$ for $(i,j)\in [d_0]\times [d_1]$ and some $\beta_w>0$.
		\item $(V_0)_{ij}\sim \mathcal{N}(0,\beta^2_v)$ for $(i,j)\in [d_1]\times [d_2]$ and some $\beta_v>0$. 
		\item $W_0$ and $V_0$ are independent random vectors. 
		\item $\| Y_{i:} \|_2 = \calO(\beta_w\beta_v\sqrt{d_0d_1d_2})$ for $i\in[N]$.
	\end{enumerate}
\end{assumption}

\begin{remark}
	The choice of data scaling in Assumption~\ref{A:init_assumptions} is made merely to simplify the notation. 
	In particular, it asserts that under the LeCun initialization, 
	$\|Y_{i:}\|= \calO(\sqrt{d_2})$ for any $i\in[N]$ and that 
	$\|\hat{Y}\|_F$ is WHP of similar order as 
	$\| {Y}\|_F$ at initialization, cf. Lemma~\ref{L:init_loss}.
\end{remark}

The result below provides a lower bound on $\alpha_0(0)$. The proof is a slight modification of the argument from~\cite[Theorem~5.1]{nguyen2021tight} -- we present it in Appendix~\ref{A:pf_alpha_init}.
\begin{theorem}\label{T:alpha_0_init}
	Under Assumption~\ref{A:init_assumptions}, let $d_0 \in [N^{\delta_0}, N]$ for some $\delta_0\in(0,1)$.
	Let $\Psi\colon\NN\to[1,\infty)$ be s.t. 
	$
	d_1 \ge \max\big( 
	N, 
	C(\delta_0) 
	d_0^{-1}{N \Psi(N)\log^2(N)}
	\big)
	$
	for some  and $C({\delta_0})>0$ depending on $\delta_0$ only.
	Then, there exists a universal constant $c(\delta_0)$ depending on $\delta_0$ only, s.t.
	$
	\alpha_0(0)
	\ge 	
	\sqrt{c(\delta_0)d_0d_1}\beta_w
	$ 
	holds with probability at least
	$
	1-\exp( -\Psi(N) )
	-\calO(N^2)\exp( -\Omega(N^{\delta_0/2}) ).
	$
\end{theorem}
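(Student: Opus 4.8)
The plan is to lower-bound $\alpha_0(0) = \sigma_{\min}(H^T(\theta_0)) = \sigma_{\min}(H_0^T)$ where $H_0 = \phi(XW_0)$, by controlling the smallest eigenvalue of the Gram matrix $H_0 H_0^T \in \RR^{N\times N}$. Since $\sigma_{\min}(H_0^T)^2 = \lambda_{\min}(H_0 H_0^T)$, it suffices to show $\lambda_{\min}(H_0 H_0^T) \ge c(\delta_0) d_0 d_1 \beta_w^2$ with the claimed probability. First I would decompose $H_0 H_0^T = \sum_{k=1}^{d_1} (\phi(XW_0)_{:k})(\phi(XW_0)_{:k})^T$ as a sum of $d_1$ i.i.d. rank-one random matrices, one per hidden neuron, each depending only on the column $(W_0)_{:k}\sim\mathcal N(0,\beta_w^2 I_{d_0})$. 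Taking expectations, $\EE[H_0 H_0^T] = d_1 K$, where $K_{ij} = \EE_{w\sim\mathcal N(0,\beta_w^2 I)}[\phi(\langle X_{i:},w\rangle)\phi(\langle X_{j:},w\rangle)]$ is (a scaled version of) the arccosine/ReLU kernel evaluated on the normalized data points $X_{i:}/\sqrt{d_0}$. The strategy is: (i) show $\lambda_{\min}(K) \ge c(\delta_0) d_0 \beta_w^2$ deterministically under the data assumption, and (ii) show the empirical Gram matrix concentrates around its mean $d_1 K$ in operator norm, so that $\lambda_{\min}(H_0 H_0^T) \ge d_1 \lambda_{\min}(K) - \|H_0 H_0^T - d_1 K\|_{op}$ is still of order $d_0 d_1 \beta_w^2$.

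For step (i), I would invoke the structure of the ReLU kernel: writing $u_i = X_{i:}/\sqrt{d_0}$ (unit vectors), $K_{ij} = d_0\beta_w^2 \cdot \kappa(\langle u_i, u_j\rangle)$ for the standard arccosine kernel $\kappa$. The key point, exploited in \cite{nguyen2021tight} and related works, is that for sub-Gaussian data in dimension $d_0 \ge N^{\delta_0}$, the points $u_i$ are WHP pairwise nearly orthogonal (inner products $O(\sqrt{\log N / d_0})$), so $\kappa(\langle u_i,u_j\rangle)$ for $i\ne j$ is small while the diagonal is $\Theta(1)$; a Gershgorin-type or Taylor-expansion-of-the-kernel argument then gives $\lambda_{\min}(K) \ge c(\delta_0) d_0\beta_w^2$. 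This near-orthogonality is itself a high-probability statement and contributes the $\calO(N^2)\exp(-\Omega(N^{\delta_0/2}))$ term (union bound over $\binom N 2$ pairs, each failing with sub-Gaussian tail $\exp(-\Omega(N^{\delta_0/2}))$).

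For step (ii), conditionally on $X$ (on the good event from step (i)), $H_0 H_0^T - \EE[H_0 H_0^T \mid X]$ is a sum of $d_1$ independent centered random matrices. Each summand $(\phi(XW_0)_{:k})(\phi(XW_0)_{:k})^T$ has entries that are sub-exponential (products of sub-Gaussians), with operator norm typically of order $d_0 \beta_w^2 N$. A matrix Bernstein / matrix Chernoff inequality (or the truncation-based argument of \cite{nguyen2021tight}) then yields a concentration bound of the form $\|H_0 H_0^T - d_1 K\|_{op} \le \tfrac12 d_1 \lambda_{\min}(K)$ with probability at least $1 - \exp(-\Psi(N))$, precisely when $d_1 \gtrsim d_0^{-1} N\,\Psi(N)\log^2 N$ — this is where the stated width requirement enters, the $\log^2 N$ coming from the sub-exponential tail corrections and the effective variance proxy. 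Combining (i) and (ii), $\lambda_{\min}(H_0 H_0^T) \ge \tfrac12 d_1 \lambda_{\min}(K) \ge \tfrac12 c(\delta_0) d_0 d_1 \beta_w^2$, giving the claim after relabeling the constant.

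The main obstacle is step (ii): getting the concentration with only the very mild width $d_1 = \tilde\Omega(N/d_0)$ (rather than a polynomially larger width) requires a sharp handle on the effective variance of the rank-one summands and careful truncation, since the naive operator-norm bound on each summand is too crude. Matching the $\log^2 N$ factor and the $\Psi(N)$-dependence in the failure probability, and ensuring the constants depend on $\delta_0$ only, is the delicate part; this is exactly why the proof is described as "a slight modification" of \cite[Theorem~5.1]{nguyen2021tight}, where this concentration machinery is developed, and I would import that machinery rather than rederive it, adjusting only for the sub-Gaussian (rather than, say, Gaussian or bounded) data model and the LeCun scaling $\beta_w$.
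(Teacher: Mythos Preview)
Your two-step skeleton---(i) lower-bound $\lambda_{\min}(K)$ for the expected ReLU kernel, then (ii) matrix Chernoff with truncation to pass from $K$ to the empirical Gram $H_0H_0^T$---matches the paper exactly, and your description of step (ii) is accurate (this is the paper's Lemma~\ref{L:lambda_chernoff_bound}). The gap is in the mechanism you give for step (i). The claim that ``$\kappa(\langle u_i,u_j\rangle)$ for $i\ne j$ is small'' is false: for the ReLU arccosine kernel one has $\kappa(0)=1/(2\pi)$, so even for perfectly orthogonal data the off-diagonal entries of the normalized kernel are $\Theta(1)$, and a direct Gershgorin bound on $K$ gives $\lambda_{\min}\ge \tfrac12-(N-1)/(2\pi)<0$, which is useless. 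Near-orthogonality of the $u_i$ alone does not make the ReLU kernel diagonally dominant.

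The paper's actual step (i) (following \cite{nguyen2021tight}) instead uses the Hermite expansion $\kappa(\rho)=\sum_{k\ge 0}\mu_k(\phi)^2\rho^k$, which writes the normalized kernel as a sum of PSD matrices $\mu_k(\phi)^2(X^{\ast k})(X^{\ast k})^T$ indexed by $k$, where $X^{\ast k}$ is the $k$-fold Khatri--Rao product (Lemma~\ref{L:lambda_Khatri-Rao}). One drops all but the $r$-th term and only \emph{then} applies Gershgorin to $(X^{\ast r})(X^{\ast r})^T$: its off-diagonals are $\langle X_{i:},X_{j:}\rangle^r$, and these \emph{are} negligible once $r$ is chosen large relative to $1/\delta_0$. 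Concretely the paper takes $r=4\lceil(1+\delta_0)/\delta_0\rceil$ and uses sub-Gaussian concentration to get $|\langle X_{i:},X_{j:}\rangle|\le d_0^{3/4}$ with failure probability $\exp(-\Omega(N^{\delta_0/2}))$ per pair, so that $N\cdot d_0^{-r/4}\le N^{-\delta_0}=o(1)$. The Hermite-coefficient computation $|\mu_r(\phi)|^2=\Theta(r^{-2.5})$ (Lemma~\ref{eq:wiener_coeffs_decay}) then ensures the prefactor is still a constant depending only on $\delta_0$, yielding $\lambda(X)\ge c(\delta_0)d_0$. This Hermite/Khatri--Rao detour is the missing idea in your step (i); without it the argument does not go through.
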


The following lemma follows from standard concentration inequalities -- we provide the proof for completeness in Appendix~\ref{A:pf_lem_concentration}.
\begin{lemma}\label{L:init_loss}
	If Assumption~\ref{A:init_assumptions} is satisfied, then 
	$$
		\|W_0\|_F = \Theta(\sqrt{d_0d_1}\beta_w)
		\quad\text{and}\quad
		\|V_0\|_F = \Theta(\sqrt{d_1d_2}\beta_v)
	$$
	with probability
	$
	1 -
	2\exp(-\Omega(d_0d_1)) -
	2\exp(-\Omega(d_1d_2)),
	$
	$$
	\|X\|_{op} = \calO(\sqrt{\max\{N,d_0\}})
	$$
	with probability $1-\exp(-\Omega(\max\{N,d_0\}))$,
	and
	$$
	\calL(\theta_0) = \calO(\|Y\|_F^2 + \beta_v^2 d_2 \|W_0\|_F^2\|X\|_{op}^2 \log(N))
	$$
	with probability. 
	$
	1-\exp(-\Omega(d_2 \log(N)))
	$.
\end{lemma}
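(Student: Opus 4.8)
The plan is to prove the three displayed estimates one at a time by standard concentration arguments and then combine them by a union bound.

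\emph{Frobenius norms of the weights.} The quantity $\|W_0\|_F^2=\sum_{(i,j)\in[d_0]\times[d_1]}(W_0)_{ij}^2$ equals $\beta_w^2$ times a chi-square variable with $d_0d_1$ degrees of freedom, so the Laurent--Massart bound (equivalently, Bernstein's inequality for the centered sub-exponential sum $\sum(W_0)_{ij}^2-\beta_w^2 d_0d_1$) gives $\tfrac12 d_0d_1\beta_w^2\le\|W_0\|_F^2\le\tfrac32 d_0d_1\beta_w^2$ with probability $1-2\exp(-\Omega(d_0d_1))$; taking square roots yields $\|W_0\|_F=\Theta(\sqrt{d_0d_1}\beta_w)$. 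The identical argument applied to the $d_1d_2$ i.i.d.\ $\mathcal N(0,\beta_v^2)$ entries of $V_0$ gives $\|V_0\|_F=\Theta(\sqrt{d_1d_2}\beta_v)$ with probability $1-2\exp(-\Omega(d_1d_2))$, and a union bound over the two events gives the first claim.

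\emph{Operator norm of $X$.} Write $\|X\|_{op}=\sup_{u\in S^{d_0-1}}\|Xu\|$. For a fixed unit vector $u$, each $\langle X_{i:},u\rangle$ is sub-Gaussian with $\|\langle X_{i:},u\rangle\|_{\psi_2}\le\|X_{i:}\|_{\psi_2}=\calO(1)$, hence $\langle X_{i:},u\rangle^2$ is sub-exponential with $\calO(1)$ Orlicz norm and mean $\calO(1)$; Bernstein's inequality controls the independent sum $\|Xu\|^2=\sum_{i\in[N]}\langle X_{i:},u\rangle^2$. Taking a $\tfrac14$-net $\mathcal N$ of $S^{d_0-1}$ (with $|\mathcal N|\le 9^{d_0}$ and $\|X\|_{op}^2\le 2\sup_{u\in\mathcal N}\|Xu\|^2$) and a union bound shows $\sup_{u\in\mathcal N}\|Xu\|^2=\calO(\max\{N,d_0\})$ with probability $1-9^{d_0}\exp(-\Omega(\max\{N,d_0\}))$; choosing the hidden constant in the tail parameter large enough absorbs the $9^{d_0}$ factor, giving $\|X\|_{op}=\calO(\sqrt{\max\{N,d_0\}})$ with probability $1-\exp(-\Omega(\max\{N,d_0\}))$. (Alternatively one may invoke a standard operator-norm bound for matrices with independent sub-Gaussian rows, after splitting off the possibly nonzero row mean, whose rank-one contribution is only $\calO(\sqrt N)$.)

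\emph{Loss at initialization.} Using $\calL(\theta_0)=\tfrac12\|Y-\widehat Y\|_F^2\le\|Y\|_F^2+\|\widehat Y\|_F^2$ with $\widehat Y=H_0V_0$ and $H_0\bydef\phi(XW_0)$, it remains to bound $\|H_0V_0\|_F^2$. Conditionally on $(X,W_0)$ the vector $\vect(V_0)\sim\mathcal N(0,\beta_v^2 I_{d_1d_2})$ is independent of $H_0$ and
\[
\|H_0V_0\|_F^2=\beta_v^2\,g^{T}\big(I_{d_2}\otimes H_0^{T}H_0\big)g,\qquad g\sim\mathcal N(0,I_{d_1d_2}).
\]
Applying the Gaussian quadratic-form deviation bound $g^{T}Mg\le\tr M+2\sqrt{\tr(M^2)\,t}+2\|M\|_{op}t$ (valid with probability $1-e^{-t}$ for $M\succeq 0$) with $M=I_{d_2}\otimes H_0^{T}H_0$, together with $\tr M=d_2\|H_0\|_F^2$, $\tr(M^2)=d_2\|H_0^{T}H_0\|_F^2\le d_2\|H_0\|_F^2\|H_0\|_{op}^2$, $\|M\|_{op}=\|H_0\|_{op}^2$, and $\|H_0\|_{op}\le\|H_0\|_F$, yields $\|H_0V_0\|_F^2=\calO(\beta_v^2(d_2+t)\|H_0\|_F^2)$ with probability $1-e^{-t}$. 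Taking $t=\Theta(d_2\log N)$ and using $\|H_0\|_F=\|\phi(XW_0)\|_F\le\|XW_0\|_F\le\|X\|_{op}\|W_0\|_F$ (since $\phi$ is $1$-Lipschitz with $\phi(0)=0$) gives $\|H_0V_0\|_F^2=\calO(\beta_v^2 d_2\log(N)\,\|X\|_{op}^2\|W_0\|_F^2)$ with probability $1-\exp(-\Omega(d_2\log N))$; since this bound holds deterministically on that event with right-hand side written through $\|X\|_{op}$ and $\|W_0\|_F$, integrating out $(X,W_0)$ yields the stated unconditional probability and hence the third claim.

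\emph{Main obstacle.} The two weight-norm estimates are entirely routine. The point requiring care is the loss bound: estimating $\|H_0V_0\|_F^2$ through the single $(d_1d_2)$-dimensional Gaussian quadratic form above, rather than column-by-column with a union bound over the $d_2$ columns of $V_0$, is what keeps the extra factor at $d_2\log N$ instead of $d_2^2\log N$, and one must choose the constants inside $t=\Theta(d_2\log N)$ consistently so that simultaneously the failure probability is $\exp(-\Omega(d_2\log N))$ and the final bound carries only a single $\log N$.
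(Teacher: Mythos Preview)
Your argument is correct. For the Frobenius norms of $W_0,V_0$ and the operator norm of $X$, you essentially reproduce the standard proofs that the paper simply cites (Vershynin, Theorems~3.1.1 and~4.6.1), so there is no real difference there.

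For the loss bound the routes diverge. You view $\|H_0V_0\|_F^2=\beta_v^2\,g^{T}(I_{d_2}\otimes H_0^{T}H_0)g$ and invoke the Hanson--Wright/Laurent--Massart quadratic-form deviation bound, after which you estimate the trace and operator norm of the Kronecker product and use $\|H_0\|_{op}\le\|H_0\|_F$. The paper instead works with the norm (not its square): conditioning on $(X,W_0)$, it observes that $V\mapsto\|\phi(XW_0)V\|_F$ is $\|\phi(XW_0)\|_F$-Lipschitz in the Gaussian vector $\operatorname{vec}(V_0)$, applies Gaussian Lipschitz concentration at level $t=\beta_v\sqrt{d_2\log N}\,\|\phi(XW_0)\|_F$, and bounds the mean by $\sqrt{\EE\|AV\|_F^2}=\beta_v\sqrt{d_2}\,\|A\|_F$. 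Both arguments yield the same bound $\|\hat Y\|_F^2=\calO(\beta_v^2 d_2\log(N)\,\|X\|_{op}^2\|W_0\|_F^2)$ with probability $1-\exp(-\Omega(d_2\log N))$. The paper's approach is a bit shorter since it avoids tracking the spectral quantities of $I_{d_2}\otimes H_0^TH_0$, while your quadratic-form route makes the role of $d_2$ and of the single ``global'' Gaussian more explicit and would generalize more readily if one wanted sharper dependence on the spectrum of $H_0$.
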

Combining results from Sections~\ref{sec:di_dynamics} and \ref{sec:di_globconv} we obtain the following result demonstrating the global convergence of solutions to DI~\eqref{eq:dynamics} towards zero loss under initialization satisfying Assumption~\ref{A:init_assumptions}.
The full proof is provided in Appendix~\ref{A:pf_g_flow_convergence}.
\begin{corollary}\label{C:g_flow_convergence}
	Under Assumption~\ref{A:init_assumptions}, let $\beta_v^2=d_1^{-\rho}$ for some $\rho\ge 0$ and $d_0 \in [N^{\delta_0},N]$ for some $\delta_0\in(0,1)$. 
	Let moreover $c(\delta_0)$ and $C(\delta_0)$ be as in Theorem~\ref{T:alpha_0_init} and 
	\[
	d_1 \ge \max\big(N, 
	C(\delta_0) 
	\left[ 
	\frac{d_2N^{2.5}}{d_0\beta_w^2}
	\right]^{1/(1+\rho)}\log^2(N)
	\big).
	\]
	Then, any solution $\theta\colon[0,T)\to\RR$ to the DI~\eqref{eq:dynamics} can be extended to a solution on $\RR_+$ and any such extension satisfies 
	\begin{equation*}
	\calL(\theta(t))\le 
	\calL(\theta(0))\cdot \exp
	\big(
	-t\cdot c(\delta_0) d_0d_1\beta_w^2
	\big)
	\end{equation*}
	for all $t\ge 0$ with probability at least
	$
	1-
	\exp\big( 
	-
	\frac{d_0}{N}
	\cdot 
	\big[
	\frac{d_2N^{2.5}}{d_0\beta_w^2} 
	\big]^{1/{(\rho+1)}}
	\big)
	-\calO(N^2)\exp( 
	-\Omega(N^{\delta_0/2}) 
	)
	-\exp(
	-\Omega(d_2\log N)
	).
	$
\end{corollary}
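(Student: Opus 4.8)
The plan is to verify that the deterministic condition~\eqref{eq:conditions_original_params} of Theorem~\ref{T:deterministic_convergence_guarantee} holds with the claimed probability, and then invoke that theorem together with the lower bound on $\alpha_0(0)$ from Theorem~\ref{T:alpha_0_init} to convert the abstract rate $\exp(-t\alpha_0^2(0))$ into the explicit rate $\exp(-t\,c(\delta_0)d_0d_1\beta_w^2)$. First I would set up the high-probability event: apply Theorem~\ref{T:alpha_0_init} with $\Psi(N)$ chosen so that $C(\delta_0)d_0^{-1}N\Psi(N)\log^2(N)$ matches (up to the constant) the stated lower bound on $d_1$, i.e. take $\Psi(N) \asymp [d_2 N^{2.5}/(d_0\beta_w^2)]^{1/(1+\rho)}\cdot(d_0/N)$ so that the factor $\exp(-\Psi(N))$ becomes the first probability term in the conclusion; and apply Lemma~\ref{L:init_loss} to control $\|W_0\|_F$, $\|V_0\|_F$, $\|X\|_{op}$ and $\calL(\theta_0)$. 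On this joint event we have $\alpha = \alpha_0(0) \ge \sqrt{c(\delta_0)d_0d_1}\,\beta_w$, $\|W_0\|_F = \Theta(\sqrt{d_0d_1}\beta_w)$, $\|V_0\|_F = \Theta(\sqrt{d_1 d_2}\beta_v)$, $\|X\|_{op} = \calO(\sqrt{\max\{N,d_0\}}) = \calO(\sqrt N)$ (since $d_0\le N$), and $a^2 = \calL(\theta_0) = \calO(\|Y\|_F^2 + \beta_v^2 d_2\|W_0\|_F^2\|X\|_{op}^2\log N)$; using Assumption~\ref{A:init_assumptions}(5), $\|Y\|_F^2 = \calO(N\beta_w^2\beta_v^2 d_0 d_1 d_2)$, which is dominated by the second term up to logs, so $a^2 = \calO(\beta_w^2\beta_v^2 d_0 d_1 d_2 N\log N)$.

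Next I would plug these estimates into the three quantities controlling~\eqref{eq:conditions_original_params}, recalling $c_1 = 2\sqrt2\|X\|_{op}^2\|V_0\|_F$, $c_2 = 2\|X\|_{op}^3\|W_0\|_F$, $c = \|X\|_{op}^2$ from~\eqref{eq:c1_c2_c_def}. A direct substitution gives, schematically,
\begin{align*}
\frac{ac_1}{\alpha^3} &= \calO\!\Big(\frac{\sqrt{\beta_w^2\beta_v^2 d_0d_1d_2 N\log N}\cdot N\cdot\sqrt{d_1d_2}\beta_v}{(d_0d_1\beta_w^2)^{3/2}}\Big)
= \calO\!\Big(\frac{\beta_v^2 d_2 N^{1.5}\sqrt{\log N}}{d_0 \beta_w^2}\Big),\\
\frac{a^2 c_2}{\alpha^5} &= \calO\!\Big(\frac{\beta_w^2\beta_v^2 d_0d_1d_2 N\log N\cdot N^{1.5}\cdot\sqrt{d_0d_1}\beta_w}{(d_0d_1\beta_w^2)^{5/2}}\Big)
= \calO\!\Big(\frac{\beta_v^2 d_2 N^{2.5}\log N}{d_0\beta_w^2}\Big),\\
\frac{ca^2}{\alpha^4} &= \calO\!\Big(\frac{N\cdot \beta_w^2\beta_v^2 d_0d_1d_2 N\log N}{(d_0d_1\beta_w^2)^2}\Big)
= \calO\!\Big(\frac{\beta_v^2 d_2 N^2\log N}{d_0 d_1\beta_w^2}\Big).
\end{align*}
Now substitute $\beta_v^2 = d_1^{-\rho}$. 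The dominant term among the first two is the second, of order $d_2 N^{2.5}\log N/(d_0 d_1^\rho\beta_w^2)$; the exponent $\exp(4ca^2/\alpha^4)$ has argument of order $d_2 N^2\log N/(d_0 d_1^{1+\rho}\beta_w^2)$. The hypothesis $d_1 \ge C(\delta_0)[d_2 N^{2.5}/(d_0\beta_w^2)]^{1/(1+\rho)}\log^2 N$ forces $d_1^{1+\rho} \gtrsim d_2 N^{2.5}\log^{2(1+\rho)} N/(d_0\beta_w^2)$, so the argument of the exponential is $\calO(1/\log N) = o(1)$, hence $\exp(4ca^2/\alpha^4) = \calO(1)$; and the same bound makes the prefactor $(ac_1/\alpha^3 + a^2 c_2/\alpha^5)$ of order $N^{-0.5}\log N\cdot(\text{const}) = o(1)$ (the $N^{2.5}$ in the prefactor's numerator is beaten by $d_1^{1+\rho}\ge d_1 \gtrsim N^{2.5/(1+\rho)}\cdot(\cdots)$, and one checks the powers of $N$ work out with room to spare because the $\log^2 N$ in the width lower bound is raised to a power $\ge 1$). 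Choosing $C(\delta_0)$ large enough makes $F(\theta(0),X,Y) < 1/8$, so~\eqref{eq:conditions_original_params} holds, and Theorem~\ref{T:deterministic_convergence_guarantee} yields global extension of $\theta$ together with $\calL(\theta(t))\le\calL(\theta(0))\exp(-t\alpha_0^2(0))\le\calL(\theta(0))\exp(-t\,c(\delta_0)d_0d_1\beta_w^2)$. Finally I would collect the failure probabilities: $\exp(-\Psi(N))$ from Theorem~\ref{T:alpha_0_init} with the above choice of $\Psi$, the $\calO(N^2)\exp(-\Omega(N^{\delta_0/2}))$ term also from Theorem~\ref{T:alpha_0_init}, the $2\exp(-\Omega(d_0d_1)) + 2\exp(-\Omega(d_1d_2))$ and $\exp(-\Omega(N))$ and $\exp(-\Omega(d_2\log N))$ terms from Lemma~\ref{L:init_loss}; since $d_0 d_1 \ge N$ and $d_1 d_2\ge d_2\log N$ these are all dominated by (or equal to) the three terms in the stated conclusion, and a union bound finishes the argument.

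The main obstacle is the bookkeeping in the second paragraph: one must track the exact powers of $N$, $d_0$, $d_1$, $d_2$, $\beta_w$, $\beta_v$ through the ratios $ac_1/\alpha^3$, $a^2c_2/\alpha^5$, $ca^2/\alpha^4$ and confirm that the single width hypothesis $d_1 = \Omega\big([d_2 N^{2.5}/(d_0\beta_w^2)]^{1/(1+\rho)}\log^2 N\big)$ simultaneously tames all three — in particular that it kills the doubly-exponential factor $\exp(4ca^2/\alpha^4)$ (this is where the $\log^2 N$, rather than $\log N$, in the width bound is used) while also driving the polynomial prefactor below $1/8$. A secondary subtlety is verifying that the choice of $\Psi(N)$ is legitimate, i.e. that $\Psi(N)\ge 1$ and that the resulting width requirement from Theorem~\ref{T:alpha_0_init} is no stronger than the one assumed in the corollary — this holds precisely because $\Psi(N)\log^2(N) \asymp d_0 d_1^{1+\rho}\beta_w^2 \cdot d_1^{-(1+\rho)}/N \cdot (\cdots)$ reduces, after inserting the width lower bound, to something consistent. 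The special case $\calL(\theta_0)=0$ is handled trivially as in Theorem~\ref{T:deterministic_convergence_guarantee}, and the case $N < d_0$ does not arise since $d_0\le N$ by hypothesis.
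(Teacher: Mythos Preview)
Your overall strategy is exactly the paper's: choose $\Psi(N)=\tfrac{d_0}{N}\bigl[\tfrac{d_2N^{2.5}}{d_0\beta_w^2}\bigr]^{1/(1+\rho)}$, invoke Theorem~\ref{T:alpha_0_init} and Lemma~\ref{L:init_loss} to set up the high-probability event, substitute into $F(\theta_0,X,Y)$, check that $F=o(1)<1/8$, and conclude via Theorem~\ref{T:deterministic_convergence_guarantee}. The probability bookkeeping at the end is also as in the paper.

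However, there is a real (if easily fixed) arithmetic slip in your displayed simplifications: in both $\tfrac{ac_1}{\alpha^3}$ and $\tfrac{a^2c_2}{\alpha^5}$ you dropped a factor of $d_1$ from the denominator. The correct simplifications are
\[
\frac{ac_1}{\alpha^3}=\calO\!\Bigl(\frac{\beta_v^2 d_2 N^{1.5}\sqrt{\log N}}{d_0\sqrt{d_1}\,\beta_w^2}\Bigr),
\qquad
\frac{a^2c_2}{\alpha^5}=\calO\!\Bigl(\frac{\beta_v^2 d_2 N^{2.5}\log N}{d_0\,d_1\,\beta_w^2}\Bigr),
\]
exactly matching the paper's computation. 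This matters: with your stated formula the dominant prefactor becomes $d_2N^{2.5}\log N/(d_0 d_1^{\rho}\beta_w^2)$ after substituting $\beta_v^2=d_1^{-\rho}$, and for $\rho=0$ (which the corollary allows) this is not $o(1)$ under the width hypothesis --- so your hand-wavy sentence about ``powers of $N$ work out with room to spare'' does not actually close. With the missing $d_1$ restored, the prefactor is $d_2N^{2.5}\log N/(d_0 d_1^{1+\rho}\beta_w^2)$, and the hypothesis $d_1^{1+\rho}\gtrsim d_2N^{2.5}\log^{2(1+\rho)}N/(d_0\beta_w^2)$ immediately gives this as $\calO(1/\log^{1+2\rho}N)=o(1)$, cleanly and uniformly in $\rho\ge 0$. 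Your computation of $ca^2/\alpha^4$ is correct. Fix those two denominators and your proposal becomes the paper's proof.
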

\begin{proof}[Proof sketch]
	By Theorem~\ref{T:deterministic_convergence_guarantee} and Theorem~\ref{T:alpha_0_init}, it suffices to verify that $F(\theta(0), X, Y) = o(1)$.
	The last condition is verified WHP by means of Theorem~\ref{T:alpha_0_init} and Lemma~\ref{L:init_loss}.
\end{proof}

\section{Convergence of the Stochastic Gradient Descent Iterations}
\label{sec:gradient}
Let us consider a discrete version of the dynamics given by the DI Cauchy problem~\eqref{eq:dynamics}, i.e., the stochastic gradient descent.
We start with introducing some additional notation.

Let $(\Xi,\mathcal{F},\mu)$ be a probability space and consider a function $f\colon\RR^D\times \Xi \to\RR$, s.t. $f(\cdot, s)$ is locally Lipschitz for all $s\in\Xi$.
Let $\theta_0\in\RR^D$ be a random variable with absolutely continuous distribution function. 
For a fixed stepsize $\eta>0$, we say that a sequence of $\RR^D$-valued random variables $(\theta_k^{\eta})_{k\in\NN}$ is an $f$-SGD sequence if
\begin{equation}\label{eq:GD}
\theta_0^{\eta} = \theta_0;
\ 
\theta_{k+1}^{\eta} 
\in  
- \eta \cdot \partial f(\theta_k^{\eta}, \xi_{k+1})
\;
\text{for }
k\in\NN,
\end{equation}
where $\partial f(\theta, s)$ is the Clarke subdifferential at point $\theta$ applied to the function $\theta\mapsto f(\theta, s)$ and $(\xi_k)_{k\in\NN_+}$ is a sequence of i.i.d. $\Xi$-valued random variables distributed according to $\mu$, which are independent of $\theta_0$. 

For $b\in[N]$, let $[N]^{(b)}$ denote the family of subsets of $[N]$ containing exactly $b$ elements and $A_b \sim \operatorname{Unif}([N]^{(b)})$ be a random variable selecting each item from $[N]^{(b)}$ with the same probability.
We define the loss function $\calL^b\colon\RR^{D}\times [N]^{(b)} \to \RR_+$ for a batch sample of size $b\in [N]$ via the formula
$
\calL^b(\theta, A) 
\bydef
\frac{1}{2}\sum_{i\in A} \|Y_{i:} - \hat{Y}_{i:}\|^2. 
$
Therefore, an $\calL^b$-SGD sequence is any random sequence $(\theta_k^{\eta})_{k\in\NN}$ satisfying~\eqref{eq:GD} with $\Xi = [N]^{(b)}$ and an i.i.d. sequence $\xi_k\sim \operatorname{Unif}([N]^{(b)})$ for $k\in\NN_+$.
We stress that this construction corresponds to the usual mini-batch SGD.

Corollary~\ref{C:g_flow_convergence} states that, assuming enough overparametrization, the continuous trajectories given by the dynamics of the DI problem~\eqref{eq:dynamics} converge to the global minima of the loss~$\calL$ if the initial value $\theta_0$ is chosen properly, which happens WHP.  
In the theorem below we deduce an analogous convergence result for the $\mathcal{L}^b$-SGD iterates defined above.

\begin{theorem}\label{T:GD_main}
	Under Assumption~\ref{A:init_assumptions}, let $\beta^2_v=d_1^{-\rho}$ for some $\rho>0$ and $d_0,d_1,d_2,\delta_0,c(\delta_0),C(\delta_0)$ be as in Corollary~\ref{C:g_flow_convergence}. 
	Choose any error $\varepsilon > 0$, batch size $b = b(N)\in [N]$ and any family $\{ (\theta_k^\eta)\colon \eta>0\}$ of $\calL^b$-SGD sequences~\eqref{eq:GD}. 
	
	Then, there exists a step size $\eta_0\in (0,1)$ s.t. for a.e. $\eta\in(0,\eta_0)$,
	$
	\calL(\theta_{k^\ast}^\eta) < \varepsilon
	$
	for some 
	\begin{equation}\label{eq:k_ast_SGD}
	k^\ast \le 
	\Big\lfloor
	1 +  
	\frac{N}{\eta b}
	\max\big(
	0, 
	\frac{\log\left(
		{CN\log (N) d_0d_1\beta_w^2\beta_v^2}/{\varepsilon}
		\right)}{c(\delta_0)d_0d_1\beta^2_w}
	\big)
	\Big\rfloor,
	\end{equation}
	where $C>0$ is some absolute constant.
	The result holds with probability at least 
	$
	1 -
	\exp\big( 
	-
	\frac{d_0}{N}
	\cdot 
	\big[
	\frac{N^{2.5}}{d_0\beta_w^2} 
	\big]^{1/{(\rho+1)}}
	\big)
	-\calO(N^2)\exp( 
	-\Omega(N^{\delta_0/2}) 
	)
	-\exp(
	-\Omega(d_2\log N)
	).
	$
\end{theorem}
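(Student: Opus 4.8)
The plan is to deduce this from the abstract SGD-from-DI transfer principle (Informal Theorem~\ref{T:3_informal}, i.e.\ Theorem~\ref{P:abstract_convergence}) by supplying the hypotheses it requires: an exponentially contracting DI and a compact ``good'' set of initializations that is charged with high probability. First I would set $\tilde{\calL} = \calL$ (the full MSE loss), and use Corollary~\ref{C:g_flow_convergence} to produce the set $Q$: namely, $Q$ is the (compact, since deterministic bounds like Lemma~\ref{L:init_loss} pin down $\|W_0\|_F,\|V_0\|_F,\|X\|_{op},\calL(\theta_0)$ up to constants on it) event on which the initialization condition $F(\theta(0),X,Y)<\tfrac18$ of Theorem~\ref{T:deterministic_convergence_guarantee} holds and $\alpha_0(0)\ge\sqrt{c(\delta_0)d_0d_1}\beta_w$. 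On this event, Corollary~\ref{C:g_flow_convergence} and~\eqref{eq:theta_increments_initial_params_bound} give that every DI solution~\eqref{eq:dynamics} started in $Q$ (a) stays in an explicit compact ball $G = B(\theta_0, u\|\theta_0\|e^u)$ with $u = 4\|X\|_{op}\sqrt{\calL(\theta_0)}/\alpha_0(0)^2$, and (b) decays as $\calL(\theta(t))\le\calL(\theta(0))e^{-\gamma t}$ with $\gamma = c(\delta_0)d_0d_1\beta_w^2$. That is exactly the structural hypothesis of Theorem~\ref{P:abstract_convergence}.

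Next I would verify the ``mild technical conditions'' on $\tilde{\calL}=\calL$ and on the mini-batch losses $\calL^b(\cdot,A)$: each is locally Lipschitz, satisfies the chain rule~\eqref{eq:chain_rule} (this is the content of the Clarke-regularity / Whitney-stratifiability remarks around~\eqref{eq:chain_rule}, which for ReLU networks is standard, cf.~\cite{davis2020}), and $\EE_{A_b}\partial\calL^b(\theta,A_b)$ is a subset (or selection-compatible version) of $\partial\calL(\theta)$ up to the scaling factor $b/N$ — this is why the $N/(\eta b)$ factor appears in~\eqref{eq:k_ast_SGD}: one mini-batch step of size $\eta$ advances the continuous clock by roughly $\eta b/N$ in expectation. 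Also $\theta_0$ has absolutely continuous law by Assumption~\ref{A:init_assumptions}(2)--(4), which is needed for the ``a.e.\ $\eta$'' clause (it lets one avoid the measure-zero set of bad step sizes via the nonsmooth-analysis argument underlying Theorem~\ref{P:abstract_convergence}).

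Then the probability bound: the event $Q$ is contained in the intersection of the events of Theorem~\ref{T:alpha_0_init} (with $\Psi(N)$ chosen so that $N\Psi(N)\log^2 N/d_0$ matches the stated $d_1$ lower bound, i.e.\ $\Psi(N) \asymp (d_2 N^{2.5}/(d_0\beta_w^2))^{1/(1+\rho)}d_0/(N\log^2 N)\cdot\log^2 N$, giving the $\exp(-\tfrac{d_0}{N}[N^{2.5}/(d_0\beta_w^2)]^{1/(\rho+1)})$ term once $\beta_v^2=d_1^{-\rho}$ and the constraint on $d_1$ are plugged in) and of Lemma~\ref{L:init_loss}; a union bound gives precisely the claimed failure probability $\exp(-\tfrac{d_0}{N}[N^{2.5}/(d_0\beta_w^2)]^{1/(\rho+1)}) + \calO(N^2)\exp(-\Omega(N^{\delta_0/2})) + \exp(-\Omega(d_2\log N))$. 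Finally the bound~\eqref{eq:k_ast_SGD} on $k^\ast$ comes from translating ``continuous time needed for $\calL(\theta(t))\le\varepsilon$'', i.e.\ $t^\ast = \gamma^{-1}\log(\calL(\theta_0)/\varepsilon)$, through the $\eta b/N$ time-per-step rate and bounding $\calL(\theta_0) = \calO(N\log(N)d_0d_1\beta_w^2\beta_v^2)$ on $Q$ via Lemma~\ref{L:init_loss} together with Assumption~\ref{A:init_assumptions}(5).

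The main obstacle is the second paragraph: cleanly invoking Theorem~\ref{P:abstract_convergence}, one must check that $\calL^b$-SGD increments are genuinely unbiased estimators of a $-\partial\calL$ selection along trajectories confined to $G$ (where $\calL$ is Lipschitz with a constant controlled by the compact set), and that the nonsmooth ODE-method machinery (closeness of the SGD interpolated path to a DI solution, uniformly until the loss drops below $\varepsilon$, for small enough and a.e.\ $\eta$) applies with $G$ as the confining set — this is where the qualitative, non-constructive nature of $\eta_0$ enters. Everything else is bookkeeping: matching the $d_1$ hypothesis to $\Psi(N)$, and chasing constants from Lemma~\ref{L:init_loss} into~\eqref{eq:k_ast_SGD} and the probability bound.
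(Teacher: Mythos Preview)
Your overall strategy matches the paper's: feed the DI convergence (Corollary~\ref{C:g_flow_convergence} / Theorem~\ref{T:deterministic_convergence_guarantee}) together with the concentration estimates (Theorem~\ref{T:alpha_0_init}, Lemma~\ref{L:init_loss}) into the abstract transfer principle Theorem~\ref{P:abstract_convergence}, and read off~\eqref{eq:k_ast_SGD} from the $\gamma^{-1}\log(2l/\varepsilon)$ bound on $k^\ast$ there. The bookkeeping you describe for $\gamma=c(\delta_0)d_0d_1\beta_w^2$, the $N/(\eta b)$ rescaling, and the probability budget is correct.

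There is, however, one structural point you elide that the paper treats explicitly. Theorem~\ref{P:abstract_convergence} requires \emph{deterministic} compact sets $Q\subset G\subset\RR^D$ and a deterministic loss $\tilde{\calL}$, but $\calL$ depends on the random data matrix $X$; your description of $Q$ as ``the event on which $F(\theta(0),X,Y)<\tfrac18$ and $\alpha_0(0)\ge\cdots$'' mixes an event in the joint $(X,\theta_0)$-probability space with a subset of parameter space. The paper fixes this by conditioning: for each realization $X=\tilde X$ it defines $Q(\tilde X)\subset\RR^D$ by the four explicit inequalities ($F<\tfrac18$, $\alpha_0\ge\sqrt{c(\delta_0)d_0d_1}\beta_w$, $\calL\le CNd_0d_1d_2\beta_w^2\beta_v^2\log N$, $\|\theta\|\le C(\sqrt{d_0d_1}\beta_w+\sqrt{d_1d_2}\beta_v)$), applies Theorem~\ref{P:abstract_convergence} with $\tilde\calL=\calL(\tilde X,\cdot)$ and $\delta=\PP(\theta_0\notin Q(\tilde X))$, notes that the resulting $\eta_0$ depends on $\tilde X$ only, and then integrates the conditional statement over the law of $X$ (using $\PP(A\mid B)\le\PP(A)/\PP(B)$ and $(1-\delta)^2\ge 1-2\delta$) to get the unconditional bound. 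Relatedly, your $G=B(\theta_0,u\|\theta_0\|e^u)$ depends on the particular initialization $\theta_0$; the abstract theorem needs a single $G$ that traps every trajectory started in $Q$, so the paper takes $G(\tilde X)=B(Q(\tilde X),U(\tilde X))$ with $U(\tilde X)=\sup_{\theta\in Q(\tilde X)}\{\cdots\}$, finite by compactness of $Q(\tilde X)$. Once you insert this conditioning-and-integration step and uniformize $G$ over $Q$, your argument becomes the paper's.
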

\begin{remark}
	Note that $k^\ast$ in Theorem~\ref{T:GD_main} depends on $\varepsilon$ via $\log(1/\varepsilon)$, i.e., SGD converges to the global minima at a linear rate.
\end{remark}

\begin{remark}\label{R:scaling}
	In order to compare the bounds obtained by Theorem~\ref{T:GD_main} with other works, one has to take into consideration not only parameters $\beta_w, \beta_v$ but also scaling of the data matrices $X$ and $Y$.
	E.g.,~\cite{oymak} works under the assumptions that $\|X_{:i}\| = 1$ for $i\in[N]$ and $\beta_w=1$, which by the properties of Gaussian distribution corresponds exactly to our case $\|X_{:i}\|=\sqrt{d_0}$ and $\beta_w=1/\sqrt{d_0}$.
\end{remark}
\begin{remark}\label{R:improvement}
	Corollary~\ref{T:GD_main} under the LeCun initialization, $\beta_w^2=1/d_0$, $\beta_v^2=1/d_1$, yields exponential loss convergence WHP for 
	$
	d_1 = 
	\tilde{\Omega}(N^{1.25}),
	$
	improving on $d_1 = \Omega(N^2)$ due to~\cite{nguyen}.
	Similarly, under different but equivalent scaling,~\cite[Corollary~2.4]{oymak} shows that overparametrization of the form 
	$
	d_1 = \Omega({N^4}/{d_0^3})
	$ 
	is sufficient for exponential loss convergence, when only the first layer is trained for $d_0 \in [\sqrt{N},N]$, whereas the second layer is fixed. 
	Neglecting the logarithmic factor, one can see that our bound $d_1 = 
	\tilde{\Omega}(N^{1.25})$ improves upon $d_1 = \Omega({N^4}/{d_0^3})$ for $\delta_0 \le 2.75/3\approx .92$, including practical datasets dimensions.
	Moreover, our bound works also for $\delta_0\in (0, 0.5)$ and for any $d_2$ (while they assume $d_2=1$).
	Finally, a simple adaptation of our technique combined with some observations from~\cite{oymak} allows to obtain the bound $d_1=\Omega(N^5/d_0^4)$ in training one layer setup, cf. Appendix~\ref{A:one_layer}.
\end{remark}

The main tool used to obtain Theorem~\ref{T:GD_main} is the following abstract result, which claims that under some technical conditions on $f$ and initialization scheme, the solutions to the DI involving $f$ are WHP close in the supremum norm to the trajectories of the corresponding piecewise interpolated processes.

\begin{theorem}[\cite{bianchi2020convergence}]\label{T:Bianchi_etal}
	For any probability space $(\Xi,\mathcal{F},\mu)$, let $f\colon\RR^{D}\times \Xi \to\RR$ be s.t. for some function $\kappa\colon\RR^D\times\Xi \to \RR_+$, the following conditions are satisfied:
	\begin{enumerate}[leftmargin=2em]
		\item $\forall\;x\in\RR^D,\; \exists\;\varepsilon>0,\; \forall\;z,y\in B(x,\varepsilon) ,\; \forall\;s\in\Xi, \; \| f(y,s)-f(z,s) \| \le \kappa(x,s) \| y-z \|$;\label{item:kappa_lipsch}
		\item $\forall\;x\in\RR^D,\; \exists\;K>0,\; \EE_{\xi\sim\mu}\kappa(x,\xi) \le K(1+\|x\|)$;\label{item:kappa_bounded}
		\item $\forall\;\substack{\mathcal{K}\subset\RR^{D}\text{ s.t.}\\\mathcal{K} \text{ is compact}}\;$, $\sup_{x\in\mathcal{K}}\EE_{\xi\sim\mu} \kappa(x,\xi)^2 < \infty$;\label{item:kappa_square_integrability}
		\item for a.e. $x\in\RR^D$, $f$ is $\mathcal{C}^2$ in some neighborhood of $x$.\label{item:f_C2}
	\end{enumerate}
	Then, for any time horizon $T>0$, the following DI problem is well-defined
	\begin{equation}\label{eq:DI_f}
	\dot{\theta}(t) \in - \partial \EE_{\xi\sim\mu} f(\theta(t), \xi)
	\; 
	\text{ for a.e. } 
	t\in[0,T].
	\end{equation}
	
	Moreover, if $\{\,(\theta_k^{\eta})_{k\in\NN_+}\colon\eta>0\,\}$ is a family of $f$-SGD sequences~\eqref{eq:GD} initialized at random continuously distributed~$\theta_0$, then there exists a set $\mathcal{N}\subset (0,\infty)$ s.t. $\mathcal{N}^c$ is of zero Lebesgue measure and s.t. for every compact set $\mathcal{K}\subset\RR^D$, time horizon $T>0$, and error $\tilde{\varepsilon}>0$,
	\[
	\lim_{\mathcal{N}\ni\eta\to 0^+}
	\PP\big(
	\exists\;\theta\colon[0,T]\to\RR^D
	\text{ solving~\eqref{eq:DI_f},}\;
	\theta(0) \in \mathcal{K},\;
	\sup_{t\in [0,T]} \vert \theta(t) - \bar{\theta}^{\eta}(t) \vert < \tilde{\varepsilon}
	\;
	\big\vert \; \theta_0\in\mathcal{K}
	\big)
	=
	1,
	\]
	where $\bar{\theta}^{\eta}$ is the corresponding random (measurable w.r.t. $(\theta_k^\eta)_{k\in\NN}$) piecewise interpolated process defined, i.e., 
	\begin{equation}\label{eq:interpolated_def}
	\bar{\theta}^{\eta}(t) 
	\bydef 
	\theta^{\eta}_k
	+ (t/\eta-k)(\theta^{\eta}_{k+1} - \theta^{\eta}_{k})
	\end{equation}
	for all $t\in [k\eta, (k+1)\eta), k\in\NN$.
\end{theorem}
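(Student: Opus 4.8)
The plan is to treat this as a differential-inclusion version of the classical ODE (stochastic approximation) method, adapting the theory of stochastic approximations with set-valued mean fields to the fact that here $\theta\mapsto f(\theta,s)$ is only locally Lipschitz. Write $F(x)\bydef\EE_{\xi\sim\mu}f(x,\xi)$. First I would establish well-posedness of~\eqref{eq:DI_f}: conditions~\ref{item:kappa_lipsch}--\ref{item:kappa_bounded} show that $F$ is locally Lipschitz with at most linearly growing local Lipschitz constant, so by Rademacher's theorem $F$ is differentiable off a Lebesgue-null set, and off the (also null, by~\ref{item:f_C2}) bad locus $\Sigma\bydef\{x:f(\cdot,s)\text{ fails to be }\mathcal{C}^1\text{ near }x\text{ for some }s\}$ dominated convergence, with dominating function $\kappa(\cdot,\xi)$ (integrable by~\ref{item:kappa_bounded}), gives $\nabla F(x)=\EE_\xi\nabla f(x,\xi)$. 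Hence $\partial F$ is a nonempty-, convex-, compact-valued, upper semicontinuous set-valued map with linear growth, and existence of a solution to~\eqref{eq:DI_f} on any $[0,T]$ follows from the standard theory of differential inclusions (Aubin--Cellina/Filippov).

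Next, the tracking argument. Fix a compact $\mathcal{K}$, a horizon $T$ and an error $\tilde\varepsilon$, and condition throughout on $\{\theta_0\in\mathcal{K}\}$. Write the recursion~\eqref{eq:GD} as $\theta_{k+1}^\eta=\theta_k^\eta-\eta v_k^\eta$ with $v_k^\eta\in\partial f(\theta_k^\eta,\xi_{k+1})$, let $\mathcal{F}_k$ be the natural filtration, and set $n_\eta\bydef\lceil T/\eta\rceil$. A discrete Gr\"{o}nwall estimate using the linear growth in~\ref{item:kappa_bounded} together with the martingale bound below shows that, with probability tending to $1$ as $\eta\to0^+$, the interpolated process $\bar\theta^\eta$ stays on $[0,T]$ inside a deterministic compact set $\mathcal{K}'\supset\mathcal{K}$ depending only on $\mathcal{K}$, $T$ and the constant $K$. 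On that event decompose $v_k^\eta=\bar v_k^\eta+\delta_k^\eta$ with $\bar v_k^\eta\bydef\EE[v_k^\eta\mid\mathcal{F}_k]$; by the measurable-selection/dominated-convergence argument above, $\bar v_k^\eta\in\partial F(\theta_k^\eta)$ (indeed $\bar v_k^\eta=\nabla F(\theta_k^\eta)$ whenever $\theta_k^\eta\notin\Sigma$). The increments $\eta\delta_k^\eta$ form a square-integrable martingale difference sequence, so by Doob's $L^2$ maximal inequality together with the uniform square integrability~\ref{item:kappa_square_integrability} on $\mathcal{K}'$, $\sup_{0\le t\le T}\big\|\sum_{k\le t/\eta}\eta\delta_k^\eta\big\|\to0$ in probability as $\eta\to0^+$.

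To conclude I would invoke compactness and closedness. On the good event the family $\{\bar\theta^\eta\}$ is uniformly bounded (in $\mathcal{K}'$) and, since $\|v_k^\eta\|\le\kappa(\theta_k^\eta,\xi_{k+1})$ with these $\kappa$'s having finite moments on $\mathcal{K}'$, uniformly Lipschitz on $[0,T]$ up to the vanishing noise term; hence it is relatively compact in $C([0,T],\RR^D)$ by Arzel\`{a}--Ascoli. Any subsequential uniform limit $\theta$ satisfies $\theta(t)-\theta(0)=-\int_0^t u(s)\,ds$ with $u(s)\in\partial F(\theta(s))$ for a.e.\ $s$: this is the differential-inclusion closedness lemma, using that $\partial F$ is upper semicontinuous with convex compact values and that the martingale error disappears in the limit. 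Thus $\theta$ solves~\eqref{eq:DI_f}, $\theta(0)\in\mathcal{K}$, and $\sup_{t\in[0,T]}\|\bar\theta^\eta(t)-\theta(t)\|<\tilde\varepsilon$ along the subsequence; a routine ``subsequence of every subsequence'' argument then upgrades this to convergence of the stated conditional probabilities to $1$.

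The remaining --- and, I expect, genuinely delicate --- point is the restriction to a full-measure set $\mathcal{N}$ of step sizes. For a fixed $\eta$ the recursion~\eqref{eq:GD} is a true inclusion: if some iterate $\theta_k^\eta$ lands on the null set $\Sigma$ of non-$\mathcal{C}^1$ points, $\partial f(\theta_k^\eta,\cdot)$ is genuinely multivalued and the SGD dynamics is not a well-defined map, which obstructs the martingale decomposition above. To remove this I would use the absolute continuity of the law of $\theta_0$: since off $\Sigma$ each step map $x\mapsto x-\eta\nabla f(x,s)$ is $\mathcal{C}^1$, a Fubini argument over the pair $(\eta,\omega)$ shows that for Lebesgue-a.e.\ $\eta$ the event ``$\theta_k^\eta\in\Sigma$ for some $k\le n_\eta$'' has probability zero (preimages of the null set $\Sigma$ under the finitely many smooth maps are null, and the law stays absolutely continuous along the iteration). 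On this set $\mathcal{N}$ the iterates are a.s.\ uniquely defined and remain in the differentiable region, so the preceding argument applies verbatim. I regard this measure-theoretic step --- showing that for a.e.\ $\eta$ the trajectories a.s.\ avoid the nonsmooth locus, so that the inclusion is effectively an equation --- as the main obstacle, since it is exactly what forces the ``for a.e.\ $\eta$'' in the statement and is where assumption~\ref{item:f_C2} and the absolute continuity of $\theta_0$ are genuinely used; the rest is a careful but standard adaptation of the ODE method.
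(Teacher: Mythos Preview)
The paper does not prove this theorem at all: it is stated as a result of \cite{bianchi2020convergence} and used as a black box in the proof of Theorem~\ref{P:abstract_convergence}. There is therefore no ``paper's own proof'' to compare your proposal against.

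That said, your sketch is a reasonable outline of how results of this type are obtained in the stochastic approximation literature (well-posedness of the DI via upper semicontinuity and linear growth of $\partial F$; boundedness on $[0,T]$ via discrete Gr\"onwall; martingale noise vanishing via Doob and~\ref{item:kappa_square_integrability}; Arzel\`a--Ascoli compactness and closedness of solutions to DIs for the limit). Your identification of the role of assumption~\ref{item:f_C2} and the absolute continuity of $\theta_0$ --- namely, to ensure that for Lebesgue-a.e.\ $\eta$ the iterates almost surely avoid the nonsmooth locus so that the recursion is a genuine equation rather than an inclusion --- is also the correct intuition for why the statement carries the ``for a.e.\ $\eta$'' caveat. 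If you want to verify the details, you should consult \cite{bianchi2020convergence} directly, as the present paper simply imports the result.
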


The following theorem built upon \cite{bianchi2020convergence} can be seen as a general tool allowing to pass (when deducing global convergence) from the solutions to the DI~\eqref{eq:dynamics} to the SGD sequences given by~\eqref{eq:GD}.
We state it for general approximators (including, e.g., deep ReLU NN) and general loss functions as we believe it is of independent interest. 
In particular, we drop the assumption on the MSE loss and the NN denoted by $\hat{Y}$.
\begin{theorem}\label{P:abstract_convergence}
	Let $\tilde{\calL}_i \colon\RR^{D}\to\RR$ for $i\in[N]$ be arbitrary locally Lipschitz functions satisfying the chain rule~\eqref{eq:chain_rule} and being $\mathcal{C}^2$ in some neighborhood of a.e. point of $\RR^D$.
	Set $\tilde{\calL} = \sum_{i\in [N]}\tilde{\calL}_i$.
	Assume there exists a nonempty compact sets $Q\subset G\subset \RR^D$, s.t. any solution $\theta\colon [0,\infty)\to\RR^D$ to the DI
	\begin{equation}\label{eq:DI_prop}
	\dot{\theta}(t) 
	\in 
	-\partial\tilde{\calL}(\theta(t))
	\quad \forall\;t\ge 0,
	\end{equation}
	if initialized in $Q$, remains in $G$ and satisfies
	$
	\tilde{\calL}(\theta(t))
	\le 
	\tilde{\calL}(\theta(0))e^{-\gamma t}$
	for all  $t\ge 0$ and some  $\gamma>0$.
	Choose confidence threshold $\delta>0$, error $\varepsilon > 0$, batch size $b\in [N]$, and family $\{ (\theta_k^\eta)_{k\in\NN}\colon \eta>0 \}$ of $\tilde{\calL}^b$-SGD sequences given by~\eqref{eq:GD}, where $\Xi=[N]^b$, $\mu=\operatorname{Unif}([N
	]^b)$ and $\tilde{\calL}^b\colon \RR^D\times [N]^b\to \RR_+$ is given by $\tilde{\calL}^b(\theta, A) = \sum_{i\in A}\tilde{\calL}_i(\theta)$.
	Assume that $\theta_0$ is continuously distributed.

	Then, there exists a step size $\eta_0\in (0,1)$ s.t. for a.e. $\eta\in(0,\eta_0)$,
	$
		\PP(
		\tilde{\calL}(\theta_{k^\ast}^\eta) < \varepsilon
		\,\vert\, \theta_0\in Q)
		\ge 1-\delta
	$
	for 
	$
	k^\ast \le 
	\lfloor
	1 +  
	\frac{N}{\eta b}
	\max(
	0, 
	\gamma^{-1}
	\log(
	{2\varepsilon^{-1}\sup_{\theta\in Q}\tilde{\calL}(\theta)}
	)
	)
	\rfloor.
	$
\end{theorem}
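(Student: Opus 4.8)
### Proof strategy for Theorem~\ref{P:abstract_convergence}

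The plan is to combine the approximation result of Theorem~\ref{T:Bianchi_etal} with the deterministic exponential decay hypothesis on the DI, turning a ``continuous trajectory reaches small loss by time $T$'' statement into ``the interpolated SGD process reaches small loss by step $k^\ast$ with high probability''. First I would verify that the hypotheses of Theorem~\ref{T:Bianchi_etal} are met for $f(\theta,A) = \tilde{\calL}^b(\theta,A) = \sum_{i\in A}\tilde{\calL}_i(\theta)$ with $\Xi = [N]^b$ and $\mu = \operatorname{Unif}([N]^b)$. Since each $\tilde{\calL}_i$ is locally Lipschitz, on a neighborhood $B(x,\varepsilon)$ we may take $\kappa(x,A) = \sum_{i\in A}\mathrm{Lip}_{B(x,\varepsilon)}(\tilde{\calL}_i)$, which is bounded above by $\kappa(x) := \sum_{i\in[N]}\mathrm{Lip}_{B(x,\varepsilon)}(\tilde{\calL}_i)$ independent of $A$; conditions~\ref{item:kappa_lipsch}--\ref{item:kappa_square_integrability} then follow because $\Xi$ is finite (so every expectation is a finite average and all moments are finite), and condition~\ref{item:f_C2} is exactly the assumed $\mathcal{C}^2$-a.e.\ property of the $\tilde{\calL}_i$. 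One also checks that $\EE_{A\sim\mu}\tilde{\calL}^b(\theta,A) = \frac{b}{N}\tilde{\calL}(\theta)$, so the averaged DI~\eqref{eq:DI_f} is, up to the positive time-rescaling factor $b/N$, exactly the DI~\eqref{eq:DI_prop}; in particular a curve $\theta$ solves~\eqref{eq:DI_f} iff $t\mapsto\theta(Nt/b)$ solves~\eqref{eq:DI_prop}.

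Next I would fix the target. Set $M := \sup_{\theta\in Q}\tilde{\calL}(\theta)$ (finite by compactness and continuity), and choose the continuous-time horizon $T_0 := \gamma^{-1}\log(2M/\varepsilon)$ when $M > \varepsilon/2$ and $T_0 = 0$ otherwise, so that the decay hypothesis gives $\tilde{\calL}(\theta(T_0))\le M e^{-\gamma T_0}\le \varepsilon/2$ for every solution of~\eqref{eq:DI_prop} started in $Q$. By continuity of $\tilde{\calL}$ along the arc and the exponential bound, there is an open neighborhood — uniformly, because all such arcs stay in the fixed compact set $G$ where $\tilde{\calL}$ is uniformly continuous — of width $\tilde\varepsilon>0$ around each such $\theta(T_0)$ on which $\tilde{\calL}<\varepsilon$. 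The compactness of $Q$ and $G$ lets me pick a single $\tilde\varepsilon$ that works for all initializations in $Q$ and all solutions. I would then apply Theorem~\ref{T:Bianchi_etal} with this $\tilde\varepsilon$, the compact set $\mathcal{K} = Q$, and time horizon $T = (N/b)\,T_0$ (accounting for the $b/N$ rescaling between the two DIs): there is a null-complement set $\mathcal{N}\subset(0,\infty)$ such that for $\eta\in\mathcal{N}$ small enough, with conditional probability at least $1-\delta$ (given $\theta_0\in Q$) there exists a solution $\theta$ of~\eqref{eq:DI_f} with $\theta(0)\in Q$ and $\sup_{t\in[0,T]}|\theta(t)-\bar\theta^\eta(t)| < \tilde\varepsilon$. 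Rescaling, $\theta$ corresponds to a solution of~\eqref{eq:DI_prop} that stays in $G$, so $\tilde{\calL}(\theta(T)) = \tilde{\calL}(\theta_{\mathrm{DI}\eqref{eq:DI_prop}}(T_0))\le\varepsilon/2$, and the $\tilde\varepsilon$-closeness plus uniform continuity of $\tilde{\calL}$ on $G$ forces $\tilde{\calL}(\bar\theta^\eta(T)) < \varepsilon$.

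Finally I would translate the interpolated-process statement back to an SGD iterate. The index $k^\ast$ is chosen so that $k^\ast\eta \ge T = (N/b)T_0$, i.e.\ $k^\ast = \lfloor 1 + (N/(\eta b))\max(0,\gamma^{-1}\log(2M/\varepsilon))\rfloor$, which is exactly the bound in the statement; for this $k^\ast$ the time $T$ lies in an interval $[k\eta,(k+1)\eta)$ with $k\le k^\ast$, and since the loss along the DI solution is monotone nonincreasing while $\bar\theta^\eta$ interpolates linearly between consecutive iterates, a small additional argument — shrinking $\eta_0$ if needed and using the Lipschitz control on $\tilde{\calL}$ over $G$ together with the uniform bound $\|\theta^\eta_{k+1}-\theta^\eta_k\|\le\eta\sup_G\kappa$ coming from~\eqref{eq:GD} — shows $\tilde{\calL}(\theta^\eta_{k^\ast}) < \varepsilon$ as well. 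I expect the main obstacle to be precisely this last bookkeeping: ensuring the high-probability event also pins the iterate $\theta^\eta_{k^\ast}$ (not merely the interpolation $\bar\theta^\eta$ at the exact time $T$) inside $G$ and inside the sublevel set $\{\tilde{\calL}<\varepsilon\}$, which requires a uniform-in-$\eta$ bound on the SGD increments and a careful choice of $\tilde\varepsilon$ absorbing one extra increment. The probabilistic content is entirely inherited from Theorem~\ref{T:Bianchi_etal}; the rest is deterministic continuity and compactness.
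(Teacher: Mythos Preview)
Your overall architecture is right --- rescale time, pick a horizon $T$ so the DI loss is at most $\varepsilon/2$, invoke Theorem~\ref{T:Bianchi_etal} to put the interpolated SGD process $\tilde{\varepsilon}$-close to some DI solution, then convert to the iterate $\theta^\eta_{k^\ast}$ using a Lipschitz bound. But there is a genuine gap in your verification of the hypotheses of Theorem~\ref{T:Bianchi_etal}.

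Condition~\ref{item:kappa_bounded} is a \emph{linear growth} condition: one needs a single constant $K$ with $\EE_{\xi}\kappa(x,\xi)\le K(1+\|x\|)$ for all $x\in\RR^D$. Finiteness of $\Xi=[N]^b$ only tells you the expectation is a finite average; it says nothing about the growth of $\kappa(x,\cdot)$ as $\|x\|\to\infty$. For the intended application the loss is polynomial in $\theta$, so the local Lipschitz constant grows like a power strictly larger than one and condition~\ref{item:kappa_bounded} fails outright. The paper flags this explicitly in the proof sketch (``in general $\tilde{\calL}$ does not satisfy the assumptions of Theorem~\ref{T:Bianchi_etal}'') and fixes it by multiplying $\tilde{\calL}^b$ by a smooth cutoff $\zeta$ equal to $1$ on an enlargement $G_\Delta$ of $G$ and vanishing outside $G_{2\Delta}$. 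The modified $f(\theta,A)=\tilde{\calL}^b(\theta,A)\zeta(\theta)$ is compactly supported, hence globally Lipschitz with a constant $L_A$ independent of $x$, and then conditions~\ref{item:kappa_lipsch}--\ref{item:kappa_square_integrability} are all immediate.

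This cutoff is not free: the SGD sequence for $f$ is not the given $\tilde{\calL}^b$-SGD sequence, and you must argue they coincide up to step $k^\ast$. The paper does this by introducing a coupled sequence $\chi^\eta_k$ that agrees with $\theta^\eta_k$ until the first exit from $G_\Delta$, applies Theorem~\ref{T:Bianchi_etal} to $\chi^\eta$, and then uses that the approximating DI solution $\chi(\cdot)$ stays in $G$ (since on $G_\Delta$ the modified and original DIs coincide, and the original is confined to $G$ by hypothesis) together with the $\Delta$-closeness to conclude $\bar{\chi}^\eta$ stays in $G_\Delta$ on $[0,T]$, hence the exit time exceeds $k^\ast$ and $\chi^\eta_{k^\ast}=\theta^\eta_{k^\ast}$. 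Your last paragraph anticipates bookkeeping of this flavor but for a different reason; the cutoff step is the missing idea that makes Theorem~\ref{T:Bianchi_etal} applicable at all.
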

\begin{proof}[Proof sketch of Theorem~\ref{P:abstract_convergence}]
	Let $l\bydef \sup_{\theta\in Q}\tilde{\calL}(\theta)$ and 
	\[
	T^\ast
	\bydef 
	\inf\{\, t\ge 0\colon  le^{-\gamma t} \le \varepsilon/2 \,\} 
	=
	\max\big(
	0, 
	\frac{\log(2l/\varepsilon)}{\gamma}
	\big) 
	\]
	so that all solutions to the DI~\eqref{eq:DI_prop} initialized in the set $Q$ fall to $\tilde{\calL}^{-1}([0,\varepsilon/2])$ before time $T^\ast$ (and clearly never escape it).
	Set $L\bydef \sup\{\,\| v\|\colon v\in \partial L(\theta), \theta\in G\,\}$.

	If we could apply Theorem~\ref{T:Bianchi_etal} with the family $\{(\theta_k^\eta)_{k\in\NN} \colon \eta>0\}$, $\tilde{\varepsilon}=\min(\varepsilon/2L,1)$ and $T=1+\frac{N}{b}T^\ast$, it would yield that for any $\delta\in(0,1)$, there exists $\eta_0\in(0,1)$ s.t. for a.e. $\eta\in (0,\eta_0)$,
	\[
		\PP\big(
			\exists\; \theta\;
			\text{solving}\; \dot{\theta}(t)\in -\partial\EE\tilde{\calL}^b(\theta(t),A_b) 
			\;\text{s.t. }
			\theta(0)\in Q
			\;\text{ and }\;
			\sup_{t\in [0, T]} \vert \theta(t)-\bar{\theta}^\eta(t)\vert < \tilde{\varepsilon}
			\;\big\vert\;
			\theta_0\in Q
		\big)
		\ge 
		1-\delta.
	\]
	Recall that $A_b\sim\operatorname{Unif}([N]^b)$ and note that $\EE \tilde{\calL}^b(\cdot, A_b) = \frac{b}{N}\tilde{\calL}(\cdot)$, whence if $\theta(t)$ solves $\dot{\theta}(t)\in -\partial\EE\tilde{\calL}^b(\theta(t), A_b)$, then $\theta(tN/b)$ solves~\eqref{eq:DI_prop}. 
	In particular $\tilde{\calL}(\theta(t))\le \varepsilon/2$ for any $t\ge \frac{N}{b}T^\ast$.
	Therefore, as for $\eta\in(0,\eta_0)$ it holds that $\frac{N}{b}T^\ast \le \eta k^\ast \le T$, then for a.e. $\eta\in(0,\eta_0)$,
	\[
		\tilde{\calL}(\theta_{k^\ast}^\eta)
		=
		\tilde{\calL}(\bar{\theta}^\eta(\eta k^\ast))
		\le 
		\tilde{\calL}(\theta(\eta k^\ast))
		+
		\tilde{\varepsilon}L
		\le 
		\varepsilon
	\]
	with probability at least $1-\delta$ conditioned on $\theta_0\in Q$.

	However, in general $\tilde{\calL}$ does not satisfy the assumptions of Theorem~\ref{T:Bianchi_etal}.
	In order to overcome this, we need to consider the set $G$ and modify $\tilde{\calL}$ outside of some neighborhood containing $G$, so that it becomes globally Lipschitz.
	As all solutions to~\eqref{eq:DI_prop} initialized in $Q$ remain in $G$, then it turns out that such modification does not conflict with the argument above, as is discussed in detail in~Appendix~\ref{A:pf_abstract_convergence}.
\end{proof}
We are ready to prove the main result of this section.
\begin{proof}[Proof of Theorem~\ref{T:GD_main}]
	For $\theta=(W,V)\in\RR^D$ and $\tilde{X}\in \RR^{N\times d_0}$, let 
	$\alpha_0(\tilde{X}, \theta)\bydef \sigma_{min}(\phi(\tilde{X}W)^T)$ 
	and 
	$\calL(\tilde{X}, \theta) \bydef \frac{1}{2}\|Y-\phi(\tilde{X}W)V\|_F^2$.
	Define
	\begin{align*}
	Q(\tilde{X}) 
	\bydef
	\{\,
	\theta\in\RR^D\colon \;&
	F(\theta,\tilde{X},Y) < \tfrac{1}{8},\quad
	\alpha_0(\tilde{X}, \theta) \ge  
	\sqrt{c(\delta_0)d_0d_1}\beta_w,
	\\&
	\calL(\tilde{X}, \theta)\le  Cd_0d_1d_2\beta_w^2\beta_v^2N\log(N),\quad
	\|\theta\| \le C(\sqrt{d_0d_1}\beta_w+\sqrt{d_1d_2}\beta_v)
    \},
	\end{align*}
	where $c(\delta_0)$ is the same constant as in Theorem~\ref{T:alpha_0_init}, $F$ is defined as in Theorem~\ref{T:deterministic_convergence_guarantee},~\eqref{eq:conditions_original_params}, and $C>0$ is some big enough absolute constant such that 
	\[
	\PP(\theta_0\in Q(X)) 
	\ge 
	1-
	\exp\big( 
	-
	\frac{d_0}{N}
	\cdot 
	\left[
	\frac{N^{2.5}}{d_0\beta_w^2} 
	\right]^{1/{(\rho+1)}}
	\big)-\calO(N^2)\exp( 
	-\Omega(N^{\delta_0/2}) 
	)
	-\exp(
	-\Omega(d_2\log N)
	),
	\]
	which is possible in virtue of Theorem~\ref{T:alpha_0_init} and Lemma~\ref{L:init_loss}, cf. Proof of Corollary~\ref{C:g_flow_convergence}.
	For each $\tilde{X}$, let 
	$u \bydef u(\tilde{X}, \theta) = {2\sqrt{\calL}(\tilde{X}, \theta)}/{\alpha_0^2(\tilde{X}, \theta)}$ 
	and
	$
	U(\tilde{X})\bydef 
	\sup_{\theta \in {Q}(\tilde{X})}\{\,
	\sqrt{2}\|\tilde{X}\|_{op}\|\theta\| 
	u\cdot 
	e^{\sqrt{2}\|\tilde{X}\|_{op}u}\,\},
	$
	so that any solution to the DI $\dot{\theta}\in -\partial\calL(\theta)$, if initialized in ${Q}(\tilde{X})$, remains in the set ${G}(\tilde{X}) = B({Q}(\tilde{X}), U(\tilde{X}))$ in virtue of Lemma~\ref{L:W_increments_bound} (cf., Proof of Theorem~\ref{T:deterministic_convergence_guarantee}).
	Moreover, $U(\tilde{X})<\infty$ by compactness of ${Q}(\tilde{X})$, whence ${G}(\tilde{X})$ is compact. 
	
	For each $\tilde{X}$, apply Theorem~\ref{P:abstract_convergence} with $\tilde{\calL}(\cdot)=\calL(\tilde{X},\cdot)$, $Q={Q}(\tilde{X})$, $\gamma=\inf_{\theta\in {Q}(\tilde{X})}\alpha_0^2(\tilde{X},\theta)$, $\delta = \PP(\theta_0 \notin Q(\tilde{X}))$ and $G={G}(\tilde{X})$, to get that for some $\eta_0\in (0,1)$ and a.e. $\eta\in(0,\eta_0)$,
	\[
		\PP(\calL(\tilde{X}, \theta_{k^\ast}^\eta) < \varepsilon \,\vert\, \theta_0 \in Q(\tilde{X}))
		\ge 
		\PP(\theta_0 \in Q(\tilde{X})),
	\]
	where $k^\ast$ is as in Theorem~\ref{P:abstract_convergence} and whence bounded as in~\eqref{eq:k_ast_SGD} by the definition of $Q(\tilde{X})$.
	Note that $\eta_0$ depends on $\tilde{X}$ only.
	
	Using the inequality $\PP(A\,\vert\,B) \le \PP(A)/\PP(B)$, multiplying both sides by $\delta$, integrating w.r.t. the distribution of~$X$ and estimating $(1-\delta)^2\ge 1-2\delta$, we get that 
	\[
		\PP(\exists\,{\eta_0\in(0,1)}\;\text{s.t. for a.e.}\;{\eta\in(0,\eta_0)},\; \calL(\theta_{k^\ast}^\eta) < \varepsilon)
	\]
	is at least $1-2\PP(\theta_0\notin Q(X))$, as desired.
\end{proof}

\section{Numerical Experiments}\label{sec:num_exps}
We present some numerical results illustrating two training setups -- when both layers $(W,V)$ are trained and when $W$ is trained only, complementing the experiments from~\cite[Section~4]{oymak}.

\subsection{Setup}\label{sec:num_setup}
Data is generated per single experimental run as follows: $N=200$, rows of $X$ are i.i.d. from the unit sphere, $d_2=1$ and labels $Y$ are randomly chosen s.t. half are set to~$1$ and the other half to~$-1$. 
In the first training setup $W$ has i.i.d. $\mathcal{N}(0,1)$ entries and $V$ has i.i.d. $\mathcal{N}(0,1/d_1)$ entries.
In the second training setup $W$ is as before and $V$ is fixed -- half of the entries are $1/\sqrt{d_1}$ and half are $-1/\sqrt{d_1}$ as in~\cite{oymak}. 
In all of the experiments we vary $d_0, d_1$. The NNs are implemented within the Pytorch framework. We used the standard SGD optimizer (in fact, GD as the batch size is set to $200$) with momentum ($0.9$). The learning rate differs on the training setup and is set to $0.15$ ($W$ only training), or $0.002$ ($(W,V)$ training).
\subsection{Results}
Figure~\ref{fig:NN2conv} illustrates the probability of convergence towards a global minimum depending on the network configuration. 
The probability is approximated based on $10$ independent runs and $d_0,d_1$ grid $2$ spaced, the convergence criterion is $\|\hat{y}-y\|/\|y\|<2.5e-03$ as in~\cite{oymak}. 
Compared with~\cite{oymak}, there seems to be no difference between training setups in terms of convergence probability and it is supposed that the overparametrization $N/d_0$ is sufficient for the global SGD convergence.
In Figures~\ref{fig:NN2corners},~\ref{fig:NNcorners} we present the average number of numerical zeros (absolute values below $1e-08$) in the preactivation layer at convergence.
Our investigation reveals an SGD optimization bias in both setups toward global minima with positive number of zero preactivation neurons (i.e., ReLU non differentiability points). 
In fact, these seem to be points of intersection of several ReLU activation pattern regions, as there are many zeros found. 
Note the different scales of the two plots -- the $W$ only training setup results in order of magnitude more numerical zeros than in the case of $(W,V)$ training.
This in particular suggests that the training trajectories might cross many different ReLU regions and thus they would be far from the linear regime described in~\cite{cohen}.
Below, we investigate further this phenomenon.
 

\begin{figure}[t!]
\centering
\begin{subfigure}[t]{0.3\textwidth}
\includegraphics[trim={20 0 20 20},clip,width=0.9\linewidth]{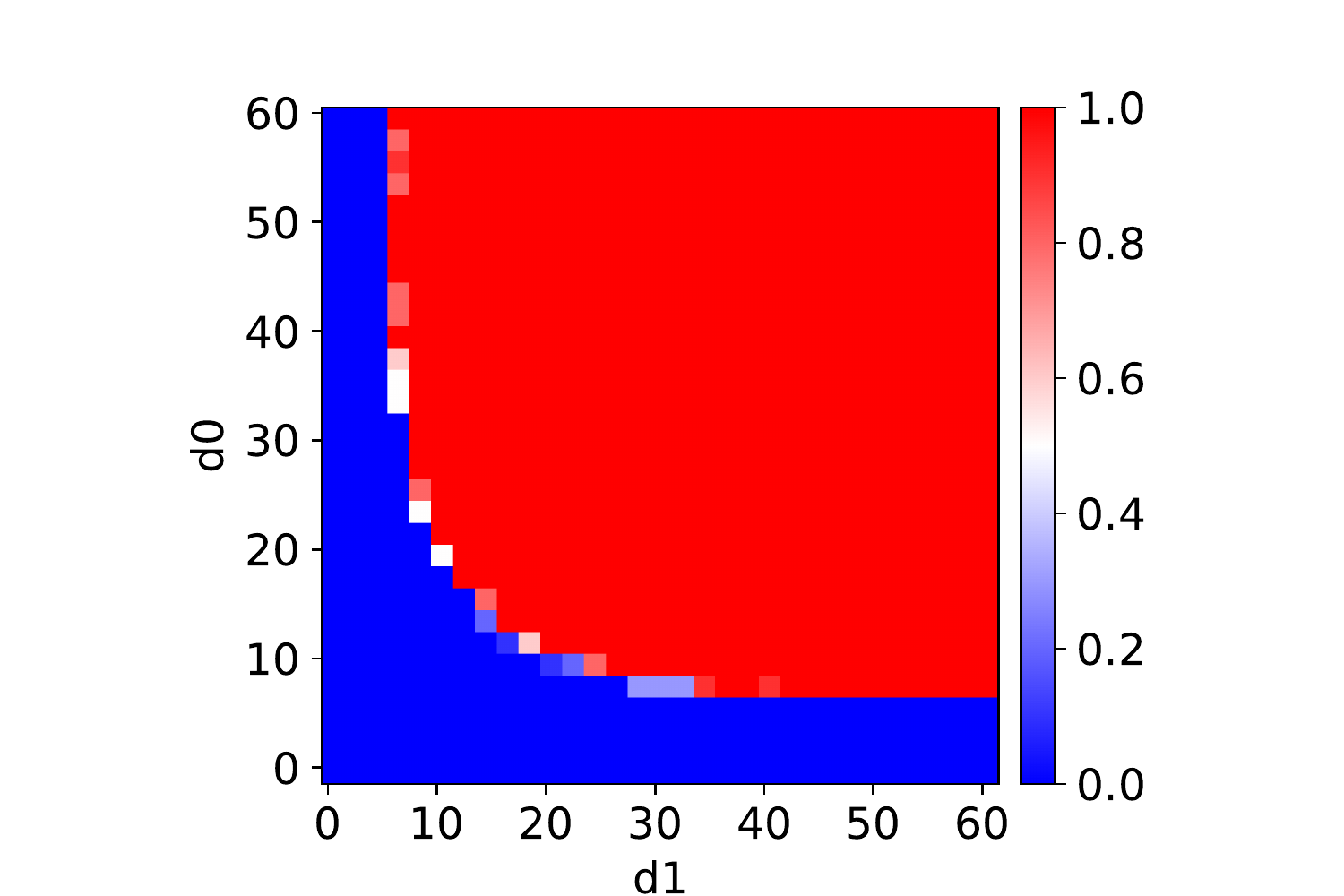}

\caption{Probability of convergence, $(W,V)$ training}
\label{fig:NN2conv}
\end{subfigure}
\begin{subfigure}[t]{0.3\textwidth}
\includegraphics[trim={20 0 20 20},clip,width=0.9\linewidth]{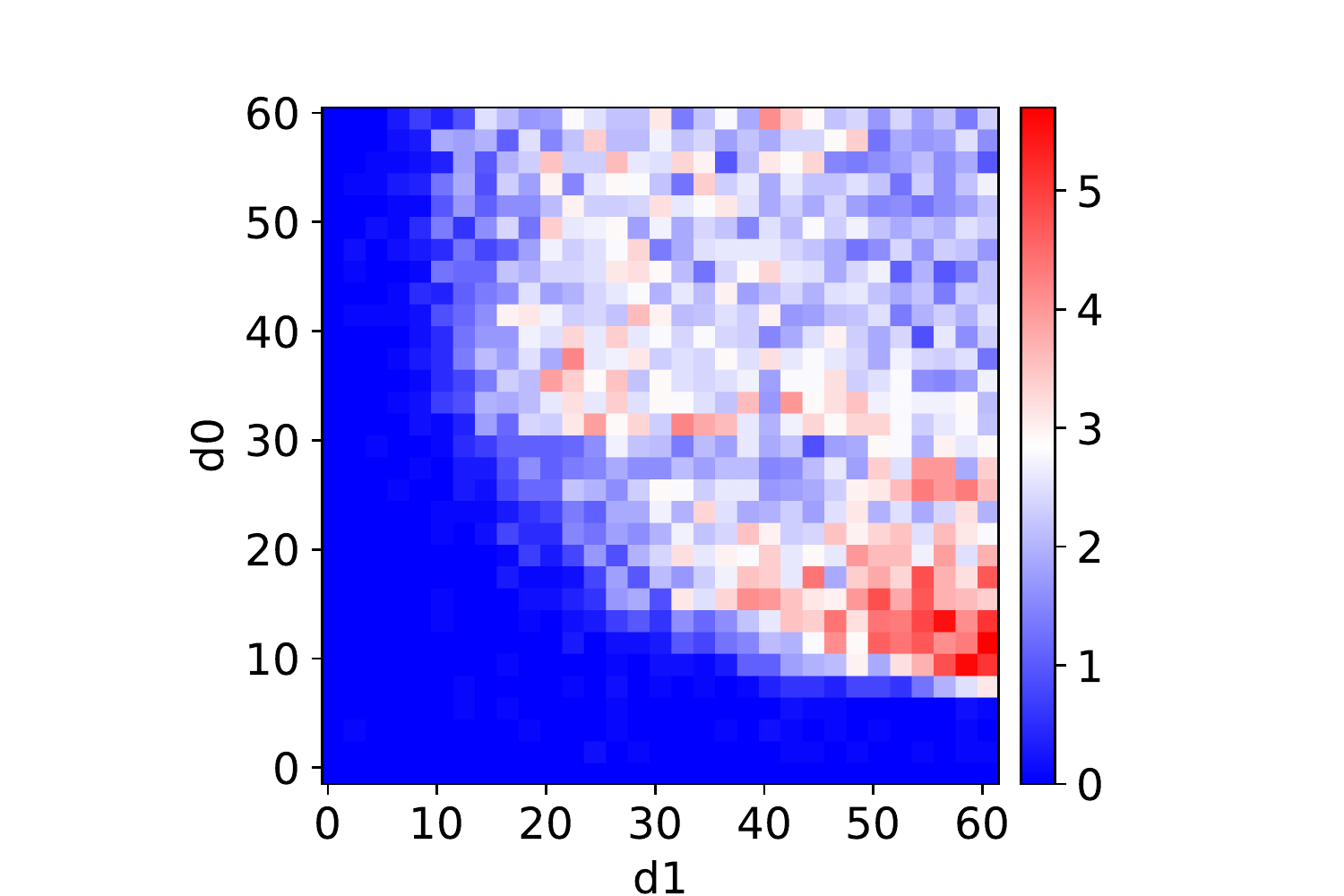} 
\caption{Avg. number of num. zeros in final preactivation, $(W,V)$ training}
\label{fig:NN2corners}
\end{subfigure}
\begin{subfigure}[t]{0.3\textwidth}
\includegraphics[trim={20 0 20 20},clip,width=0.9\linewidth]{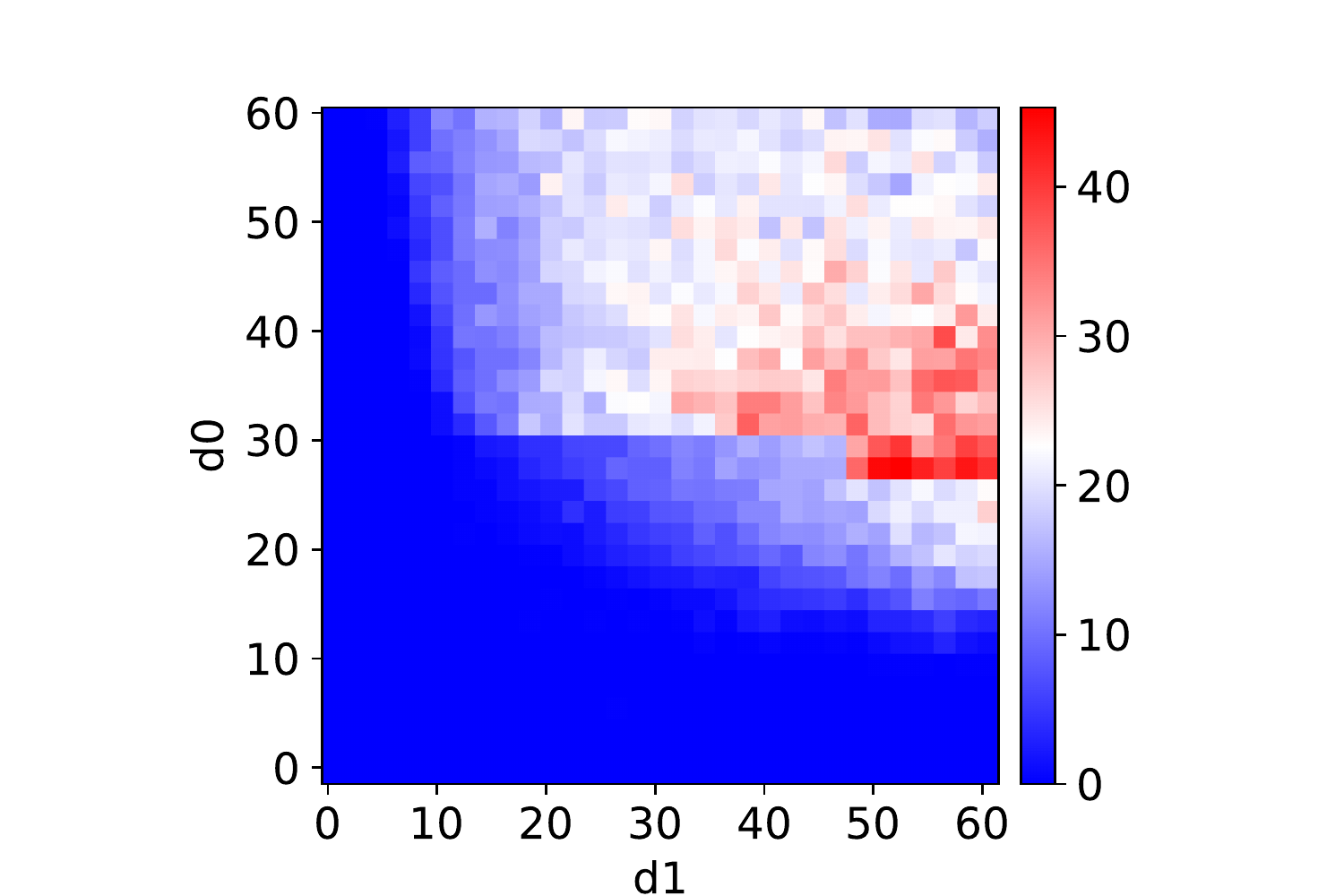} 
\caption{Avg. number of num. zeros in final preactivation, $W$ only}
\label{fig:NNcorners}
\end{subfigure}
%
%
\caption{Numerical results for both training setups after $50k$ SGD iterates.}
\vspace{-15pt}
\end{figure}

We now turn to Figures~\ref{fig:singlelayer} in which we analyze the training trajectories for both setups.
It is seen that despite being close to global minima (loss is already close to $0$ as seen on Figures~\ref{fig:loss1},~\ref{fig:loss2}), the number of numerical zeros in the preactivation pattern stays positive and is confined to a small range of values depending on the studied overparametrization level as presented on Figures~\ref{fig:zeros1},~\ref{fig:zeros2}. 
This confirms the observation above that the GD scheme prefers minima located close to the boundaries between several ReLU activation patterns. 
In fact, these seem to be corner points connecting several regions. 
We are not aware of any explanation of such a phenomenon in the literature. Moreover, despite being close to global minima, the activation patterns keep changing while performing the consecutive GD iterates before eventually stabilizing in some region. 
At which iteration that happens, depends on the overparametrization level as presented on Figures~\ref{fig:hamming1},~\ref{fig:hamming2}. This, in particular, demonstrates that most of the shallow ReLU networks training scheme happens in the nonlinear regime, i.e., it is not confined to a single ReLU activation region until the very end stage of training. 
The activation regions keep changing in a nonlinear fashion. Hence, the problem of studying the convergence of ReLU nets cannot be simplified to a study within a linear regime as suggested in~\cite{cohen}.

Finally, on Figures~\ref{fig:loss_change1},~\ref{fig:loss_change2},~\ref{fig:diff_norm1},~\ref{fig:diff_norm2} we investigated the relative loss change $\Delta\mathcal{L} = \frac{|\mathcal{L}(\theta_k) - \mathcal{L}(\theta_{k-1})|}{\mathcal{L}(\theta_{k-1})}$ and the relative differential change measured in the operator norm $\Delta D = \frac{\|D Y_k - D Y_{k-1}\|_{op}}{\|D Y_{k-1}\|_{op}}$. It is visible that the relative differential change is by order of magnitude larger than the relative loss change,  suggesting that the training for moderate and larger overparametrization levels is far from the lazy training regime studied in~\cite{lazy} characterized by $\Delta\mathcal{L}\gg \Delta D$.
\begin{figure}[h!]
	\centering
	%
	%
	\begin{subfigure}[t]{0.19\textwidth}
		\includegraphics[width=1\linewidth]{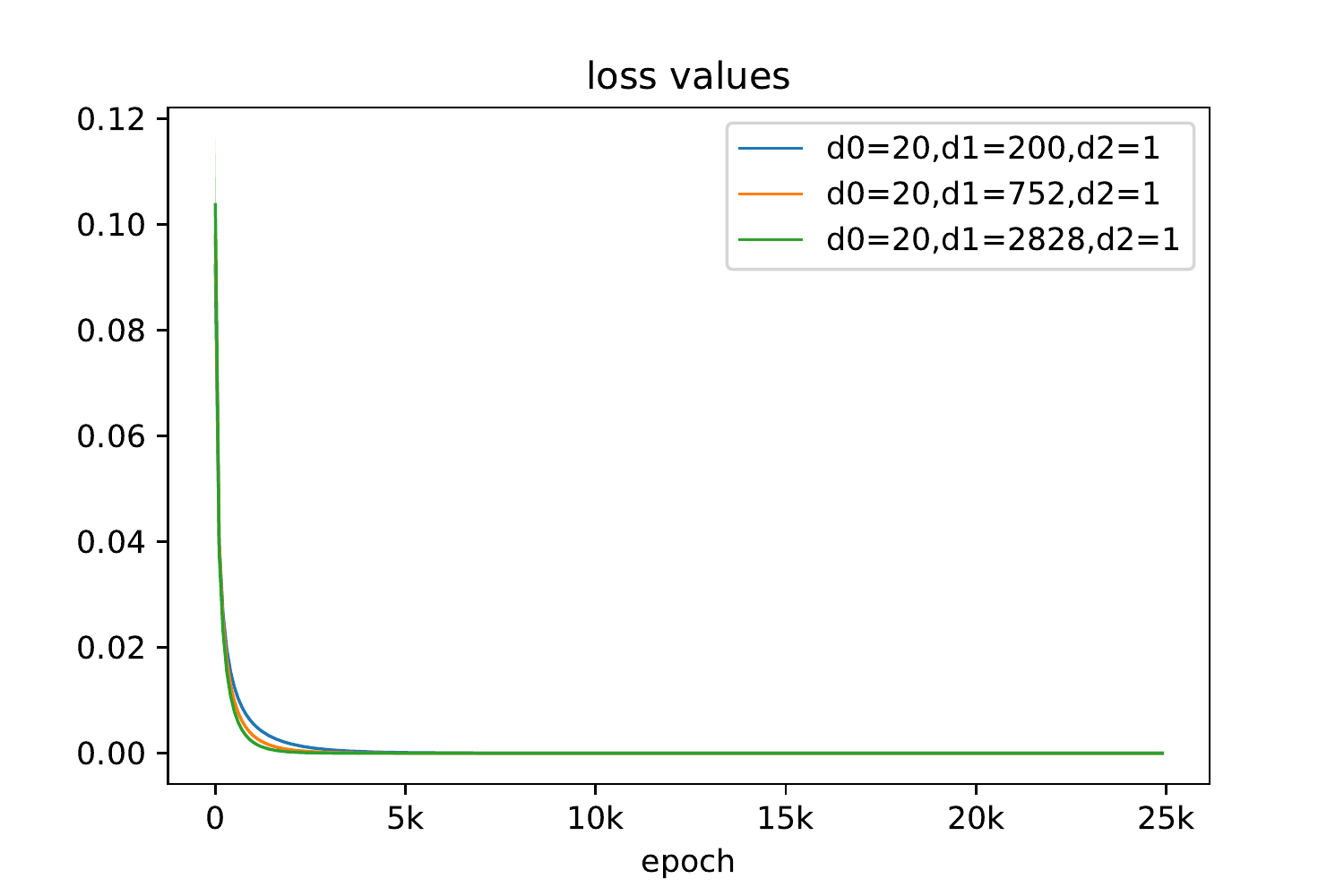} 
		\caption{Loss values}
		\label{fig:loss1}
	\end{subfigure}
	\begin{subfigure}[t]{0.19\textwidth}
		\includegraphics[width=1\linewidth]{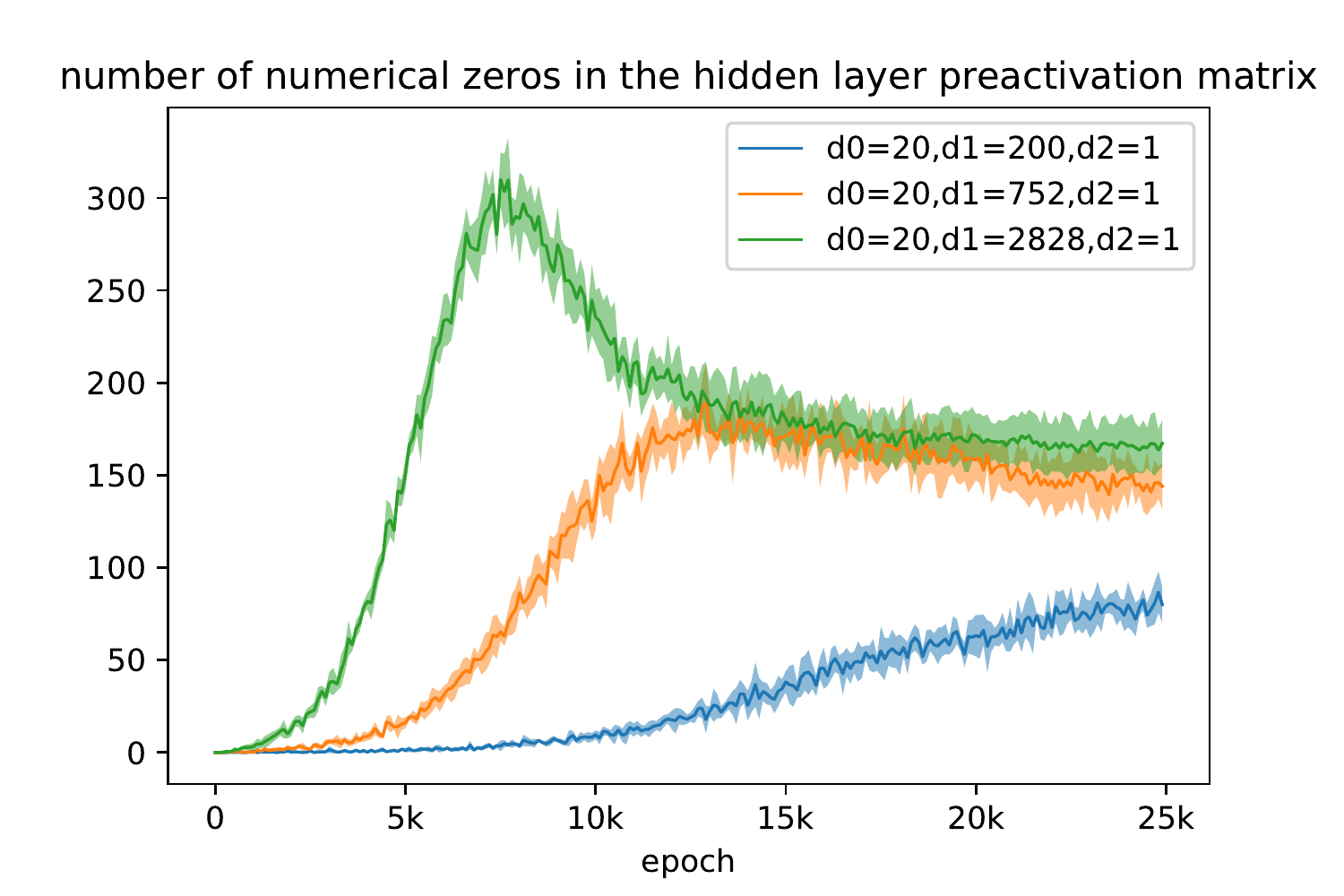} 
		\caption{Number of numerical zeros (threshold $1e-08$) in the preactivation}
		\label{fig:zeros1}
	\end{subfigure}
	\begin{subfigure}[t]{0.19\textwidth}
		\includegraphics[width=1\linewidth]{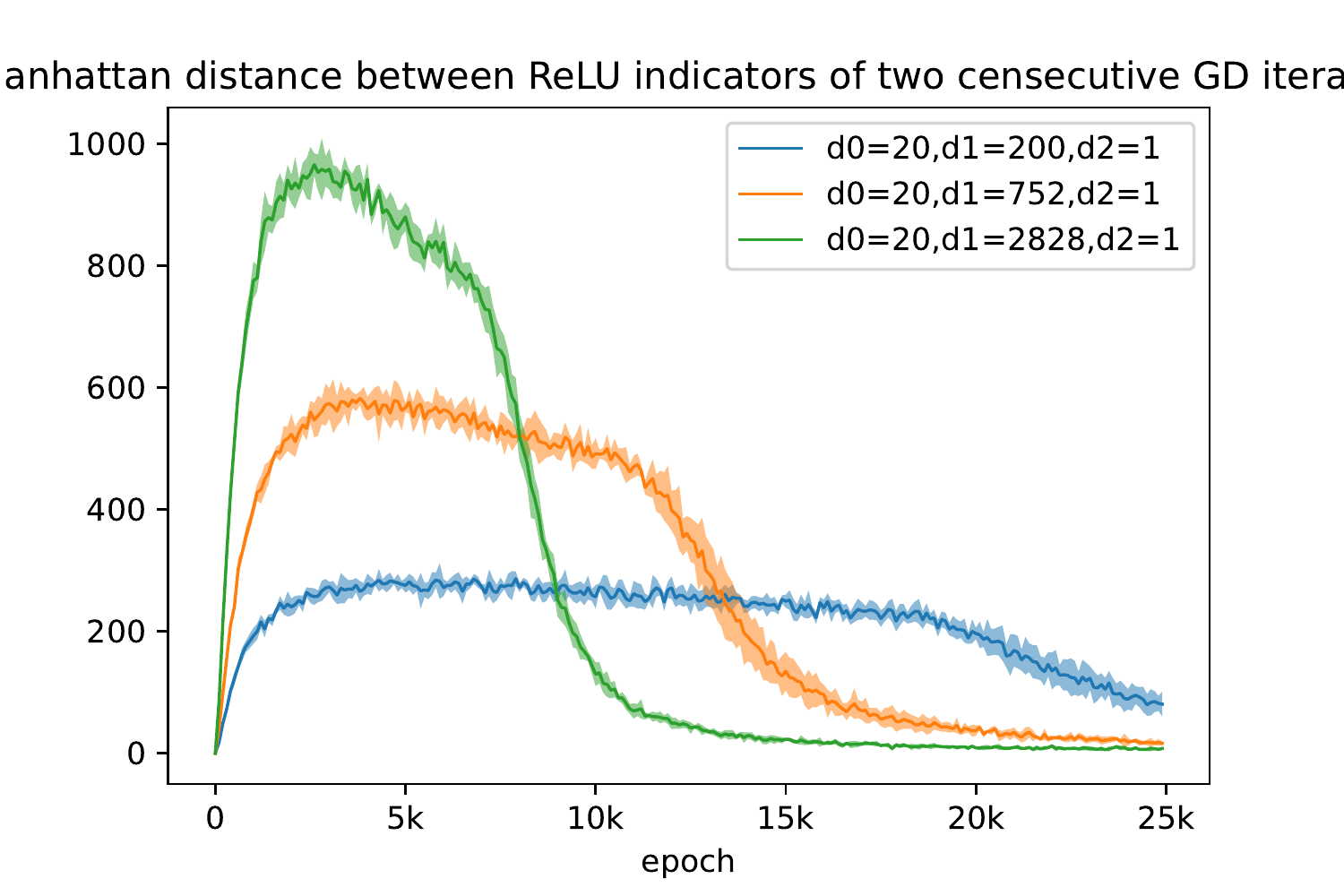} 
		\caption{Hamming distance between ReLU indicators of consecutive iterates}
		\label{fig:hamming1}
	\end{subfigure}
	\begin{subfigure}[t]{0.19\textwidth}
		\includegraphics[width=1\linewidth]{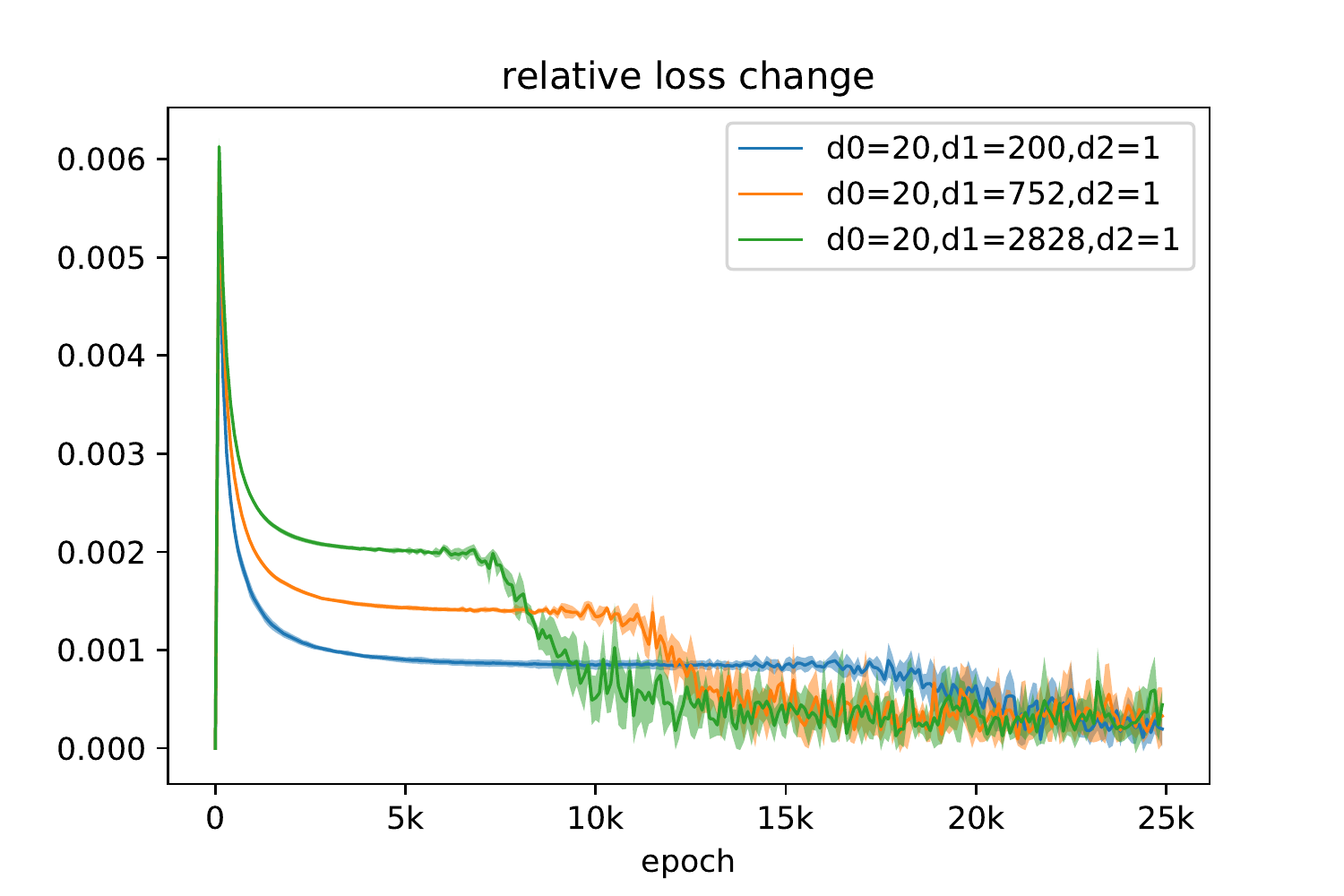}
		\caption{$\Delta \mathcal{L}$}
		\label{fig:loss_change1}
	\end{subfigure}
	\begin{subfigure}[t]{0.19\textwidth}
		\includegraphics[width=1\linewidth]{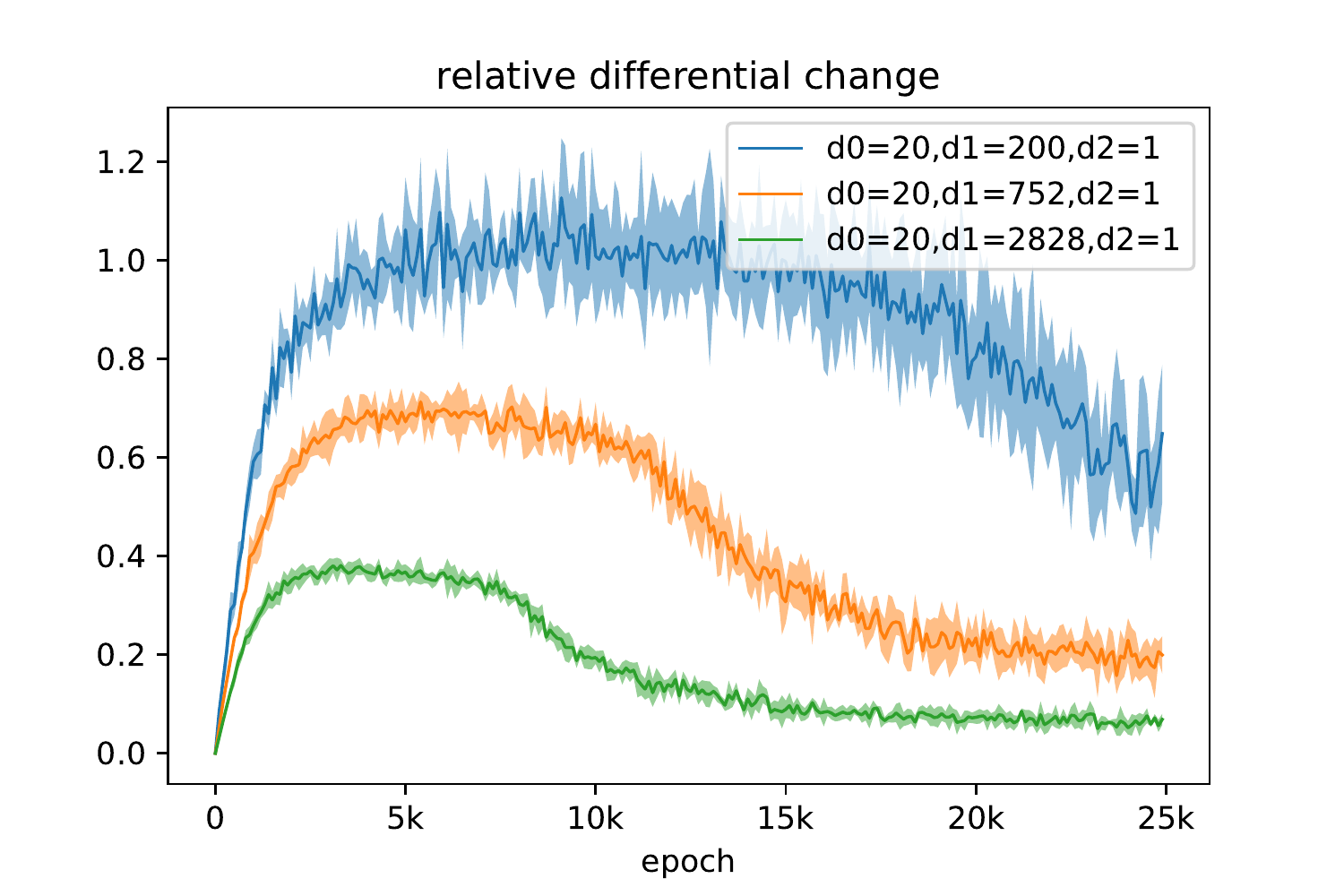}
		\caption{$\Delta Y$}
		\label{fig:diff_norm1}
	\end{subfigure}\\
\begin{subfigure}[t]{0.19\textwidth}
	\includegraphics[width=1\linewidth]{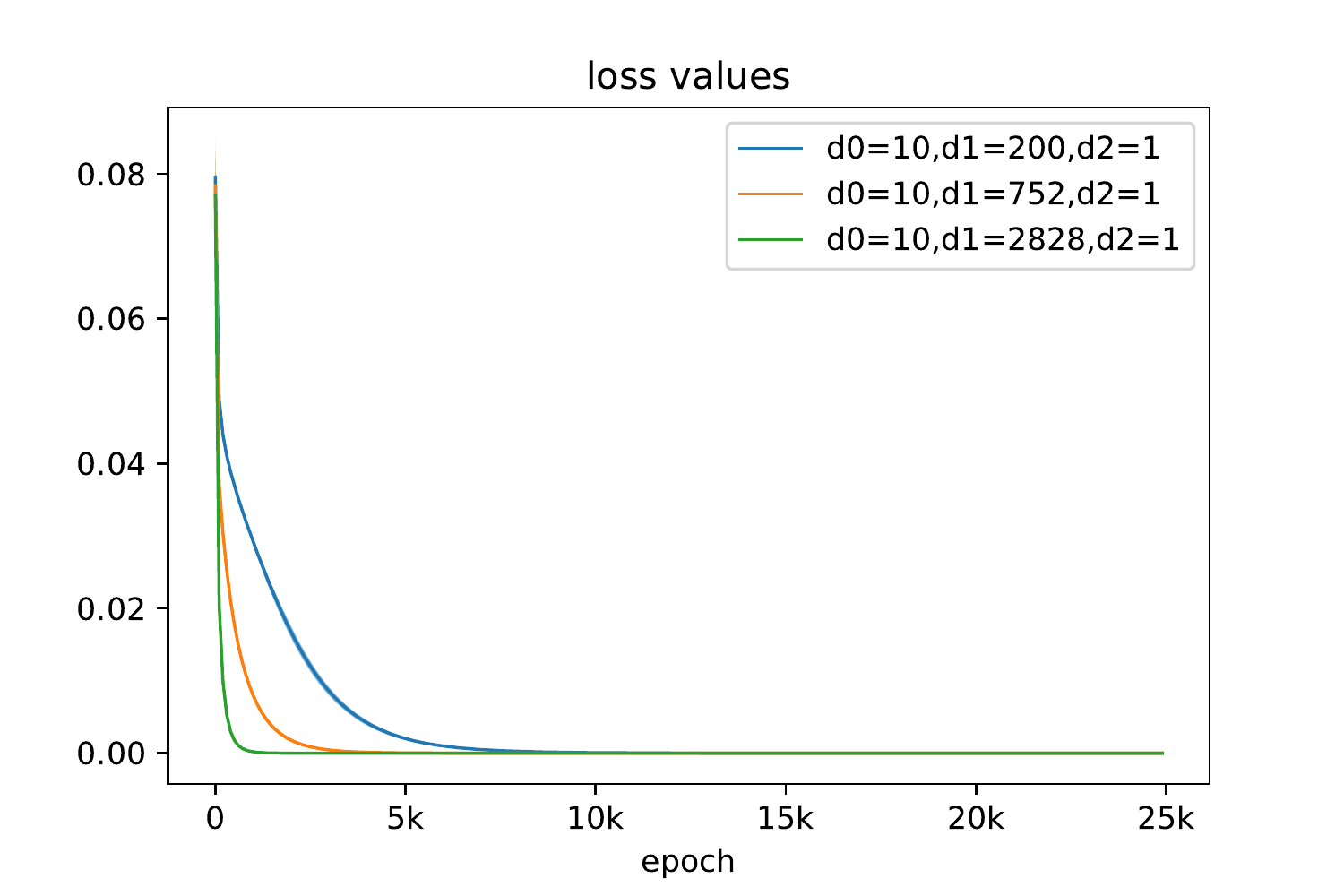} 
	\caption{Loss values}
	\label{fig:loss2}
\end{subfigure}
\begin{subfigure}[t]{0.19\textwidth}
	\includegraphics[width=1\linewidth]{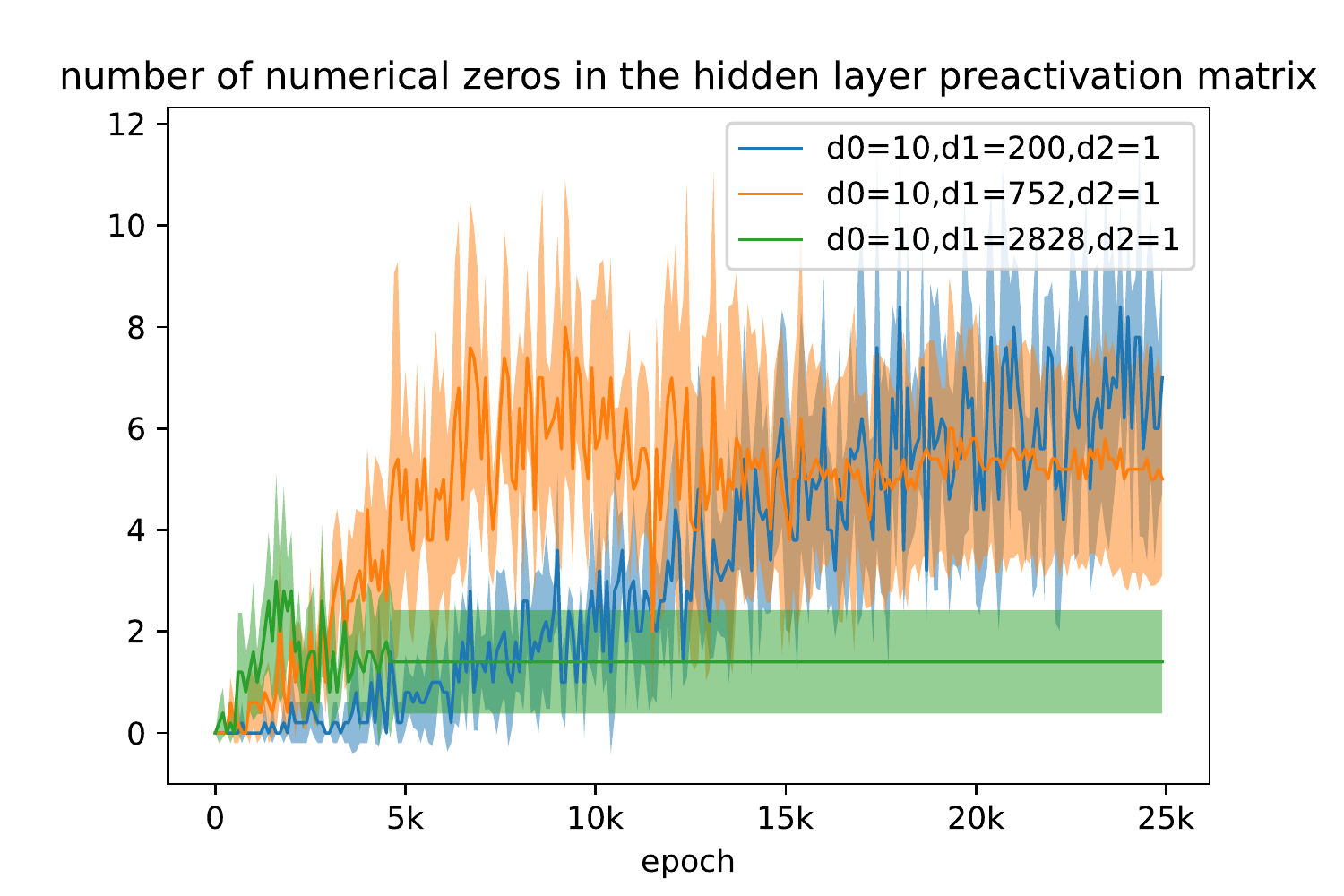} 
	\caption{Number of numerical zeros}
	\label{fig:zeros2}
\end{subfigure}
\begin{subfigure}[t]{0.19\textwidth}
	\includegraphics[width=1\linewidth]{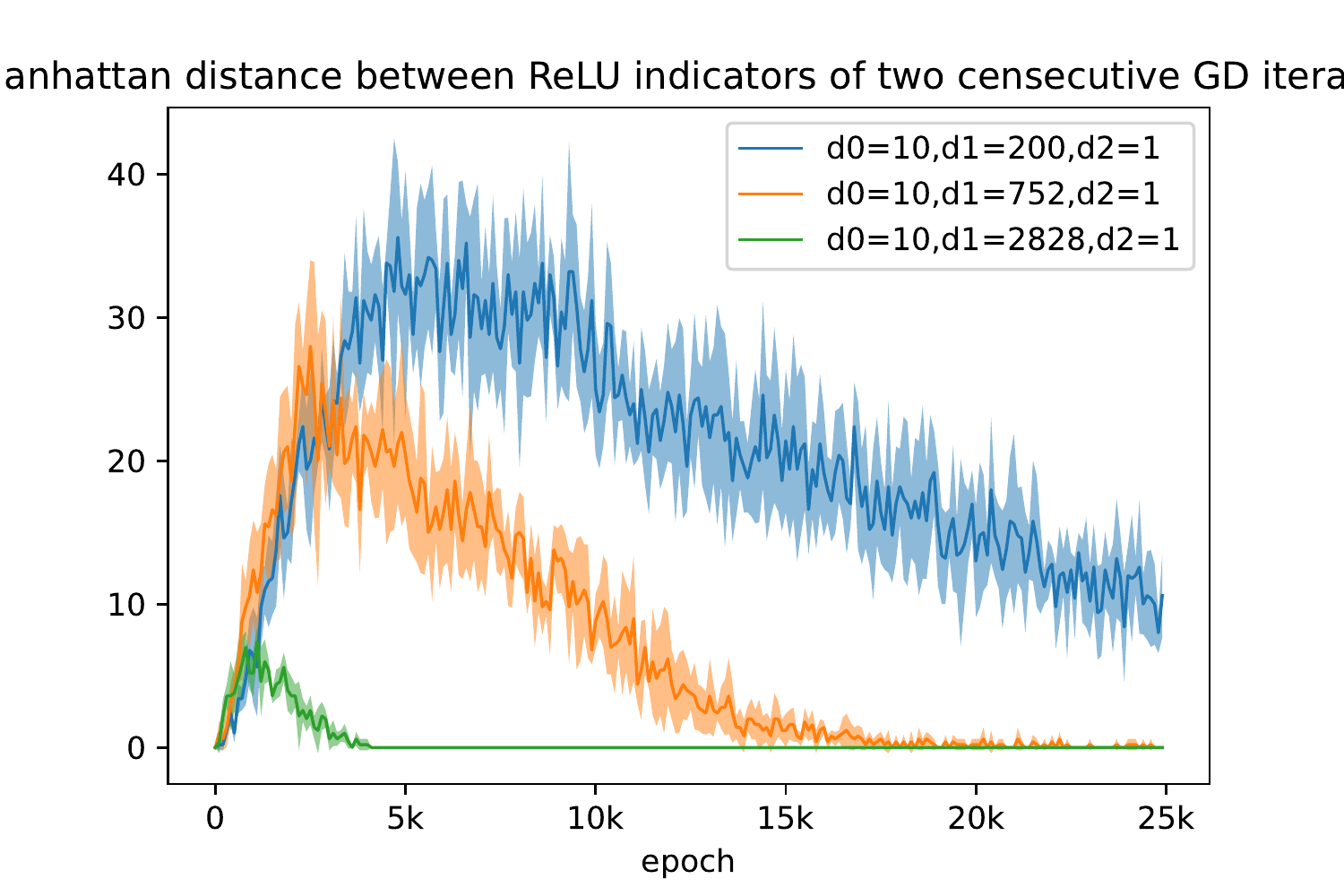}
	\caption{Hamming distance between ReLU indicators of GD iterates}
	\label{fig:hamming2}
\end{subfigure}
\begin{subfigure}[t]{0.19\textwidth}
	\includegraphics[width=1\linewidth]{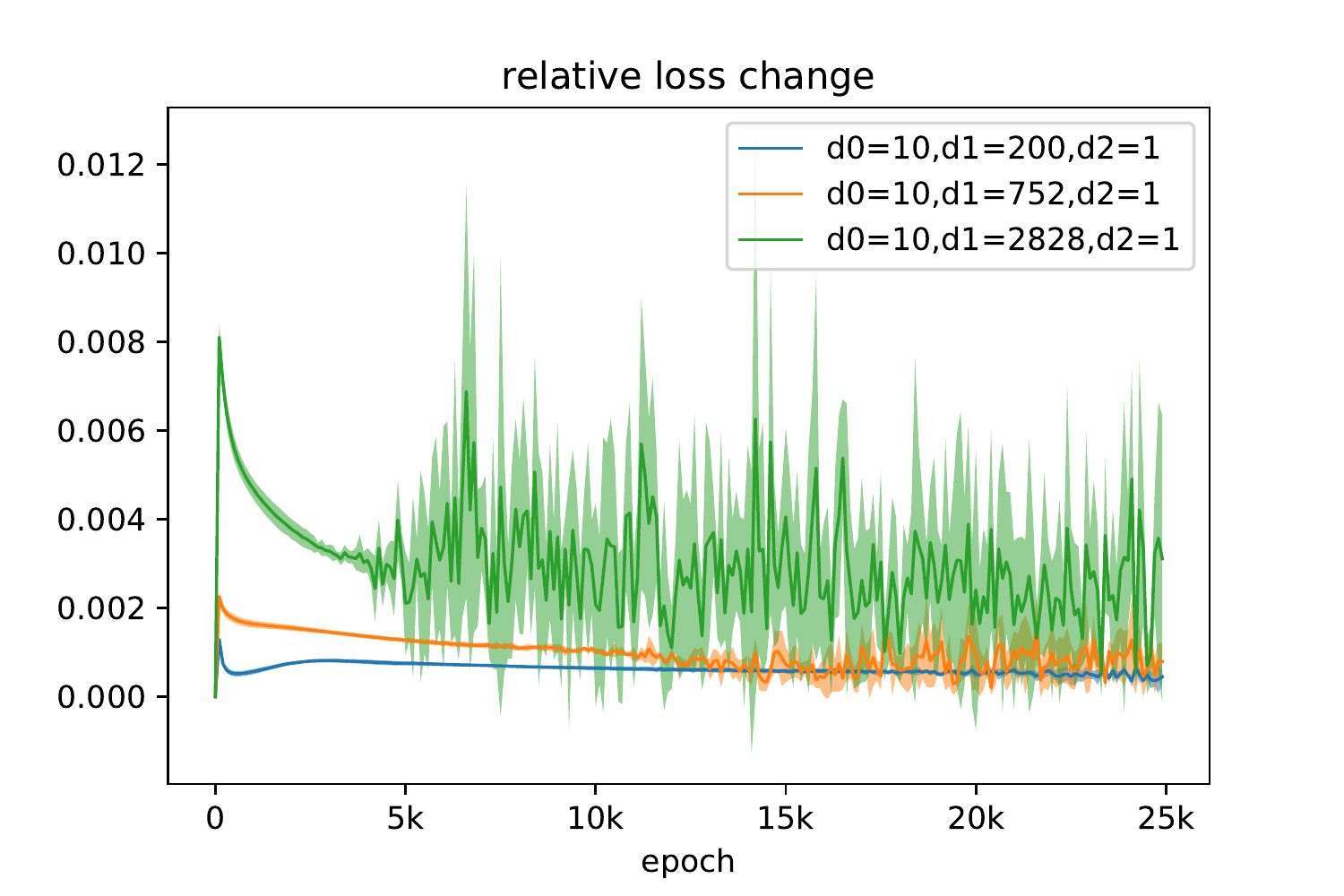}
	\caption{$\Delta \mathcal{L}$}
	\label{fig:loss_change2}
\end{subfigure}
\begin{subfigure}[t]{0.19\textwidth}
	\includegraphics[width=1\linewidth]{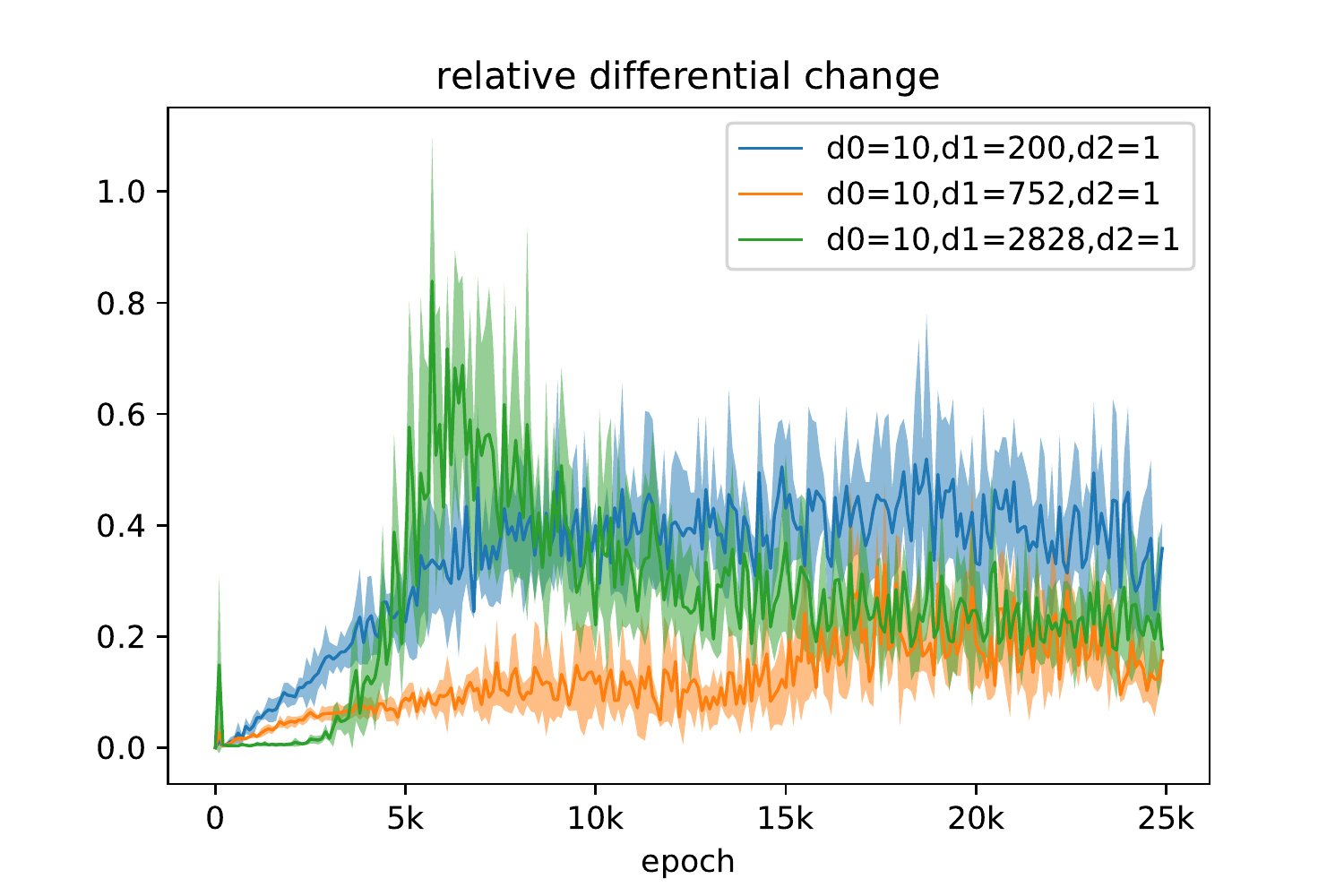}
	\caption{$\Delta Y$}
	\label{fig:diff_norm2}
\end{subfigure}
	\caption{Top row: training the hidden layer of the shallow NN only. Bottom row: training both layers of the shallow NN. The number of the hidden neurons is varied (the NN configuration is provided in the legend) and the total number of epochs of the evolution is equal to $25k$. The solid curve presents the mean from five independent runs, and the shaded region presents the standard deviation, plotted every 100th epoch.}
	\label{fig:singlelayer}
	\vspace{-15pt}
\end{figure}

\section{Conclusions and Future Work}\label{sec:conclusion}
We have demonstrated an improved trainability overparametrization bound of order $\tilde{\Omega}(N^{1.25})$ on the hidden layer of shallow NN equipped with ReLU activation functions.
We have obtained Theorem~\ref{P:abstract_convergence} -- an result allowing to pass from continuous solutions of the DI to the dynamics of SGD.
We believe that our contribution deepens the understanding of the optimization theory of~NN. There are several natural directions of further research and we list some of them below.
First direction is towards the theory of deep networks, where one could try to combine Theorem~\ref{P:abstract_convergence} with an analysis of DI dynamics in order to obtain improved overparametrization guarantees.
Secondly, Theorem~\ref{P:abstract_convergence} might serve as a tool to obtain overparametrization bounds which are suggested by numerical experiments in Section~\ref{sec:num_exps}.
Finally, all known bounds for ReLU NNs are valid under strong, probabilistic data assumptions and it would be of interest to pursue directions of research that would allow for more general data such as in the case of smooth activations,~cf. Table~\ref{tab:related_works}.

\bibliography{globconv_references}
\bibliographystyle{abbrvnat}
\pagebreak
\appendix
\section{Proof of Proposition~\ref{P:DI_existence}}\label{A:pf_existence}
We begin with a standard result on the existence of local solutions to the DI~\eqref{eq:DI_generic}, see, e.g.,~\cite[pp. 77-78]{Filippov1988} for a detailed proof. 
We refer the interested reader also to~\cite{aubin2012differential} for a comprehensive treatment of the theory of DIs.
\begin{theorem}[Existence of local solutions]\label{T:DI_solutions_existence}
	If $f\colon\RR^d\to\RR$ is locally Lipschitz, then there exists $T>0$ and an arc $x$ such that the differential inclusion~\eqref{eq:DI_generic} is satisfied for almost every $t\in[0,T)$.
	Moreover, for any bounded domain $G\ni x_0$, each solution to~\eqref{eq:DI_generic} lying within $G$ can be continued up until it hits the boundary of $G$.
\end{theorem}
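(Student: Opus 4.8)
The plan is to check that the set-valued map $x\mapsto -\partial f(x)$ satisfies the classical Marchaud/Filippov hypotheses under which a differential inclusion admits local absolutely continuous solutions, and then to run the standard Euler polygonal construction together with a closure argument. First I would record the properties of the Clarke subdifferential of a locally Lipschitz $f$ (see~\cite{clarke1983oan}): fixing $r>0$ so that $f$ is $L$-Lipschitz on $B(x_0,r)$, the set $\partial f(x)$ is nonempty, convex and compact for every $x$, one has $\partial f(x)\subseteq B(0,L)$ for $x\in B(x_0,r)$, and $x\mapsto\partial f(x)$ is upper semicontinuous, i.e.\ has closed graph and locally bounded values. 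These are exactly the regularity and growth conditions needed.

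Next I would construct approximate solutions. Pick $T\le r/L$ and, for each $n$, the uniform partition $0=t_0^n<t_1^n<\dots<t_n^n=T$; define the polygonal arc $x_n$ by $x_n(0)=x_0$ and $\dot{x}_n(t)=v_k^n$ on $[t_k^n,t_{k+1}^n]$, where $v_k^n$ is an arbitrary element of $-\partial f(x_n(t_k^n))$. Since $\|v_k^n\|\le L$, an induction shows $x_n(t)\in B(x_0,Lt)\subseteq B(x_0,r)$ for all $t\le T$, so the construction never leaves the region where the above bounds hold, and the family $\{x_n\}$ is uniformly bounded and uniformly $L$-Lipschitz.

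Then comes the compactness and closure step, which I expect to be the only genuinely delicate point. By Arzel\`a--Ascoli a subsequence of $\{x_n\}$ converges uniformly on $[0,T]$ to an arc $x$; since $\|\dot{x}_n\|_{L^\infty}\le L$, passing to a further subsequence we may assume $\dot{x}_n\rightharpoonup\dot{x}$ weakly in $L^2([0,T];\RR^d)$. To conclude $\dot{x}(t)\in -\partial f(x(t))$ for a.e.\ $t$, I would invoke the convergence theorem for differential inclusions (e.g.~\cite[Ch.~1]{aubin2012differential}): Mazur's lemma replaces $\dot{x}_n$ by convex combinations converging strongly, hence a.e.\ along a subsequence, and then the closed-graph and convexity properties of $\partial f$, combined with $x_n\to x$ uniformly and with the fact that the mesh of the partition tends to zero (so $x_n$ stays arbitrarily close to its values at the grid points), force the limiting slope to lie in $-\partial f(x(t))$ almost everywhere.

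Finally, the continuation statement is a standard maximality argument. Given a solution on $[0,\tau)$ whose graph lies in a bounded domain $G$, the closure $\overline G$ is compact, so $f$ is globally Lipschitz on $\overline G$; hence $\dot{x}$ is essentially bounded, $x$ is uniformly continuous on $[0,\tau)$, and the limit $x(\tau^-)\bydef\lim_{t\to\tau^-}x(t)$ exists. If $x(\tau^-)\in G$, applying the local existence result from initial point $x(\tau^-)$ and concatenating strictly extends the solution; a routine Zorn's lemma (or repeated extension) argument then yields a solution on a maximal interval, and maximality together with the previous observation shows it can fail to extend only upon reaching $\partial G$. Alternatively, one may simply cite~\cite[pp.~77--78]{Filippov1988} for both assertions.
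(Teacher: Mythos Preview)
Your proposal is correct and follows the classical Filippov/Marchaud existence argument. The paper itself does not prove this theorem at all: it simply states it as a standard result and refers the reader to~\cite[pp.~77--78]{Filippov1988}, the very reference you cite at the end of your sketch, so your approach is exactly in line with (indeed, strictly more detailed than) what the paper does.
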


Recall the definition of the chain rule~\ref{eq:chain_rule}.
Its importance is shown in the following lemma.
\begin{lemma}[\cite{davis2020}]\label{L:DI_solutions_property_chain_rule}
	If $f\colon\RR^d\to\RR$ is locally Lipschitz, satisfies the chain rule~\eqref{eq:chain_rule} and $x\colon\RR_+\to \RR^d$ is an arc satisfying the DI
	\[
	\dot{x}(t) \in
	-\partial f(x(t))
	\quad 
	\text{for a.e. } t\in [0,T)
	\]
	for some $T\in\RR_+\cup \{ \infty \}$, then the equality $\| \dot{x}(t) \| = \min\{\, \| v \| \colon\; v\in \partial f(x(t)) \,\}\bydef m(t)$ holds for a.e. $t\in [0,T)$ and in particular
	\[
	f(x(t)) = f(x(0)) 
	-
	\int_0^t m(s)\,ds
	\quad
	\text{for a.e. } t\in [0,T).
	\] 
\end{lemma}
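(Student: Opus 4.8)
The plan is to extract both assertions from the chain‑rule hypothesis~\eqref{eq:chain_rule}, whose key feature is that it asserts the set $\{\,\langle v,\dot x(t)\rangle : v\in\partial f(x(t))\,\}$ is a \emph{singleton}, equal to $\{\tfrac{d}{dt}(f\circ x)(t)\}$. First I would arrange the integration. On a compact interval $[0,t]$ the image $x([0,t])$ is compact (an arc is continuous), so $f$ is Lipschitz on a neighbourhood of it with a single constant; since $x$ is absolutely continuous on $[0,t]$, so is $f\circ x$, whence $f\circ x$ is differentiable for a.e.\ $s$, its derivative is integrable, and $f(x(t))-f(x(0))=\int_0^t\tfrac{d}{ds}(f\circ x)(s)\,ds$. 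From now on I work at an $s$ lying in the full‑measure set on which simultaneously $x$ is differentiable with $\dot x(s)\in-\partial f(x(s))$, $f\circ x$ is differentiable, and~\eqref{eq:chain_rule} holds; write $\dot x(s)=-v_s$ with $v_s\in\partial f(x(s))$.

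Next I would identify the derivative and prove the minimal‑norm identity in one stroke. Choosing $v=v_s$ in the singleton gives $\tfrac{d}{ds}(f\circ x)(s)=\langle v_s,\dot x(s)\rangle=-\|v_s\|^2=-\|\dot x(s)\|^2$. Choosing instead an arbitrary $v\in\partial f(x(s))$, the same singleton forces $\langle v,\dot x(s)\rangle=-\|v_s\|^2$, i.e.\ $\langle v,v_s\rangle=\|v_s\|^2$, so Cauchy–Schwarz yields $\|v_s\|^2\le\|v\|\,\|v_s\|$, hence $\|v_s\|\le\|v\|$. Since $v_s$ itself lies in $\partial f(x(s))$, this shows $\|\dot x(s)\|=\|v_s\|=\min\{\,\|v\|:v\in\partial f(x(s))\,\}=m(s)$ for a.e.\ $s\in[0,T)$, which is the first claim.

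Finally I would integrate: applying the fundamental theorem of calculus to the absolutely continuous map $f\circ x$, together with the a.e.\ pointwise identity from the previous step and the relation $\|\dot x(s)\|=m(s)$, delivers the energy identity $f(x(t))=f(x(0))-\int_0^t m(s)\,ds$ for $t\in[0,T)$, which is the second claim. The only point that needs genuine care is the measure‑theoretic bookkeeping — intersecting the three full‑measure sets (differentiability of $x$ with the inclusion, differentiability of $f\circ x$, and validity of~\eqref{eq:chain_rule}) so that the pointwise computation is legitimate; everything else is a single use of Cauchy–Schwarz, the substantive content being carried by the chain‑rule hypothesis, whose ``singleton'' clause is exactly what lets us both evaluate $\tfrac{d}{ds}(f\circ x)$ through the lazy selection $v_s$ and simultaneously read off the extremality of $v_s$.
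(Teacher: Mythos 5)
Your argument is the standard one (the paper does not prove this lemma itself but cites it to Davis et al.; your proof is essentially the proof given there): use local Lipschitzness on the compact image of $[0,t]$ to make $f\circ x$ absolutely continuous, work on the intersection of the three full-measure sets, exploit the singleton structure of~\eqref{eq:chain_rule} with the lazy selection $v_s=-\dot x(s)$ to get $\frac{d}{ds}(f\circ x)(s)=-\|v_s\|^2$, and use the same singleton to force $\langle v,v_s\rangle=\|v_s\|^2$ for every $v\in\partial f(x(s))$, whence Cauchy--Schwarz gives $\|\dot x(s)\|=m(s)$. All of this is correct.

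The one point to fix is your final line. Your pointwise identity is $\frac{d}{ds}(f\circ x)(s)=-\|\dot x(s)\|^2=-m(s)^2$, so integrating via the fundamental theorem of calculus yields
\begin{equation*}
f(x(t))=f(x(0))-\int_0^t m(s)^2\,ds ,
\end{equation*}
not $f(x(t))=f(x(0))-\int_0^t m(s)\,ds$ as you wrote; the unsquared formula does not follow from your computation (nor is it true in general). You have simply transcribed the lemma as printed, which appears to carry a typo: the version of the estimate the paper actually uses downstream (Proposition~\ref{P:DI_existence_only}, inequality~\eqref{eq:dynamics_growth}) is $\frac{d}{dt}\calL(\theta(t))\le-\min_{v\in\partial\calL(\theta(t))}\|v\|^2$, i.e.\ the squared quantity, which is exactly what your derivation establishes. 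So state and integrate the squared identity; with that correction the proof is complete.
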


Theorem~\ref{T:DI_solutions_existence} and Lemma~\ref{L:DI_solutions_property_chain_rule} combined with the observation that the MSE loss function $\calL$ satisfies the chain rule~\eqref{eq:chain_rule}, cf.~\cite{davis2020}, immediately yield the following result.
\begin{proposition}\label{P:DI_existence_only}
	For any initial point $\theta_0\in \RR^{D}$, there exists $T>0$ and a solution $\theta\colon [0,T)\to\RR^D$ to the DI~\eqref{eq:dynamics}. 
	Moreover, for any bounded domain $G\ni \theta_0$, each solution to~\eqref{eq:dynamics} can be extended up until it hits the boundary of $G$.
	Finally, for any such $\theta$,
	\begin{equation}\label{eq:dynamics_growth}
	\frac{d}{dt}\calL(\theta(t)) \le 
	- \min_{v\in \partial \calL(\theta(t))} \| v \|^2
	\quad 
	\text{for a.e. } t\in [0,T).
	\end{equation}
\end{proposition}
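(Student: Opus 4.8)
The statement is a direct assembly of the two results quoted immediately above it, so the plan is to (i) verify the hypotheses of Theorem~\ref{T:DI_solutions_existence} for $f=\calL$, (ii) verify that $\calL$ satisfies the chain rule~\eqref{eq:chain_rule} so that Lemma~\ref{L:DI_solutions_property_chain_rule} applies, and (iii) read off the differential inequality from the conclusion of that lemma together with~\eqref{eq:chain_rule}.

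For (i) I would note that $\calL$ is locally Lipschitz on $\RR^D$: with $\theta=(W,V)$, the map $W\mapsto XW$ is linear, $\phi$ is $1$-Lipschitz acting entrywise, and $(A,V)\mapsto AV$ is Lipschitz on bounded sets, so $\theta\mapsto\hat Y(\theta)=\phi(XW)V$ is locally Lipschitz; composing with the polynomial map $Z\mapsto\tfrac12\|Y-Z\|_F^2$ preserves this. Theorem~\ref{T:DI_solutions_existence} applied to $f=\calL$ then produces $T>0$ and an arc $\theta\colon[0,T)\to\RR^D$ solving~\eqref{eq:dynamics}, and supplies the continuation-up-to-$\partial G$ property for every bounded domain $G\ni\theta_0$; these are the first two assertions.

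For (ii) I would argue that $\calL$ is tame: $\phi$ is piecewise linear, hence semialgebraic; polynomials and $\|\cdot\|_F^2$ are semialgebraic; and semialgebraic functions are closed under sums and compositions; so $\calL$ is a locally Lipschitz semialgebraic (in particular Whitney-stratifiable, definable) function. Such functions are path-differentiable and obey~\eqref{eq:chain_rule} along every arc, which is precisely the fact cited from~\cite{davis2020}. Consequently, for an arbitrary solution $\theta\colon[0,T)\to\RR^D$ of~\eqref{eq:dynamics}, Lemma~\ref{L:DI_solutions_property_chain_rule} applies and gives, for a.e.\ $t\in[0,T)$, that $\dot\theta(t)$ is the minimum-norm element of $-\partial\calL(\theta(t))$, i.e.\ $\|\dot\theta(t)\|=m(t):=\min\{\|v\|\colon v\in\partial\calL(\theta(t))\}$.

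For (iii): by~\eqref{eq:chain_rule}, $\langle v,\dot\theta(t)\rangle$ takes the single value $\tfrac{d}{dt}(\calL\circ\theta)(t)$ for every $v\in\partial\calL(\theta(t))$, and since $-\dot\theta(t)\in\partial\calL(\theta(t))$ by~\eqref{eq:dynamics}, choosing $v=-\dot\theta(t)$ yields $\tfrac{d}{dt}\calL(\theta(t))=-\|\dot\theta(t)\|^2=-m(t)^2=-\min_{v\in\partial\calL(\theta(t))}\|v\|^2$ for a.e.\ $t\in[0,T)$, which is the claimed bound (in fact an equality). The only step carrying genuine content is (ii) — the chain-rule property of $\calL$ — but since the ReLU-network squared loss is a canonical tame function, this follows directly from the cited path-differentiability theory, so I anticipate no real obstacle; the proof is in essence a reference-chasing argument.
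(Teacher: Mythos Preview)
Your proposal is correct and follows exactly the paper's own route: the paper simply states that Theorem~\ref{T:DI_solutions_existence} and Lemma~\ref{L:DI_solutions_property_chain_rule}, combined with the observation (citing~\cite{davis2020}) that $\calL$ satisfies the chain rule, immediately yield the proposition. You have merely filled in the verification that $\calL$ is locally Lipschitz and tame (hence path-differentiable), and spelled out the derivation of~\eqref{eq:dynamics_growth} from the chain rule and $-\dot\theta(t)\in\partial\calL(\theta(t))$; this is precisely the intended reference-chasing argument.
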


Recall the definition of Clarke subdifferential operator,
\begin{equation}\label{eq:Clakre-deff}
\partial f(x) 
\bydef
\conv \big\{\,
\lim_{n\to \infty} \nabla f(x_n)
\quad \text{for some}\quad 
D_f \ni x_n \overset{n\to\infty}{\to} x
\,\big\}
\quad
\forall\; x\in\RR^d,
\end{equation}
where $\conv$ is the convex hull operator.
The Clarke subdifferential satisfies $\partial f(x) = \{ \nabla f(x) \}$ for any $x\in D_f$.
Recall also that for a matrix $A$, we denote $i$-th row vector of $A$ by $A_{i:}$ and $i$-th column vector of $A$ by $A_{:i}$ and that for a vector $x$, $\dg(x)$ denotes the diagonal matrix with $x$ on the diagonal.

Lemma below provides a description of $\partial\calL(\theta)$ for general $\theta\in\RR^D$ of a single hidden-layer NN.
\begin{lemma}\label{L:derivatives}
	For any $\theta=(W,V)\in \RR^D$, set $R_{ij}(\theta)=R_{ij}(X_{i:}W_{:j})$ to be generalized gradient of the function $X_{i:}W_{:j}\mapsto \phi(X_{i:}W_{:j})$.
	Then
	\begin{align}
	\label{eq:W_dot}
	-\partial_{W_{:j}}\calL(\theta) 
	&=
	\big\{\,
	X^T\dg(r_{:j})(Y-\hat{Y})(V_{j:})^T
	\colon\;
	r_{ij} \in R_{ij}
	\text{ for } i\in[N]
	\,\big\}		
	\quad \text{for }
	j\in[d_1]
	,\\
	\label{eq:V_dot}
	-\partial_V\calL(\theta)
	&=
	\big\{\, H^T (Y-\hat{Y}) \,\big\},
	\end{align}
	where $H=\phi(XW)$.
\end{lemma}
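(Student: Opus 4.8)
\textbf{Proof plan for Lemma~\ref{L:derivatives}.}
The strategy is to compute the Clarke subdifferential of $\calL$ componentwise with respect to the blocks $W_{:j}$ (for $j\in[d_1]$) and $V$, using the definition~\eqref{eq:Clakre-deff} together with the fact that $\phi$ is continuously differentiable away from the measure-zero set $\{0\}$, so that $\calL$ is differentiable at every $\theta$ in the full-measure set $D_\calL$ where no preactivation $X_{i:}W_{:j}$ vanishes. On $D_\calL$ the ordinary chain rule applies, and the point is first to nail down $\nabla\calL$ there, then to pass to limits and convex hulls.

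First I would record the classical gradient. Writing $\hat Y = \phi(XW)V = HV$, for $\theta\in D_\calL$ the derivative of $\phi$ at $X_{i:}W_{:j}$ is the scalar $\mathbf{1}\{X_{i:}W_{:j}>0\}$. Differentiating $\calL(\theta)=\tfrac12\|Y-HV\|_F^2$ in $V$ gives at once $\nabla_V\calL(\theta) = -H^T(Y-\hat Y)$; since $V\mapsto\calL$ is smooth (ReLU is applied only to $XW$, not to anything involving $V$) in a neighborhood of \emph{every} $\theta$, the Clarke subdifferential in $V$ is the singleton $\{H^T(Y-\hat Y)\}$ everywhere, which is~\eqref{eq:V_dot}. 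For the $W_{:j}$ block, the dependence of $\hat Y$ on $W_{:j}$ is through the $j$-th column $H_{:j}=\phi(XW_{:j})$ only, and $\hat Y = \sum_{k} H_{:k}V_{k:}$, so $\frac{\partial \hat Y}{\partial (W_{:j})}$ involves $\dg(\phi'(XW_{:j}))X$ contracted with $V_{j:}$; carrying out the chain rule yields $\nabla_{W_{:j}}\calL(\theta) = -X^T\dg(r_{:j})(Y-\hat Y)(V_{j:})^T$ with $r_{ij}=\mathbf{1}\{X_{i:}W_{:j}>0\}\in R_{ij}$, matching the right-hand side of~\eqref{eq:W_dot} with the particular choice of selection $r_{ij}\in\{0,1\}$.

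Next I would pass to the Clarke subdifferential. Fix $\theta=(W,V)$, and approximate it by $\theta_n=(W_n,V_n)\to\theta$ with $\theta_n\in D_\calL$. Because $\nabla_V\calL$ is continuous across all of $\RR^D$, only the $W_{:j}$ blocks with some vanishing preactivation, i.e. indices $i$ with $X_{i:}W_{:j}=0$, contribute nontrivially. For such an index pair $(i,j)$, by choosing $W_n$ approaching $W$ from either side of the hyperplane $\{X_{i:}w=0\}$ one can realize in the limit $\nabla\phi$-values equal to $0$ or to $1$, and convex combinations thereof after taking $\conv$; for $(i,j)$ with $X_{i:}W_{:j}\ne 0$ the value is pinned to the corresponding indicator, which is exactly $R_{ij}$ (a point when the preactivation is nonzero, the interval $[0,1]$ when it is zero). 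Since the map $(r_{ij})_{i\in[N]}\mapsto X^T\dg(r_{:j})(Y-\hat Y)(V_{j:})^T$ is affine in the $r_{ij}$, taking limits of gradients and then the convex hull produces precisely the set $\{X^T\dg(r_{:j})(Y-\hat Y)(V_{j:})^T : r_{ij}\in R_{ij}\}$, giving~\eqref{eq:W_dot}. One should check that the block structure of the subdifferential is respected, i.e. that $\partial\calL(\theta)$ decomposes as the product of $\partial_{W_{:j}}\calL(\theta)$ over $j$ and $\partial_V\calL(\theta)$; this follows because the only nonsmoothness sits in independent coordinates of the preactivation matrix, so limiting gradients can be chosen independently in each block.

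\textbf{Main obstacle.} The routine differentiations are immediate; the delicate point is the second step, namely justifying that the convex hull of limits of classical gradients is \emph{exactly} the affine image of $\prod_{i,j}R_{ij}$ rather than merely contained in it. The inclusion $\partial\calL(\theta)\subseteq\{\cdots\}$ is easy from~\eqref{eq:Clakre-deff} and continuity; the reverse inclusion requires exhibiting, for every selection $r_{ij}\in R_{ij}$, an approximating sequence (or convex combination of two such sequences when $r_{ij}\in(0,1)$) whose gradients converge to the target — this needs a careful choice of perturbation directions of $W_{:j}$ that simultaneously controls all rows $i$ with vanishing preactivation. Handling several such rows at once (so that one can independently hit any point of the product of intervals) is the one place where some genuine care with the geometry of the arrangement of hyperplanes $\{X_{i:}w=0\}$ is needed; generically these hyperplanes are in general position so the perturbation can be chosen freely, and this is what makes the reverse inclusion go through.
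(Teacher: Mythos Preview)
Your approach is exactly the paper's: compute the classical gradient on the differentiable set and then invoke the Clarke definition~\eqref{eq:Clakre-deff} at nondifferentiable points. The paper's own proof is two sentences and supplies no more detail than this.

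You are right to single out the reverse inclusion $\supseteq$ in~\eqref{eq:W_dot} as the delicate point, and your instinct that it hinges on the hyperplanes $\{X_{i:}w=0\}$ being in general position is correct --- so much so that the equality as stated can actually \emph{fail}. Take $N=2$, $d_0=d_1=d_2=1$, $X=(1,1)^T$, $Y=(1,-1)^T$, $W=0$, $V=1$: then $R_{11}=R_{21}=[0,1]$ and the right-hand side of~\eqref{eq:W_dot} is $\{r_1-r_2:r_1,r_2\in[0,1]\}=[-1,1]$, yet $\calL(W,1)=1+\phi(W)^2$ so that $-\partial_W\calL(0,1)=\{0\}$. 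The obstruction is precisely the one you flag: with $X_{1:}=X_{2:}$ the two preactivations always share a sign under any perturbation of $W$, so only the vertices $(0,0)$ and $(1,1)$ of the cube are reachable as limits, and their convex hull maps to $\{0\}$, not $[-1,1]$. Your genericity remark therefore cannot close the gap for the lemma as stated (for \emph{all} $\theta$ and $X$).

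This does not damage the paper, however: every downstream use of Lemma~\ref{L:derivatives} (the lower bound in Proposition~\ref{P:loss_dynamics} via $\partial_V$, and the bound on $\|\dot W\|_F$ in the proof of Lemma~\ref{L:W_increments_bound}) needs only the inclusion $-\partial_{W_{:j}}\calL(\theta)\subseteq\{\cdots\}$, which your first two steps establish cleanly. So your plan proves what is actually required; just do not claim the full equality without extra hypotheses on $X$.
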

\begin{proof}
	The formulas are immediate at points where $\mathcal{L}$ is differentiable, since in this case Clarke subdifferential coincides with the usual gradient.
	For points where $\mathcal{L}$ is not differentiable, we apply the definition~\eqref{eq:Clakre-deff}.
\end{proof}

\begin{proposition}\label{P:loss_dynamics}
	For any initial point $\theta_0\in\RR^D$ and any solution $\theta\colon[0,T)\to\RR^{D}$ to the DI~\eqref{eq:dynamics}, denote $\alpha_0(s) \bydef \sigma_{min}(H^T(\theta(s)))$.
	Then
	\[
	\calL(\theta(t)) \le 
	\calL(\theta(0))
	\exp
	\big(
	-2\int_0^t \alpha_0^2(s)\,ds
	\big)
	\quad
	\text{for a.e. } t\in[0,T).
	\]
\end{proposition}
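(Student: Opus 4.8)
The plan is to bound the loss decay along a DI solution by controlling the smallest singular value of $H^T(\theta(t))$ from below and invoking the chain-rule-based decay estimate from Proposition~\ref{P:DI_existence_only}. First I would use the description of $-\partial\calL(\theta)$ from Lemma~\ref{L:derivatives}: in particular, equation~\eqref{eq:V_dot} shows that the $V$-block of any subgradient $v\in\partial\calL(\theta)$ equals $-H^T(Y-\hat Y)$. Since the full subgradient has at least this block, $\min_{v\in\partial\calL(\theta)}\|v\|^2 \ge \|H^T(Y-\hat Y)\|_F^2$. Then I would estimate $\|H^T(Y-\hat Y)\|_F^2 \ge \sigma_{min}(H^T)^2 \|Y-\hat Y\|_F^2 = 2\alpha_0^2\,\calL(\theta)$, using that for any matrix $M$ and any matrix $Z$ of compatible size one has $\|MZ\|_F \ge \sigma_{min}(M)\|Z\|_F$ (apply this with $M = H^T$, which has more columns than rows in the overparametrized regime, so $\sigma_{min}(H^T) = \sigma_{min}$ of its $N\times N$ Gram-type factor).

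Combining these two observations with the differential inequality~\eqref{eq:dynamics_growth} from Proposition~\ref{P:DI_existence_only} gives, for a.e.\ $t\in[0,T)$,
\begin{equation*}
\frac{d}{dt}\calL(\theta(t)) \le -\min_{v\in\partial\calL(\theta(t))}\|v\|^2 \le -\|H^T(\theta(t))(Y-\hat Y(t))\|_F^2 \le -2\alpha_0^2(t)\,\calL(\theta(t)).
\end{equation*}
Next I would integrate this scalar differential inequality. Since $t\mapsto \calL(\theta(t))$ is absolutely continuous (the arc $\theta$ is absolutely continuous and $\calL$ is locally Lipschitz) and nonnegative, I can apply the integrating-factor argument: let $g(t) = \calL(\theta(t))\exp(2\int_0^t\alpha_0^2(s)\,ds)$, observe $g$ is absolutely continuous with $g'(t) = \big(\tfrac{d}{dt}\calL(\theta(t)) + 2\alpha_0^2(t)\calL(\theta(t))\big)\exp(2\int_0^t\alpha_0^2(s)\,ds) \le 0$ for a.e.\ $t$, hence $g$ is nonincreasing, so $g(t)\le g(0) = \calL(\theta(0))$, which rearranges to the claimed bound. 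This is exactly a Gr\"onwall-type step.

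The main obstacle — really the only non-boilerplate point — is justifying the singular value inequality $\|H^T Z\|_F \ge \sigma_{min}(H^T)\|Z\|_F$ with the \emph{correct} notion of $\sigma_{min}$ for the rectangular matrix $H^T\in\RR^{d_1\times N}$: when $d_1 > N$ one must be careful that $\sigma_{min}(H^T)$ denotes the smallest of the $\min(d_1,N) = N$ singular values (equivalently $\sqrt{\lambda_{min}(HH^T)}$ restricted to the row space, i.e.\ $\sqrt{\lambda_{min}((H^T)^T H^T)}$ read off the $N\times N$ matrix $HH^T$), which is precisely how $\sigma_{min}$ is defined in the Preliminaries via $\sigma_{min}(A)=\sqrt{\lambda_{min}(A^TA)}$ applied to $A = H^T$. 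One also needs the measurability/integrability of $t\mapsto\alpha_0^2(t)$ to make sense of the integral; this follows since $\theta$ is continuous, $W(t)$ is continuous, $H(t)=\phi(XW(t))$ is continuous, and singular values depend continuously on the matrix, so $\alpha_0$ is continuous and the integral is well-defined. Everything else is a routine combination of Lemma~\ref{L:derivatives}, Proposition~\ref{P:DI_existence_only}, and Gr\"onwall's lemma.
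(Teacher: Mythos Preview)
Your proposal is correct and follows essentially the same route as the paper: use Lemma~\ref{L:derivatives} (specifically the $V$-block~\eqref{eq:V_dot}) to bound $\min_{v\in\partial\calL(\theta)}\|v\|^2 \ge \|H^T(Y-\hat Y)\|_F^2 \ge \lambda_{min}(HH^T)\|Y-\hat Y\|_F^2 = 2\alpha_0^2\calL(\theta)$, feed this into~\eqref{eq:dynamics_growth}, and finish with Gr\"onwall. Your additional remarks on the correct interpretation of $\sigma_{min}(H^T)$ and the continuity/measurability of $\alpha_0$ are sound elaborations that the paper leaves implicit.
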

\begin{proof}
	By Lemma~\ref{L:derivatives}, for any $\theta=(W,V)\in\RR^D$ and any $v\in\partial\calL(\theta)$ one has
	\[
	\| v \|^2
	\ge 
	\min_{u\in -\partial_V \calL(\theta)} \| u \|_F^2 
	\ge 
	\lambda_{min}(HH^T) \| (Y-\hat{Y}) \|_F^2.
	\]
	Therefore, by an application of Proposition~\ref{P:DI_existence}, $\frac{d}{dt}{\calL}(\theta(t)) \le - 2\alpha_0(t)\calL(\theta(t))$ for a.e. $t\in[0,T)$.
	The result follows in virtue of Gr\"{o}nwall's lemma.
\end{proof}
Finally, Proposition~\ref{P:DI_existence} is implied by Proposition~\ref{P:DI_existence_only} and Proposition~\ref{P:loss_dynamics} above.

\section{Proof of Lemma~\ref{L:W_increments_bound}}\label{A:pf_W_increments}
\begin{proof}
Observe that if $v\in\partial_{W}\calL(\theta')$ for some $\theta'=(W',V')\in\RR^D$, then in virtue of Lemma~\ref{L:derivatives}, $v=X^T\tilde{v}$ for some matrix $\tilde{v}$ for which it holds that for some choice of the generalized gradient $r_{ij}\in R_{ij}(\theta')$, for each $(i,j)\in [N]\times [d_1]$
\[
	\| \tilde{v} \|_F^2 = \sum_{j\in d_1} \| \dg(r_{:j}) (Y - \phi(XW')V')(V'_{j:})^T \|_F^2
	\le 
	\| (Y - \phi(XW')V') \|_F^2 \|V'\|_F^2,
\]
where we used Lemma~\ref{L:mtx_product_norm_inequality} and the fact that since $\phi$ is $1$-Lipschitz, then $R_{ij}\subset [-1,1]$ and so $\|\dg(r_{:j})\|_{op}\le 1$ for each $j\in [d_1]$. 
Whence by Lemma~\ref{L:mtx_product_norm_inequality}, $\| v \|_{F}\le \|X\|_{op}\| Y - \phi(XW')V' \|_F \| V' \|_F$.

Using the fact that $\theta$ is a solution to DI~\eqref{eq:dynamics}, passing with the norm under the integral and using the above estimate on $\| v\|_{F}$, we get that for any $t\in[0,T)$
\begin{align}\label{eq:W_increments_bound_proof}
\begin{split}
\| W(t) - W(0) \|_{F}
&=
\Big\| 
\int_{0}^{t} \dot{W}(s)\,ds
\Big\|_{F}
\\&\le
\int_{0}^{t} \| \dot{W}(s) \|_{F}\,ds
\\&\le 
\|X \|_{op} \int_{0}^{t} \| V(s) \|_F \sqrt{2\calL(\theta(s))}  \,ds
\\&=
\sqrt{2} \| X \|_{op} \Big(
\| V(0) \|_{F} \bar{\calL}(t)
+ 
\int_{0}^{t} \| V(s) - V(0) \|_{F} \sqrt{\calL(\theta(s))}\,ds
\Big).
\end{split}
\end{align}
Similarly
\begin{align}\label{eq:V_increments_bound_proof}
\begin{split}
\| V(t) -V(0) \|_{F}
&= 
\Big\|
\int_{0}^{t} \dot{V}(s)\,ds
\Big\|_F
\\&\le 
\int_{0}^{t}
\| \dot{V}(s)\|_{F} \,ds
\\&\le 
\int_{0}^{t} 
\| H(s) \|_{F} \| Y - \hat{Y}(s) \|_{F}\,ds
\\&\le 
\sqrt{2}\|X\|_{op}\Big(
\|W(0)\|_{F}\bar{\calL}(t)
+
\int_{0}^{t}\| W(s)-W(0) \|_{F} \sqrt{\calL(\theta(s))}\,ds
\Big).
\end{split}
\end{align}
Adding~\eqref{eq:V_increments_bound_proof} to~\eqref{eq:W_increments_bound_proof} and denoting $\tilde{\Delta}(s) = \| W(t) -W(0) \|_{F} + \| V(t) -V(0) \|_{F}$ gives
\[
\tilde{\Delta}(t) 
\le 
\sqrt{2}\|X\|_{op}\Big(
\big(\|W(0)\|_{F} + \|V(0)\|_{F}\big)
\bar{\calL}(t)
+
\int_{0}^{t}\tilde{\Delta}(s) \sqrt{\calL(\theta(s))}\,ds
\Big),
\]
whence {Gr\"{o}nwall's lemma and the triangle inequality yield~\eqref{eq:theta_increment_bound_result}.}

Let us turn to the proof of~\eqref{eq:W_increment_bound_result}.
Plugging~\eqref{eq:V_increments_bound_proof} into~\eqref{eq:W_increments_bound_proof} yields
\begin{multline}\label{eq:W_increments_intermediate}
\| W(t) - W(0) \|_{F}
\le 
\sqrt{2} \| X \|_{op}
\| V(0) \|_{F} \bar{\calL}(t))
+
2\|X\|_{op}^2 \|W(0)\|_F \int_0^{t}\bar{\calL}(s)
\sqrt{\calL(\theta(s))}\,ds\\
+
2\|X\|_{op}^2\int_0^{t}\sqrt{\calL(\theta(s))}
\int_0^{s}
\| W(u)-W(0) \|_{F} \sqrt{\calL(\theta(u))}\,du\,ds.
\end{multline}
Since for any function $f\colon\RR_+\to\RR$, and any $t\ge 0$, by Fubini's theorem
\[
\int_0^t f(s) \int_0^s f(u)\,du\,ds = \int_0^t\int_0^t f(s)f(u){\bf 1}_{u\le s}\,du\,ds = \frac{1}{2}\big(\int_0^t f(s)\,ds \big)^2,
\]
therefore 
\begin{equation}\label{eq:double_int_1}
\int_0^{t}\bar{\calL}(s)\sqrt{\calL(\theta(s))}\,ds = 
\frac{1}{2}\big( \bar{\calL}(t) \big)^2.
\end{equation}
Similarly, we can estimate
\begin{align}\label{eq:double_int_2}
\begin{split}
\int_0^{t}\sqrt{\calL(\theta(s))}
\int_0^{s}
\| W(u)-W(0) \|_{F} 
&\sqrt{\calL(\theta(u))}\,du\,ds
\\&\le 
\int_0^{t}\sqrt{\calL(\theta(s))}\,ds
\int_0^{t}
\| W(u)-W(0) \|_{F} \sqrt{\calL(\theta(u))}\,du
\\&= 
\bar{\calL}(t)
\cdot 
\int_0^t
\| W(s)-W(0) \|_{F} \sqrt{\calL(\theta(s))}\,ds.
\end{split}
\end{align}
Denote $\Delta(t)=\| W(t) - W(0) \|_{F}/\bar{\calL}(t)$. Estimating~\eqref{eq:W_increments_intermediate} with the use of~\eqref{eq:double_int_1} and~\eqref{eq:double_int_2}, and dividing by $\bar{\calL}(t)$ results in
\begin{equation*}
\Delta(t)
\le 
\sqrt{2} \| X \|_{op}
\| V(0) \|_{F}
+
\|X\|_{op}^2 \|W(0)\|_F\bar{\calL}(t)\\
+
2\|X\|_{op}^2
\int_0^{t}
\Delta(s)\bar{\calL}(s)\sqrt{\calL(\theta(s))}\,ds.
\end{equation*}
Therefore, by Gr\"{o}nwall's lemma
\begin{equation}\label{eq:W_increment_proof_final_bound}
\Delta(t) 
\le 
\Big(
\sqrt{2} \| X \|_{op}
\| V(0) \|_{F}
+
\|X\|_{op}^2 \|W(0)\|_F\bar{\calL}(t)
\Big)
\exp\Big(
2\|X\|_{op}^2
\int_0^t
\bar{\calL}(s)\sqrt{\calL(\theta(s))}\,ds
\Big).
\end{equation}
The conclusion follows by multiplying~\eqref{eq:W_increment_proof_final_bound} by $\|X\|_{op}\bar{\calL}(t)$ and using~\eqref{eq:double_int_1} in the exponent.
\end{proof}

\section{Proof of Theorem~\ref{T:alpha_0_init}}\label{A:pf_alpha_init}
Before we proceed to the proof of Theorem~\ref{T:alpha_0_init}, we introduce the necessary notation and auxiliary facts.
Since this theorem focuses on the properties of the initialization only, we write $W=W_0\in\RR^{d_0 \times d_1}$, $\alpha=\alpha(0)$ and $H=\phi(XW)$ for short (so that $\alpha=\sigma_{min}(H^T)$).
We introduce a random vector $w\sim \mathcal{N}(0,\Id_{d_0})$ which is independent of $X,W$, where $\Id_{d_0}\in\RR^{d_0\times d_0}$ is the identity matrix.
As $X$, $W$ and $w$ are independent, we denote by $\EE_X$, $\EE_W$ and $\EE_w$ the integration operators w.r.t. their respective laws.
Finally, recall that $\phi$ denotes the ReLU activation function and set
\[
G(X) \bydef 	
\EE_w [ \phi(Xw)\phi(Xw)^{T} ]
\quad \text{and}\quad 
\lambda(X) \bydef
\lambda_{min}(G(X)).
\]

Lemma~\ref{L:lambda_chernoff_bound} below shows how to control $\alpha$ with $\lambda(X)$ for a given matrix $X$.
We defer its proof until the end of this section.

\begin{lemma}[Lemma~5.2 in~\cite{nguyen2021tight}]\label{L:lambda_chernoff_bound}
	There exist some absolute constant $C>0$, s.t. for any $\Psi\colon \NN\to [1,\infty)$ and any $\tilde{X}\in\RR^{N\times d_0}$ satisfying $\lambda(\tilde{X})>0$, if
	\begin{equation}\label{eq:lemma_d1_overparam}
	d_1 \ge \max\Big(
	N,
	C\frac{\|\tilde{X}\|_{op}^2 (\Psi(N)+\log(N))}{\lambda(\tilde{X})}
	\cdot 
	\max\Big(
	1,
	\log\big(
	\frac{12\|\tilde{X}\|_{op}^2}{\lambda(\tilde{X})}
	\big)
	\Big)
	\Big),
	\end{equation}
	then
	\[
	\PP\Big(
	\alpha >  
	\frac{\beta_w\sqrt{d_1\lambda(X)}}{2}
	\;\Big\vert\;
	X=\tilde{X}
	\Big)
	\ge 
	1-\exp(-\Psi(N)).
	\]
	One can choose $C=16/(1-\log 2)$.
\end{lemma}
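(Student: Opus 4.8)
\textbf{Proof plan for Lemma~\ref{L:lambda_chernoff_bound}.}
The goal is a high-probability lower bound on $\alpha=\sigma_{\min}(H^T)$, where $H=\phi(XW)$ and the columns of $W$ are i.i.d. $\mathcal{N}(0,\beta_w^2\Id_{d_0})$. Since $\alpha^2=\lambda_{\min}(HH^T)=\lambda_{\min}\big(\sum_{k=1}^{d_1}\phi(Xw_k)\phi(Xw_k)^T\big)$ where $w_k=(W_{:k})$, I would write $HH^T$ as a sum of $d_1$ independent rank-one PSD matrices conditionally on $X=\tilde{X}$. The plan is to apply a matrix Chernoff bound to this sum. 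First I would rescale: with $w\sim\mathcal{N}(0,\Id_{d_0})$, $\EE_w[\phi(\tilde{X}w)\phi(\tilde{X}w)^T]=G(\tilde{X})$ by definition and by positive homogeneity of $\phi$ each summand $\phi(\tilde{X}w_k)\phi(\tilde{X}w_k)^T$ has conditional mean $\beta_w^2 G(\tilde{X})$, so $\EE[HH^T\mid X=\tilde{X}]=\beta_w^2 d_1 G(\tilde{X})$, which has smallest eigenvalue $\beta_w^2 d_1\lambda(\tilde{X})$.

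The main obstacle is that the summands $\phi(\tilde{X}w_k)\phi(\tilde{X}w_k)^T$ are \emph{not} bounded in operator norm — $\|\phi(\tilde{X}w)\|^2$ has sub-exponential tails — so the classical matrix Chernoff bound does not apply directly. I would handle this by truncation: introduce the event $\mathcal{E}_k=\{\|\phi(\tilde{X}w_k)\|^2\le M\}$ for a threshold $M$ of order $\|\tilde{X}\|_{op}^2\log(Nd_1)$ (so that $\PP(\mathcal{E}_k^c)$ is polynomially small, using $\|\phi(\tilde{X}w_k)\|\le\|\tilde{X}w_k\|$ and Gaussian concentration of $\|\tilde{X}w_k\|$ around $\|\tilde{X}\|_F\le\sqrt{d_0}\|\tilde X\|_{op}$), and consider the truncated summands $Z_k=\phi(\tilde{X}w_k)\phi(\tilde{X}w_k)^T\mathbf{1}_{\mathcal{E}_k}$. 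Truncation only decreases the mean in the PSD order, and by a standard computation the drop $\EE[\phi(\tilde{X}w)\phi(\tilde{X}w)^T\mathbf{1}_{\mathcal{E}^c}]$ in operator norm is controlled by $\sqrt{\PP(\mathcal{E}^c)}\cdot\EE\|\phi(\tilde X w)\|^4{}^{1/2}$, which can be made $\le\tfrac12\beta_w^2\lambda(\tilde X)$; hence $\lambda_{\min}(\EE\sum_k Z_k)\ge\tfrac12\beta_w^2 d_1\lambda(\tilde{X})$. A union bound over $k\in[d_1]$ gives $\sum_k Z_k=HH^T$ with probability $1-d_1\PP(\mathcal{E}^c)$.

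Then I would apply the matrix Chernoff inequality (e.g. Tropp's) to $\sum_k Z_k$, whose summands are now bounded by $M$ in operator norm: for $\mu_{\min}:=\lambda_{\min}(\EE\sum_k Z_k)\ge\tfrac12\beta_w^2 d_1\lambda(\tilde X)$,
\[
\PP\Big(\lambda_{\min}\big(\textstyle\sum_k Z_k\big)\le\tfrac12\mu_{\min}\Big)\le d_0\exp\!\big(-c\,\mu_{\min}/M\big)
=d_0\exp\!\Big(-c'\,\frac{d_1\lambda(\tilde X)}{\|\tilde X\|_{op}^2\log(Nd_1)}\Big).
\]
On the complement of this event, $\alpha^2=\lambda_{\min}(HH^T)\ge\tfrac14\beta_w^2 d_1\lambda(\tilde{X})$, i.e. $\alpha>\tfrac12\beta_w\sqrt{d_1\lambda(\tilde X)}$. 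It remains to check that the hypothesis~\eqref{eq:lemma_d1_overparam} forces the exponent $c'd_1\lambda(\tilde X)/(\|\tilde X\|_{op}^2\log(Nd_1))$ to exceed $\Psi(N)+\log(N)+\log d_0$ (using $d_0\le N$ and $d_1\le\mathrm{poly}(N)$ so that $\log(Nd_1)=O(\log N)$ and $\log d_0=O(\log N)$), and that $d_1\PP(\mathcal{E}^c)$ is likewise $\le\exp(-\Psi(N))$; combining the two failure probabilities and absorbing constants into $C$ (the claimed value $C=16/(1-\log 2)$ comes from tracking the constant in the matrix Chernoff bound with the factor-$\tfrac12$ deviation) yields the stated bound. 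The $\max(1,\log(12\|\tilde X\|_{op}^2/\lambda(\tilde X)))$ factor in~\eqref{eq:lemma_d1_overparam} is exactly what is needed to dominate the $\log(Nd_1)$ loss from truncation after eliminating $d_1$ from both sides of the resulting inequality.
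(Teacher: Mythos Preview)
Your overall strategy---truncate the rank-one summands so that matrix Chernoff applies, then show the truncated expectation still has smallest eigenvalue at least $\tfrac12\beta_w^2d_1\lambda(\tilde X)$---is exactly the paper's. Two choices in your execution, however, prevent you from recovering the stated hypothesis~\eqref{eq:lemma_d1_overparam}.

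First, the truncation level. The paper calibrates it to $\lambda(\tilde X)$, not to $N$ or $d_1$: it sets $t^2=4\|\tilde X\|_{op}^2\max\big(1,\log(12\|\tilde X\|_{op}^2/\lambda(\tilde X))\big)$, bounds the truncation error directly as
\[
\|G_t(\tilde X)-G(\tilde X)\|_{op}\le\EE\big[\|\phi(\tilde Xw)\|^2{\bf 1}_{\|\phi(\tilde Xw)\|>t}\big]\le6\|\tilde X\|_{op}^2\exp\big(-t^2/4\|\tilde X\|_{op}^2\big)
\]
via integration by parts plus Gaussian Lipschitz concentration, and this choice of $t$ makes the right side at most $\lambda(\tilde X)/2$. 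The factor $\max(1,\log(12\|\tilde X\|_{op}^2/\lambda(\tilde X)))$ in~\eqref{eq:lemma_d1_overparam} is precisely $t^2/(4\|\tilde X\|_{op}^2)$. Your threshold $M\sim\|\tilde X\|_{op}^2\log(Nd_1)$ is tied to $N,d_1$; your final claim that the two log factors match after ``eliminating $d_1$ from both sides'' is not correct---for a fixed $\tilde X$ they are unrelated, and your argument would produce a hypothesis with $\log(Nd_1)$ in place of the stated factor. (Your Cauchy--Schwarz control of the truncation error also brings in an unwanted $\|\tilde X\|_F^2$ from $\sqrt{\EE\|\phi(\tilde Xw)\|^4}$, which the direct tail integration avoids.)

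Second, the union bound over $k\in[d_1]$ is unnecessary and costs an extra $d_1\PP(\mathcal E^c)$ in the failure probability. Since truncation only discards PSD summands, one has $HH^T\succeq H_tH_t^T$ \emph{deterministically}, where $(H_t)_{:k}=\phi(\tilde Xw_k){\bf 1}_{\mathcal E_k}$; hence $\alpha^2\ge\lambda_{\min}(H_tH_t^T)$ always, and only the single Chernoff failure probability enters. (Minor: the dimension prefactor in matrix Chernoff is $N$, the size of the summands, not $d_0$.)
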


For any two matrices $A\in\RR^{n_r\times n_c}$, $B\in \RR^{n_r\times n_c'}$, 
recall that their Khatri-Rao product is defined as 
\[
A \ast B \bydef 
[A_{1:}\otimes B_{1:}, \ldots, A_{n_r:}\otimes B_{n_r:}]	
\in \RR^{n_r\times n_cn_c'}.
\]
Moreover, recall that any function $f\colon\RR\to\RR$, 
s.t. $\EE[f^2(g)] < \infty$ for $g\sim\mathcal{N}(0,1)$, admits the Hermite expansion, i.e.,
\[
\lim_{n\to\infty}
\EE\Big\vert
f(g)
-
\sum_{k=0}^{n} \mu_k(f)h_k(g)
\Big\vert^2 
= 0,
\]
where $h_k$ is the $k$-th probabilist's Hermite polynomial, i.e., 
\begin{equation}\label{eq:hermite-def}
	h_k(z) = (-1)^k e^{{z^2}/{2}}\frac{d^k}{dz^k}e^{-{z^2}/{2}},
\end{equation}
$h_0 \equiv 1$ 
and $\mu_k(f)$ is the $k$-th Hermite coefficient given by 
\begin{equation}\label{eq:hermite-coeffs}
	\mu_k(f) = \frac{\EE [f(g)h_k(g)]}{\sqrt{\EE h_k^2(g)}}.
\end{equation}

Lemma below follows from simple yet nontrivial calculations.
Since we were unable to find its proof, we provide it for completeness.
\begin{lemma}\label{eq:wiener_coeffs_decay}
	For any \emph{even} positive integer $k$,	
	\begin{equation}\label{eq:mu_r_bound}
			\vert \mu_k(\phi) \vert^2	
			=\Theta(k^{-2.5}).
	\end{equation}
\end{lemma}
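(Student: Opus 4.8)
The plan is to compute $\mu_k(\phi)$ explicitly for even $k$ and then extract its asymptotic order. Recall from~\eqref{eq:hermite-coeffs} that $\mu_k(\phi) = \EE[\phi(g)h_k(g)]/\sqrt{\EE h_k^2(g)}$ with $g\sim\mathcal N(0,1)$, and that the probabilist's Hermite polynomials satisfy $\EE h_k^2(g) = k!$. So the first step is to evaluate the numerator $\EE[\phi(g)h_k(g)] = \int_0^\infty z\,h_k(z)\tfrac{1}{\sqrt{2\pi}}e^{-z^2/2}\,dz$, using the Rodrigues formula~\eqref{eq:hermite-def}, which rewrites the Gaussian-weighted Hermite polynomial as a derivative: $h_k(z)e^{-z^2/2} = (-1)^k \tfrac{d^k}{dz^k}e^{-z^2/2}$. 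Integrating by parts once (or directly using the identity $\int_0^\infty z\,h_k(z)\varphi(z)\,dz$ for the standard Gaussian density $\varphi$) reduces the integral to a boundary term involving $h_{k-2}(0)$ and $h_{k-1}(0)$, or more cleanly, one uses the known closed form $\EE[\phi(g)h_k(g)] = h_{k-2}(0)$ for $k\ge 2$ (this is the classical computation of the Hermite coefficients of the ReLU/positive-part function).

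The second step is to plug in the value of $h_{k-2}(0)$. For even index $2m$ one has $h_{2m}(0) = (-1)^m (2m-1)!! = (-1)^m \tfrac{(2m)!}{2^m m!}$, and for odd index $h_{2m+1}(0)=0$ — which is why only even $k$ give a nonzero coefficient. Writing $k = 2m$, we then get
\[
\mu_k(\phi)^2 = \frac{h_{k-2}(0)^2}{k!} = \frac{\big((2m-3)!!\big)^2}{(2m)!} = \frac{1}{(2m)!}\left(\frac{(2m-2)!}{2^{m-1}(m-1)!}\right)^2,
\]
valid for $m\ge 1$ (with the convention $(-1)!!=1$). The third step is purely asymptotic: apply Stirling's formula to each factorial in this expression. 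One has $(2m)! \sim \sqrt{4\pi m}\,(2m/e)^{2m}$, $(2m-2)!\sim \sqrt{4\pi m}\,(2m/e)^{2m-2}$ up to bounded multiplicative corrections, and $(m-1)!\sim \sqrt{2\pi m}\,(m/e)^{m-1}$. Substituting and carefully cancelling the exponential factors $(2m)^{\cdot}$, $m^{\cdot}$, $e^{\cdot}$, and $2^{\cdot}$ leaves only polynomial-in-$m$ factors, and collecting the powers of $m$ from the various $\sqrt{m}$ terms yields $\mu_k(\phi)^2 = \Theta(m^{-5/2}) = \Theta(k^{-5/2})$, which is the claim~\eqref{eq:mu_r_bound}.

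The main obstacle I anticipate is bookkeeping rather than conceptual: correctly identifying $\EE[\phi(g)h_k(g)] = h_{k-2}(0)$ (getting the index shift and any sign right) and then pushing the Stirling estimates through without dropping or miscounting a power of $m$ — the exponential terms are large and must cancel exactly, so the polynomial remainder has to be tracked with care. A clean way to organize the Stirling step is to work with the ratio $\mu_{2(m+1)}(\phi)^2/\mu_{2m}(\phi)^2$, show it equals $1 - \tfrac{5}{2m} + O(m^{-2})$ by elementary algebra of factorials (no Stirling needed), and conclude $\Theta(m^{-5/2})$ from the resulting telescoping/Raabe-type comparison; this sidesteps the delicate cancellation entirely and is the route I would actually take. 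Either way, the $\Theta$ (two-sided) bound follows once the leading polynomial order is pinned down.
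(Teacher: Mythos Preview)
Your approach is essentially the paper's: compute $\mu_k(\phi)$ explicitly via the Rodrigues formula and integration by parts, obtain $\mu_k(\phi)^2 \propto ((k-3)!!)^2/k!$, and reduce via Stirling. One small slip: the identity you quote should read $\EE[\phi(g)h_k(g)] = \tfrac{1}{\sqrt{2\pi}}\,h_{k-2}(0)$ (the $1/\sqrt{2\pi}$ from the Gaussian density survives), though this constant is of course irrelevant for the $\Theta$-statement. Your suggested ratio/Raabe route---computing $\mu_{2(m+1)}(\phi)^2/\mu_{2m}(\phi)^2 = (2m-1)^2/((2m+2)(2m+1)) = 1 - \tfrac{5}{2m} + O(m^{-2})$---is a cleaner way to extract the exponent than the paper's direct Stirling substitution, and avoids exactly the exponential-cancellation bookkeeping you flag.
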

\begin{proof}
We start with showing some properties of Hermite polynomials $h_k$.
By~\eqref{eq:hermite-def}, for any $k\in\NN$
\begin{equation}\label{eq:hermite-recurrence1}
	h_k'(z) 
	=
	zh_k(z)
	-
	h_{k+1}(z).
\end{equation}
Moreover, by induction it also holds that 
\begin{equation}\label{eq:hermite-recurrence2}
	h_k'=kh_{k-1}
\end{equation}
for $k\ge 1$. 
Indeed, the induction basis $h_1'=h_0$ is straightforward.
Assume that for some $k\in\NN$, $h_l' = lh_{l-1}$ for all $l\le k$.
Then, using~\eqref{eq:hermite-recurrence1}, induction assumption and~\eqref{eq:hermite-recurrence1} again, we obtain that
\begin{align*}
	h_{k+1}'(z) 
	&=
	(zh_k(z)-h'_k(z))'
	\\&=
	h_k(z)
	+
	zh_k'(z)
	-h_k''(z)
	\\&=
	h_k(z)
	+
	zkh_{k-1}(z)
	-kh_{k-1}'(z)
	\\&=
	h_k(z)
	+
	kh_k(z)
\end{align*}
as desired.
Combining~\eqref{eq:hermite-recurrence1} and~\eqref{eq:hermite-recurrence2} we obtain for any $k\in\NN_+$,
\begin{equation}\label{eq:hermite-free-term}
	h_{k}(0)
	=
	-h_{k-1}'(0)
	=
	-(k-1)h_{k-2}(0)
	=
	\begin{cases}
		(-1)^{k/2}(k-1)!! &\text{if}\quad 2\,\vert\,k,\\
		0 &\text{else}.
	\end{cases}
\end{equation}

We not turn to the calculation of $\mu_k(\phi)$.
Using integration by parts and~\eqref{eq:hermite-recurrence1} we obtain that for any $k\in\NN_+$,
\begin{align*}
\begin{split}
	\int_0^\infty 
	z
	e^{-{z^2}/{2}}
	h_k(z)
	\,dz
	&=
	-e^{-z^2/2} h_k(z) \Big\vert_0^\infty
	+
	\int_0^\infty 
	e^{-z^2/2}
	h_k'(z)
	\,dz
	\\&=
	h_k(0)
	+
	\int_0^\infty 
	e^{-z^2/2}
	h_k'(z)
	\,dz
	\\&=
	h_k(0)
	+
	\int_0^\infty 
	e^{-z^2/2}
	(zh_k(z)-h_{k+1}(z))
	\,dz,
\end{split}
\end{align*}
whence, after cancelling the same terms on both hand sides and using~\eqref{eq:hermite-free-term},
\begin{equation*}
	\int_0^\infty 
	e^{-z^2/2}
	h_{k+1}(z)\,dz
	=
	h_k(0)
	=
	\begin{cases}
		(-1)^{k/2}(k-1)!! &\text{if}\quad 2\,\vert\,k,\\
		0 &\text{else}.
	\end{cases}
\end{equation*}
Therefore,
\begin{align}\label{eq:hermite-first-part}
\begin{split}
	\int_0^\infty 
	z
	\frac{d^k}{dz^k}e^{-{z^2}/{2}}\,dz
	&=
	z
	\frac{d^{k-1}}{dz^{k-1}}e^{-{z^2}/{2}}
	\Big\vert_0^\infty 
	-
	\int_0^\infty 
	\frac{d^{k-1}}{dz^{k-1}}e^{-{z^2}/{2}}\,dz
	\\&= 
	z e^{-z^/2}h_{k-1}
	\Big\vert_0^\infty 
	-
	\int_0^\infty 
	\frac{d^{k-1}}{dz^{k-1}}e^{-{z^2}/{2}}\,dz
	\\&=
	-
	\int_0^\infty 
	\frac{d^{k-1}}{dz^{k-1}}e^{-{z^2}/{2}}\,dz
	\\&=
	(-1)^{k}
	\int_0^\infty 
	e^{-z^2/2} h_{k-1}(z)
	\,dz
	\\&=
	\begin{cases}
		(-1)^{\frac{k-2}{2}}(k-3)!! &\text{if}\quad 2\,\vert\, k,\\
		0 &\text{else.}
	\end{cases}
\end{split}
\end{align}
Similarly, for any $k\ge 1$, by integration by parts and by~\eqref{eq:hermite-recurrence2},
\begin{align}\label{eq:-hermite-second-part}
\begin{split}
	\EE h_k^2(g)
	&=
	\frac{1}{\sqrt{2\pi}}
	\int_\RR
	h_k(z)(e^{-z^2/2} h_k(z))
	\,dz
	\\&= 
	\frac{(-1)^k}{\sqrt{2\pi}}
	\int_\RR
	h_k(z)\frac{d^k}{dz^k}e^{-z^2/2}
	\,dz
	\\&= 
	\frac{(-1)^k}{\sqrt{2\pi}}
	\Bigl[
	h_k(z)\frac{d^{k-1}}{dz^{k-1}}e^{-z^2/2}\,\Big\vert_{-\infty}^\infty 
	-
	\int_\RR
	h_k'(z)\frac{d^{k-1}}{dz^{k-1}}e^{-z^2/2}
	\,dz
	\Bigr]
	\\&=
	\frac{(-1)^{k-1}}{\sqrt{2\pi}}
	\int_\RR
	h_k'(z)\frac{d^{k-1}}{dz^{k-1}}e^{-z^2/2}
	\,dz
	\\&=
	k
	\EE h_{k-1}^2(g)
	\\&= k!.
\end{split}
\end{align}
Therefore, combining~\eqref{eq:hermite-first-part} and~\eqref{eq:-hermite-second-part}, we obtain for positive even $k$,
\begin{align*}
\begin{split}
	\mu_k(\phi) 
	&=
	\frac{\EE [h_k(g)\phi(g)]}{\sqrt{\EE h_k(g)^2}}
	=
	\frac{1}{\sqrt{2\pi}}(-1)^{\frac{k-2}{2}}\frac{(k-3)!!}{\sqrt{k!}}.
\end{split}
\end{align*}
Using Stirling's formula, one obtains that for such $k$'s
\begin{align}
\begin{split}
	\vert \mu_k(\phi) \vert^2
	&=
	\Theta\Big(
		\frac{1}{k!}\cdot
		\frac{((k-2)!)^2}{((k-2)!!)^2}
	\Big)
	\\&= 
	\Theta\Big(
		\frac{1}{k^2}
		\cdot 
		\frac{(k-2)!}{((k/2-1)!)^22^{k-2}}
	\Big)
	\\&= 
	\Theta\Big(
		\frac{1}{k^2}
		\cdot 
		\frac{\sqrt{k-2}\bigl(\frac{k-2}{e}\bigr)^{k-2}}{(k/2-1)\bigl(\frac{k-2}{2e}\bigr)^{k-2}2^{k-2}}
	\Big)
	=\Theta(k^{-2.5})
\end{split}
\end{align}
as desired.
\end{proof}

Lemma below provides an interpretable lower bound on $\lambda(X)$ in terms of $X$.
We also defer its proof until the end of this section.
\begin{lemma}[Lemma~5.3 in~\cite{nguyen2021tight}]\label{L:lambda_Khatri-Rao}
	For any $r\in\NN_+$ and any non-zero $X\in\RR^{N\times d_0}$,
	\[
	\lambda(X) \ge 
	[\mu_r(\phi)]^2
	\frac{
		\lambda_{min}\big(
		(X^{\ast r})(X^{\ast r})^T
		\big)	
	}{
		\max_{i\in N}
		\| (X)_{i:} \|_2^{2(r-1)}.
	}
	\]
\end{lemma}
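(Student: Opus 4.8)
The plan is to expand the ReLU in the Hermite basis and exploit its positive homogeneity to write $G(X)$ as an infinite sum of positive semidefinite matrices indexed by Hermite degree; keeping only the degree-$r$ term, together with the diagonal row-norm rescaling that appears, will produce the stated bound.

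First I would dispose of the degenerate case: if some row $X_{i:}$ vanishes then $\phi((Xw)_i)=0$ almost surely, so $G(X)$ has a zero diagonal entry, hence $\lambda(X)=0$, and likewise $X^{\ast r}$ has a zero row so $\lambda_{min}((X^{\ast r})(X^{\ast r})^T)=0$, while the denominator is positive since $X\neq0$; thus the inequality holds trivially. Assuming all rows nonzero, put $u_i\bydef X_{i:}/\|X_{i:}\|_2$ and $\rho_{ij}\bydef\langle u_i,u_j\rangle$. For $w\sim\mathcal{N}(0,\Id_{d_0})$ the pair $(\langle u_i,w\rangle,\langle u_j,w\rangle)$ is jointly standard Gaussian with correlation $\rho_{ij}$; using $\phi(cz)=c\phi(z)$ for $c>0$, the $L^2$-convergent Hermite expansion $\phi=\sum_{k\ge0}\mu_k(\phi)\,\tilde h_k$ with $\tilde h_k\bydef h_k/\sqrt{k!}$, and the orthogonality relation $\EE[\tilde h_k(\langle u_i,w\rangle)\tilde h_l(\langle u_j,w\rangle)]=\delta_{kl}\rho_{ij}^k$ (Mehler's formula), I would obtain
\[
G(X)_{ij}=\EE_w\big[\phi(\langle X_{i:},w\rangle)\phi(\langle X_{j:},w\rangle)\big]=\|X_{i:}\|_2\|X_{j:}\|_2\sum_{k\ge0}\mu_k(\phi)^2\,\rho_{ij}^k.
\]
The interchange of $\EE_w$ with the infinite sum is justified by passing through the partial sums, which are Cauchy in $L^2$, together with $\sum_k\mu_k(\phi)^2<\infty$ and $|\rho_{ij}|\le1$.

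Next I would use $\langle X_{i:},X_{j:}\rangle^k=\langle X_{i:}^{\otimes k},X_{j:}^{\otimes k}\rangle=\big((X^{\ast k})(X^{\ast k})^T\big)_{ij}$ to rewrite, with $D_k\bydef\dg\big(\|X_{1:}\|_2^{1-k},\dots,\|X_{N:}\|_2^{1-k}\big)$,
\[
G(X)=\sum_{k\ge0}\mu_k(\phi)^2\,D_k\,(X^{\ast k})(X^{\ast k})^T\,D_k,
\]
a sum of positive semidefinite matrices. Discarding all terms except $k=r$ gives $G(X)\succeq\mu_r(\phi)^2\,D_r(X^{\ast r})(X^{\ast r})^TD_r$, so
\[
\lambda(X)\ge\mu_r(\phi)^2\,\lambda_{min}\big(D_r(X^{\ast r})(X^{\ast r})^TD_r\big)\ge\mu_r(\phi)^2\,\sigma_{min}(D_r)^2\,\lambda_{min}\big((X^{\ast r})(X^{\ast r})^T\big),
\]
where the last inequality is the elementary bound $x^TD_rMD_rx=(D_rx)^TM(D_rx)\ge\lambda_{min}(M)\|D_rx\|^2\ge\lambda_{min}(M)\sigma_{min}(D_r)^2\|x\|^2$, valid for positive semidefinite $M$. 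Since $r\ge1$ the function $t\mapsto t^{1-r}$ is nonincreasing on $(0,\infty)$, whence $\sigma_{min}(D_r)=\min_i\|X_{i:}\|_2^{1-r}=\big(\max_i\|X_{i:}\|_2\big)^{-(r-1)}$; substituting this in yields precisely the claimed inequality.

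The step I expect to be the main obstacle is making the identity for $G(X)_{ij}$ fully rigorous: verifying Mehler's orthogonality relation in the correlated-Gaussian setting, correctly accounting for the row-norm normalization through positive homogeneity of ReLU, and justifying the termwise interchange of expectation and summation (plus the separate zero-row bookkeeping). Once the clean decomposition $G(X)=\sum_k\mu_k(\phi)^2D_k(X^{\ast k})(X^{\ast k})^TD_k$ is in hand, the remainder is routine positive-semidefiniteness and eigenvalue estimates.
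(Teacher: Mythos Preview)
The proposal is correct and follows essentially the same route as the paper: normalize the rows, expand ReLU in the Hermite basis, apply Mehler's orthogonality to obtain $G(X)=\sum_{k\ge 0}\mu_k(\phi)^2\,D_k(X^{\ast k})(X^{\ast k})^TD_k$ with $D_k=\dg(\|X_{i:}\|^{1-k})$, drop all but the degree-$r$ term, and finish with the elementary bound $\lambda_{min}(D_rMD_r)\ge \sigma_{min}(D_r)^2\lambda_{min}(M)$. Your version is in fact slightly more careful than the paper's, since you explicitly handle the zero-row case (the paper tacitly assumes $J^{-1}$ exists) and spell out the $L^2$ justification for exchanging the sum and the expectation.
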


Recall that if a random vector $z\in\RR$ is sub-Gaussian, 
then there exists a constant 
$\sigma_z$ depending on $\|z\|_{\psi_2}$ only,
s.t. for any 1-Lipschitz function 
$f\colon\RR\to\RR$,
\[
\PP( z > \EE f(z) + t )
\le 
\exp\big(
-{t^2}/{\sigma_z^2}
\big),
\]
cf.~\cite[Proposition 2.5.2]{Vershynin_hdp}.
We are in position to prove Theorem~\ref{T:alpha_0_init}.

\begin{proof}[Proof of Theorem~\ref{T:alpha_0_init}]
	By Gershgorin circle theorem, for any $r\in\NN_+$,
	\begin{align}\label{eq:gershgorin}
	\begin{split}
	\lambda_{min}\big(
	(X^{\ast r})(X^{\ast r})^T
	\big)
	&\ge 
	\min_{i\in [N]}\| X_{i:} \|^{2r} - 
	N \max_{i\neq j} \vert 
	\langle X_{i:}, X_{j:} \rangle
	\vert^r
	\\&= 
	d_0^r - 
	N \max_{i\neq j} \vert 
	\langle X_{i:}, X_{j:} \rangle
	\vert^r.
	\end{split}
	\end{align}
	As for a fixed $y$,
	$\| \langle x,y\rangle \|_{\psi_2} \le \| y \| \cdot \| x \|_{\psi_2}$ 
	and as $\| X_{i:} \|_{\psi_2} = \calO(1)$ for $i\in[N]$, then for for any 
	$i,j\in[N]$, $i\neq j$
	\begin{align*}
	\PP\Big(
	\langle X_{i:}, X_{j:}\rangle > t
	\Big)
	= 
	\EE \Big[
	\PP\Big(
	\langle X_{i:}, X_{j:}\rangle > t
	\;\Big\vert\;
	X_{j:}
	\Big) \Big]
	\le 
	\exp\Big(
	-\frac{t^2}{\sigma_X^2d_0}
	\Big),
	\end{align*}
	where $\sigma_X>0$ is some absolute constant dependent on $\sup_i \| X_{i:} \|_{\psi_2}=\calO(1)$ only.
	Therefore, by the union bound
	\begin{align}\label{eq:intersetction_prob_lower_bd}
	\PP\Big(
	\bigcap_{i\neq j}
	\big\{
	\vert\langle X_{i:}, X_{j:}\rangle\vert < t
	\big\}
	\Big)
	\ge 
	1-N^2
	\exp\Big(
	-\frac{t^2}{\sigma_X^2d_0}
	\Big).
	\end{align}
	Whence, by~\eqref{eq:gershgorin} and Lemma~\ref{L:lambda_Khatri-Rao},
	\[
	\lambda(X)
	\ge 
	[\mu_r(\phi)]^2
	\frac{
		d_0^r
		-
		Nt^r
	}{
		d_0^{r-1}	
	}
	=
	[\mu_r(\phi)]^2d_0
	\big(1
	-
	Nt^rd_0^{-r}
	\big)
	\]
	holds with probability from~\eqref{eq:intersetction_prob_lower_bd} (at least).
	
	Recall that by assumptions of Theorem~\ref{T:alpha_0_init}, $d_0\ge N^{\delta_0}$ for some $\delta_0\in(0,1)$.
	Choosing $r=4\lceil \frac{1+\delta_0}{\delta_0}\rceil$, and using Lemma~\ref{eq:wiener_coeffs_decay}, reveals that 
	$
		[\mu_r(\phi)]^2
		= 
		\Omega(\delta_0^{2.5})
	$.
	Therefore, setting $t=d_0^{3/4}$ yields that for some absolute constant $\tilde{c}>0$
	\begin{align*}
	\lambda(X)
	\ge 
	[\mu_r(\phi)]^2d_0
	\big(1
	-
	Nd_0^{-r/4}
	\big)
	\ge 
	[\mu_r(\phi)]^2d_0
	\big(1
	-
	N^{-\delta_0}
	\big)
	\ge 
	\tilde{c}d_0\delta_0^{2.5}
	\end{align*}
	holds with probability at least $1-N^2\exp{(-N^{\delta_0/2}/\sigma_X^2)}$.
	Denote 
	\[
	A =
	\{\, \lambda(X) > \tilde{c}d_0\delta_0^{2.5} \,\}
	\cap 
	\{\, \|X\|_{op} \le \sqrt{DN} \,\}
	\]
	for some constant $D>0$ big enough and s.t. for some absolute constant $c>0$
	\[
	\PP(A) \ge 
	1-
	N^2\exp(-N^{\delta_0/2}/\sigma_X^2)
	- \exp(-cN),
	\]
	which is possible in virtue of~\eqref{eq:intersetction_prob_lower_bd} and Lemma~\ref{L:init_loss}.
	By conditioning and using Lemma~\ref{L:lambda_chernoff_bound}, we get that
	\[
	\PP\Big(
	\alpha > \frac{1}{2}\beta_w \sqrt{d_1 {\lambda}(X)}
	\Big)
	\ge 
	\PP(A)
	\big(
	1-\exp(-\Psi(N))
	\big)
	\]
	for any function $\Psi\colon\NN\to [1,\infty)$ provided that
	\[
	d_1 \ge \max\Big(
	N,
	C\frac{N(\Psi(N)+\log(N))}{\tilde{c}d_0\delta_0^{2.5}}
	\cdot 
	\max\Big(
	1,
	\log\big(
	\frac{12DN}{\tilde{c}d_0\delta_0^{2.5}}
	\big)
	\Big)
	\Big)
	\]
	for some (possibly different) absolute constant $C>0$.
	We conclude by noting that 
	\begin{multline*}
	N\max\Big(
	1,
	C\frac{(\Psi(N)+\log(N))}{\tilde{c}d_0\delta_0^{2.5}}
	\cdot 
	\max\Big(
	1,
	\log\big(
	\frac{12DN}{\tilde{c}d_0\delta_0^{2.5}}
	\big)
	\Big)
	\Big)
	\\ \le 
	N
	\max\Bigl(
	1, 
	C \delta_0^{-2.5}\log(1/\delta_0)
	\frac{\Psi(N)\log^2(N)}{d_0}
	\Bigr)
	\end{multline*}
	for some absolute constant $C>0$ and estimating
	\begin{align*}
	\PP(A)
	\big(
	1-\exp(-\Psi(N))
	\big)
	&\ge 
	1 - 
	N^2\exp{(-N^{\delta_0/2}/\sigma_X^2)}
	- \exp(-cN)
	- 
	\exp(-\Psi(N))
	\\&\ge 
	1 - 
	\calO(N^2)\exp{(-\Omega(N^{\delta_0/2}))}
	-
	\exp(-\Psi(N)).
	\end{align*}
\end{proof}
We proceed with proofs of the remaining lemmas.
\begin{proof}[Proof of Lemma~\ref{L:lambda_chernoff_bound}]
	Recall the definitions
	\[
	G(X) \bydef 	
	\EE_w [ \phi(Xw)\phi(Xw)^{T} ],
	\quad 
	\lambda(X) \bydef
	\lambda_{min}(G(X)),
	\]
	where 
	$w\sim\mathcal{N}(0,\Id_{d_0})$ 
	and introduce the following truncated versions of 
	$G$ and $\lambda$ for any $t>0$
	\[
	G_t(X) \bydef 	
	\EE_w [ \phi(Xw)\phi(Xw)^{T} {\bf 1}_{\| \phi(Xw) \| \le t } ],
	\quad 
	\lambda_t(X) \bydef
	\lambda_{min}(G_t(X)).
	\]
	As the map 
	$w\mapsto \|\phi(Xw)\|$ 
	is $\|X\|_{op}$-Lipschitz, then by Jensen's inequality, integration by parts and by Gaussian concentration
	\begin{align*}
	\|G_t(X) - G(X) \|_{op}
	&\le 
	\EE_w \big\|
	\phi(Xw)\phi(Xw)^{T} {\bf 1}_{\| \phi(Xw) \| > t }
	\big\|_{op}
	\\&=
	\EE_w \| \phi(Xw) \|^2  {\bf 1}_{\| \phi(Xw) \| > t }
	\\&=
	\int_0^{\infty}
	\PP\big(
	\|\phi(Xw) \|  {\bf 1}_{\| \phi(Xw) \| > t }
	\ge \sqrt{s}
	\;\vert\; X
	\big)
	\,ds
	\\&= 
	\int_0^{\infty}
	\PP\big(
	\|\phi(Xw) \|
	\ge \max(\sqrt{s},t)	
	\;\vert\; X
	\big)
	\, ds	
	\\&=
	t^2 
	\PP\big(
	\|\phi(Xw) \|
	\ge t
	\;\vert\; X
	\big)
	+
	\int_{t^2}^{\infty}
	\PP\big(
	\|\phi(Xw) \|
	\ge \sqrt{s}
	\;\vert\; X	
	\big)
	\, ds
	\\&\le 
	t^2\exp\Big(
	\frac{-t^2}{2\|X\|_{op}^2}
	\Big)	
	+
	\int_{t^2}^{\infty}
	\exp\Big(
	\frac{-s}{2\|X\|_{op}^2}
	\Big)\,ds
	\\&=
	\big(
	t^2	
	+
	2\|X\|_{op}^2
	\big)
	\exp\Big(
	\frac{-t^2}{2\|X\|_{op}^2}
	\Big)
	\le 
	6\|X\|_{op}^2
	\exp\Big(
	\frac{-t^2}{4\|X\|_{op}^2}
	\Big),
	\end{align*}
	where in the last line we have used the inequality $x\le e^{x}$ valid for all $x\in\RR$, with $x=t^2/4\|X\|_{op}^2$.
	Thus, for $X$ s.t. $\lambda(X)>0$, by choosing
	\begin{equation}\label{eq:t2_definition}
	t^2 = 
	4\|X\|_{op}^2	
	\max\Big(
	1,
	\log\Big(
	\frac{12\|X\|_{op}^2}{\lambda(X)}	
	\Big)
	\Big),
	\end{equation}
	we get that $\|G_t(X) - G(X) \|_{op} \le \lambda(X)/2$, whence in virtue of Weyl's inequality
	\begin{equation}\label{eq:lambda_t_estimate}
	\lambda_t(X) \ge \lambda(X)/2.
	\end{equation}

	Define 
	$H_t\in\RR^{N\times d_1}$ 
	via 
	$(H_t)_{:j} = \phi(XW_{:j}){\bf 1}_{\|\phi(XW_{:j})\| < t\beta_w}$ 
	for $j\in [d_1]$ 
	and observe that 
	$H_tH_t^T = \sum_{j\in [d_1]} (H_t)_{:j}(H_t)_{:j}^T$ 
	and 
	$\|(H_t)_{:j}(H_t)_{:j}^T \|_{op} = \| (H_t)_{:j}\|^2 \le \beta_w^2t^2$ 
	for all $j\in [d_1]$.
	By the matrix Chernoff inequality,~cf.~\cite[Theorem~1.2]{tropp2012tailbounds}, for any $\varepsilon\in(0,1)$
	\begin{equation}\label{eq:matrix_chernoff}
	\PP\Big(
	\lambda_{min}\big(
	H_tH_t^T
	\big) 
	\le 
	(1-\varepsilon)
	\lambda_{min}\big(\EE_W [H_tH^T_t]\big)
	\;\big\vert\;
	X
	\Big)	
	\\ 
	\le 
	N
	\Big[
	\frac{e^{-\varepsilon}}{(1-\varepsilon)^{1-\varepsilon}}
	\Big]^{\lambda_{min}\big(\EE_W [H_tH^T_t]\big) / \beta_w^2t^2}.
	\end{equation}
	Since for all $j\in[d_1]$, $W_{:j}\sim \mathcal{N}(0,\beta_w\Id_{d_0})$, then
	\begin{align*}
	\EE_W [H_tH_t^T] 
	&= 
	\sum_{j\in[d_1]} \EE_W [ (H_t)_{:j} (H_t)^T_{:j}]
	\\&=
	d_1 \EE_{w} \big[ \beta^2_w\phi(Xw)\phi(Xw)^T{\bf 1}_{\|\beta_w\phi(Xw)\|\le \beta_wt}\big]
	=
	\beta_w^2d_1G_t(X).
	\end{align*}
	Therefore, choosing $\varepsilon=1/2$ in~\eqref{eq:matrix_chernoff}, we obtain that
	\begin{equation}\label{eq:lambda_min_H_t_estimate}
	\PP\Big(
	\lambda_{min}(H_tH_t^T)
	\le 
	\frac{1}{2}\beta_w^2d_1\lambda_t(X)
	\;\big\vert\; X	
	\Big)
	\le 
	\exp\Big(
	-c\frac{d_1\lambda_t(X)}{t^2}
	+\log N
	\Big)
	\end{equation}
	for any $t\ge 0$,
	where $c=\frac{1}{2}(1-\log 2)$.
	
	Finally, noting that 
	$\alpha^2=\lambda_{min}(HH^T) \ge \lambda_{min}(H_tH_t^T)$ 
	and combining this observations with~\eqref{eq:lambda_t_estimate} and~\eqref{eq:lambda_min_H_t_estimate}, 
	we get that for any 
	$\tilde{X}$ 
	s.t. 
	$\lambda(\tilde{X}) > 0$,
	\[
	\PP\Big(
	\alpha \ge \frac{1}{2}\beta_w\sqrt{d_1 \lambda(X)}
	\;\vert\; X=\tilde{X}
	\Big)
	>
	1 - 
	\exp\Big(
	-c\frac{d_1\lambda(\tilde{X})}{2\tilde{t}^2}
	+ \log N
	\Big)
	\]
	for $\tilde{t}^2 = 4\|\tilde{X}\|_{op}^2	
	\max\big(
	1,
	\log\big(
	\frac{12\|\tilde{X}\|_{op}^2}{\lambda(\tilde{X})}	
	\big)
	\big)$.
	The conclusion follows as by~\eqref{eq:lemma_d1_overparam}
	\begin{align*}
	-c\frac{d_1\lambda(\tilde{X})}{2\tilde{t}^2}
	+ \log N
	= 
	\frac{-c d_1\lambda(\tilde{X})}
	{4\|\tilde{X}\|_{op}^2	
		\max\Big(
		1,
		\log\Big(
		\frac{12\|\tilde{X}\|_{op}^2}{\lambda(\tilde{X})}	
		\Big)
		\Big)}
	+ \log N
	\le 
	-\Psi(N).
	\end{align*}
\end{proof}

\begin{proof}[Proof of Lemma~\ref{L:lambda_Khatri-Rao}]
	Denote $J=\dg\big( \|X_{1:}\|,\ldots, \|X_{N:}\| \big)$ and $F=J^{-1}X$.
	Using the fact that for $g,g'\sim\mathcal{N}(0,1)$ and any $k,l\in\NN$, we get that for any $(i,j)\in[N]\times[d_1]$
	\begin{align*}
	J^{-1}G(X)_{ij}J^{-1}
	&= 
	\EE_w \big[
	\phi(\langle F_{i:}, w\rangle)
	\phi(\langle F_{j:}, w\rangle)
	\big]
	\\&=
	\lim_{n\to\infty}
	\EE_w \Big[
	\sum_{k=0}^{n} \mu_k(\phi) h_k(\langle F_{i:}, w\rangle )
	\sum_{l=0}^{n} \mu_l(\phi) h_l(\langle F_{j:}, w\rangle )
	\Big]
	\\&= 
	\lim_{n\to\infty}
	\sum_{k,l=0}^{n}
	\mu_k(\phi)\mu_l(\phi)
	\EE_w\Big[
	h_k(\langle F_{i:}, w\rangle )
	h_l(\langle F_{j:}, w\rangle )
	\Big]
	\\&= 
	\lim_{n\to\infty}
	\sum_{k=0}^{n}
	[\mu_k(\phi)]^2
	\langle 
	F_{i:}, F_{j:}
	\rangle ^k
	= 
	\sum_{k=0}^{\infty}
	[\mu_k(\phi)]^2
	\big\langle
	(F^{\ast k})_{i:},
	(F^{\ast k})_{j:}
	\big\rangle.
	\end{align*}
	Therefore, for a fixed $r\in\NN_{+}$,
	\begin{align*}
	\lambda(X)
	&=
	\lambda_{min}\Big(
	J \Big[
	\sum_{k=0}^{\infty}
	[\mu_k(\phi)]^2
	(F^{\ast k})(F^{\ast k})^T
	\Big] J
	\Big)
	\\&\ge 
	[\mu_r(\phi)]^2
	\lambda_{min}
	\Big(
	J (F^{\ast r})(F^{\ast r})^T J
	\Big) 
	\\&=
	[\mu_r(\phi)]^2
	\lambda_{min}
	\Big(
	J^{-(r-1)} (X^{\ast r})(X^{\ast r})^T J^{-(r-1)}
	\Big) 
	\\&\ge 
	[\mu_r(\phi)]^2
	\frac{\lambda_{min}\big( (X^{\ast r})(X^{\ast r})^T \big)}{\max_{i\in [N]} (J_{ii})^{2(r-1)}}
	\end{align*}
	as desired.
\end{proof}

\section{Proof of Lemma~\ref{L:init_loss}}\label{A:pf_lem_concentration}
\begin{proof}
	The concentration results for $W_0$, $V_0$ and $X$ are standard, cf~\cite[Theorems 3.1.1, 4.6.1]{Vershynin_hdp}.

	We turn to the concentration result for $\calL$.
	Let us write $V=V_0$ and $W=W_0$ for short.
	Since 
	$
	\calL(\theta_0)
	\le 
	2\|Y\|_F^2 + 2\|\hat{Y}\|_F^2
	$,
	then it suffices to estimate $\| \hat{Y}\|_F = \| \phi(XW)V \|_F$.
	
	As for any $A\in\RR^{N\times d_1}$, the function $\RR^{d_1\times d_2}\ni V \mapsto \| AV \|_{F}$ is $\|A\|_{F}$-Lipschitz with respect to the Frobenius norm on $\RR^{d_1\times d_2}$, then Gaussian concentration applied to $V$ yields
	\begin{equation}\label{eq:gaussian_concentration}
	\PP\big(\| AV \|_{F} 
	\le \EE\| AV \|_{F}+t\big)
	\ge 
	1-\exp\big(-t^2/2\beta_v^2\|A\|^2_F\big)
	\quad
	\forall\; t\ge 0.
	\end{equation}
	By independence of the entries of $V$, $\EE \langle v, V_{:j} \rangle ^2 = \beta_v^2 \| v \|^2$ for any $v\in \RR^{d_1}$ and $j\in[d_2]$.
	Using this fact and Jensen's inequality we get that
	\begin{align*}
	\EE\|AV\|_F \le 
	\sqrt{\EE\|AV\|_F^2}
	&=
	\Big( 
	\sum_{i\in [N], j\in [d_2]} \EE (A_{i:}V_{:j})^2 
	\Big)^{1/2}
	\\&=	
	\Big( 
	\sum_{i\in [N], j\in [d_2]} \beta_v^2\| A_{i:} \|^2
	\Big)^{1/2}
	= 
	\beta_v\sqrt{d_2} \|A\|_{F},
\end{align*}	
	whence~\eqref{eq:gaussian_concentration} with $A=\phi(XW)$, $t=\sqrt{d_2}\beta_v\|A\|_F\sqrt{\log(N)}$ and conditioned on the variables $X$, $W$, implies that
	\[
	\|\hat{Y}\|_{F} 
	\le 2\beta_v\sqrt{d_2 \log(N)}\|\phi(XW)\|_{F}
	\]
	with probability at least $1-\exp(-d_2\log(N)/2)$.
	Estimating $\|\phi(XW)\|_F \le \| XW \|_F \le \|X\|_{op}\|W\|_{F}$ yields the desired estimate for $\calL$.
\end{proof}

\section{Proof of Corollary~\ref{C:g_flow_convergence}}\label{A:pf_g_flow_convergence}

\begin{proof}
	As $N^{\delta_0} \le d_0 \le N$, then by Lemma~\ref{L:init_loss}
	\begin{gather*}\label{eq:init_assymptotics}
	\|W_0\|_F = \Theta(\beta_w\sqrt{d_0d_1}),\quad 
	\|V_0\|_F = \Theta(\beta_v\sqrt{d_1d_2}),\quad
	\|X\|_{op} = \calO(\sqrt{N}),\quad\text{and}\\ 
	\calL(\theta_0) = \calO(N\log(N)\cdot d_0d_1d_2\beta_v^2\beta_w^2)
	\end{gather*}
	with probability at least 
	\[
	1 -
	2\exp(-\Omega(d_0d_1)) -
	2\exp(-\Omega(d_1d_2)) -
	\exp(-\Omega(N)) -
	\exp(-\Omega(d_2\log(N))).
	\]
	Using Theorem~\ref{T:alpha_0_init} with 
	\[
	\Psi(N)=
	\frac{d_0}{N}
	\cdot 
	\Big[
	\frac{d_2N^{2.5}}{d_0\beta_w^2} 
	\Big]^{1/{(\rho+1)}}
	\] 
	yields 
	\begin{align*}
	F(\theta_0, X, Y)
	&=
	\Big(
	2\sqrt{2}\frac{\sqrt{\calL(\theta_0)}\|X\|_{op}^2\|V_0\|_F}{\alpha_0^3(0)}
	+
	2\frac{\calL(\theta_0) \|X\|_{op}^3 \|W_0\|_{F}}{\alpha_0^5(0)}
	\Big)
	\exp\Big(
	\frac{4\|X\|^2_{op}\calL(\theta_0)}{\alpha_0^4(0)}
	\Big)
	\\&=
	\calO
	\Big(
	\frac{N\sqrt{N\log N}\sqrt{d_0}d_1d_2\beta_w\beta_v^2}{d_0^{1.5}d_1^{1.5}\beta_w^3}
	+
	\frac{N^{2.5}\log(N) d_0^{1.5}d_1^{1.5} d_2\beta_w^3\beta_v^2}{d_0^{2.5}d_1^{2.5}\beta_w^5}
	\Big)
	\exp\Big(
	\calO\big(
	\frac{N^2\log Nd_0d_1d_2\beta^2_w\beta^2_v}{d^2_0d^2_1\beta_w^4}
	\big)\Big)
	\\&= 
	\calO
	\Big(
	\frac{d_2N\sqrt{N\log N} \beta_v^2}{d_0\sqrt{d_1}\beta_w^2}
	+
	\frac{d_2N^{2.5}\log(N) \beta_v^2}{d_0d_1 \beta_w^2}
	\Big)
	\exp\Big(
	\calO\big(
	\frac{d_2N^2\log N \beta^2_v}{d_0d_1\beta^2_w}
	\big)\Big)
	\\&=
	\calO\Big(
	\frac{d_2N^{2.5}\log(N) \beta_v^2}{d_0d_1 \beta_w^2}
	\Big)
	\exp\Big(
	\calO\big(
	\frac{d_2N^2\log N \beta^2_v}{d_0d_1\beta^2_w}
	\big)\Big)
	\\&= 
	\calO\Big(
	\frac{1}{d_1^{1+\rho}} \cdot 
	\frac{d_2N^{2.5}\log(N)}{d_0\beta_w^2}
	\Big)
	\exp\Big(
	\calO\Big(
	\frac{1}{d_1^{1+\rho}}\cdot 
	\frac{d_2N^2\log N}{d_0\beta^2_w}
	\Big)\Big)
	\\&=
	\calO\Big(\frac{1}{\log N}\Big)
	\exp\Big(\calO\Big(
	\frac{1}{\sqrt{N}(\log N)^{1+2\rho}}
	\Big) \Big)
	\end{align*}
	with probability at least 	$
	1-
	\exp\big( 
	-
	\frac{d_0}{N}
	\cdot 
	\big[
	\frac{d_2N^{2.5}}{d_0\beta_w^2} 
	\big]^{1/{(\rho+1)}}
	\big)
	-\calO(N^2)\exp( 
	-\Omega(N^{\delta_0/2}) 
	)
	-\exp(
	-\Omega(d_2\log N)
	)
	=o(1).
	$ -- we use tha fact that $\delta_0\le 1$ together with the probability bound of Theorem~\ref{T:alpha_0_init}. 
	The result follows in virtue of Theorem~\ref{T:deterministic_convergence_guarantee}.
\end{proof}

\section{Proof of Theorem~\ref{P:abstract_convergence}}\label{A:pf_abstract_convergence}
\begin{proof}
	Let $l=\sup_{\theta\in Q}\tilde{\calL}(\theta)$ and 
	\[
	T^\ast
	=
	\inf\{\, t\ge 0\colon  le^{-\gamma t} \le \varepsilon/2 \,\} 
	=
	\max\left(
	0, 
	\frac{\log(2l/\varepsilon)}{\gamma}
	\right) 
	\] 
	so that all solutions to the DI~\eqref{eq:DI_prop} initialized in the set $Q$ fall to $\tilde{\calL}^{-1}([0,\varepsilon/2])$ before time $T^\ast$ (and clearly never escapes it).
	For any $r>0$, let $G_r = G + B(0,r)$ be the $r$-widening of $G$ and denote 
	\[
	L \bydef \sup\{\, 
	\| v \|\colon
	v\in \partial \tilde{\calL}(\theta),\;
	\theta\in G_{1}
	\,\}.
	\]
	As $G_1$ is compact and $\tilde{\calL}$ is locally Lipschitz, then $L<\infty$.
	Choose $\Delta = \min(\varepsilon/2L, 1)$ and let $\zeta\colon\RR^D\to [0,1]$ be a smooth function such that $\zeta_{\vert G_{\Delta}}\equiv 1$ and $\zeta_{\vert (G_{2\Delta})^{c}}\equiv 0$.
	Finally, for any $\theta\in\RR^{D}$ and $A\in [N]^{(b)}$, let $f(\theta, A)=\tilde{\calL}^{b}(\theta, A)\cdot \zeta(\theta)$ and note that for $\theta\in G_\Delta$, $\EE f(\theta, A_b) = \frac{b}{N}\tilde{\calL}(\theta)$, where $A_b\sim \operatorname{Unif}([N]^b)$.
	
	Further, for any $\eta>0$, let $\tau^\eta=\inf\{ k\in\NN_+ \colon \theta^\eta_k \notin G_\Delta \}$ and set $\chi_k^\eta = \theta_k^{\eta}$ for $k \in [\tau^\eta]$ (note that $\tau^\eta$ is a random variable measurable w.r.t. the sigma field generated by $\theta_0,\xi_1,\xi_2,\ldots$) and $\chi^\eta_k$ for $k > \tau^\eta$ to be arbitrary such that $\chi^\eta_{k} \in -\eta\partial f(\chi^\eta_{k-1}, \xi_k)$, where $\theta^\eta_k$ is the $\tilde{\calL}^b$-SGD sequence belonging to the family given in the statement of the theorem. 
	Since for any $A\in[N]^b$, $(f(\cdot, A))_{\vert G_\Delta} = (\tilde{\calL}^b(\cdot, A))_{\vert G_\Delta}$, then $(\chi^\eta_k)_{k\in \NN}$ is indeed an $f$-SGD sequence.
	Additionally, by construction $(\chi_k^\eta)_{k\in \NN}$ escapes the set $G_\Delta$ if and only if $(\theta_k^\eta)_{k\in \NN}$ does so.
	
	As for any $A\in [N]^{(b)}$, $f(\cdot, A)$ is locally Lipschitz and compactly supported, then it is Lipschitz with some constant $L_A<\infty$.
	Therefore $f$ satisfies assumption~\ref{item:kappa_lipsch} of Theorem~\ref{T:Bianchi_etal} with $\kappa(\cdot,A) \equiv L_A$.
	Assumptions~\ref{item:kappa_bounded} and~\ref{item:kappa_square_integrability} of Theorem~\ref{T:Bianchi_etal} also hold since the underlying probability space $[N]^{(b)}$ is finite.
	Finally, assumption~\ref{item:f_C2} follows as well immediately from the definition of $\tilde{\calL}^{b}$, local smoothness of $\zeta$ and assumptions on $\tilde{\calL}_i$ for $i\in[N]$.
	We, therefore, apply Theorem~\ref{T:Bianchi_etal} with $\mathcal{K}=Q$, $T=1+\frac{N}{b}T^{\ast}$ and $\tilde{\varepsilon}=\Delta$.
	As a result, the DI problem associated with $f$,
	\begin{equation}\label{eq:DI_sgd_f}
	\dot{\chi}(t) \in 
	-\partial \EE f(\chi(t), A_b)
	\quad\text{for a.e. } t\in [0, T]
	\end{equation}
	is well-defined.
	Moreover, for $\delta>0$, there exists $\eta_0\le 1$ such that for a.e. $\eta\in (0,\eta_0)$ and for any family of $f$-SGD sequences $\mathcal{S} = \{\,(\chi_k^{\eta})_{k\in\NN}\colon \eta > 0\,\}$ defined above, 
	\begin{equation}
	\label{eq:PP}
	\PP\Big(
	\exists\;\chi\colon[0,T]\to\RR^D
	\text{ solving~\eqref{eq:DI_sgd_f} s.t. }
	\chi(0)\in Q
	\;\text{and}\;
	\sup_{t\in [0,T]} \vert \chi(t) - \bar{\chi}^{\eta}(t) \vert < \Delta
	\;\Big\vert\;
	\theta_0\in Q
	\Big)
	\ge 1-\delta,
	\end{equation}
	where $\bar{\chi}^{\eta}(t)$ is the piecewise interpolated process associated with some $(\chi_k^{\eta})_{k\in\NN}\in\mathcal{S}$ via~\eqref{eq:interpolated_def} (note that $\bar{\chi}^\eta$ is in fact a random variable measurable w.r.t. $\theta_0,\xi_1,\xi_2,\ldots$).

	Using the fact that $(\EE f(\cdot, A_b))_{\vert G_\Delta} = (\frac{b}{N}\tilde{\calL}(\cdot))_{\vert G_\Delta}$, we infer that if $\chi\colon[0,T]\to\RR^D$ solves~\eqref{eq:DI_sgd_f}, $\chi(0)\in Q$ and $\tau = \inf\{ t\ge 0\colon \chi(t)\notin G_\Delta\}$, then $\theta(t) \bydef \chi(\frac{N}{b}t)$ solves the DI
	\[
	\dot{\theta}(t) 
	\in  
	-\partial\tilde{\calL}(\theta(t))
	\quad \text{for a.e.} \quad t\in[0,b\tau/N].
	\]
	Since any $\theta$ initialized in $Q$ does not escape the set $G$ by assumption and since $G$ lies in the interior of $G_\Delta$, then $\tau=T$, i.e., $\chi$ remains in $G$.

	\begin{figure}[ht!]
		\includegraphics[width=0.3\textwidth]{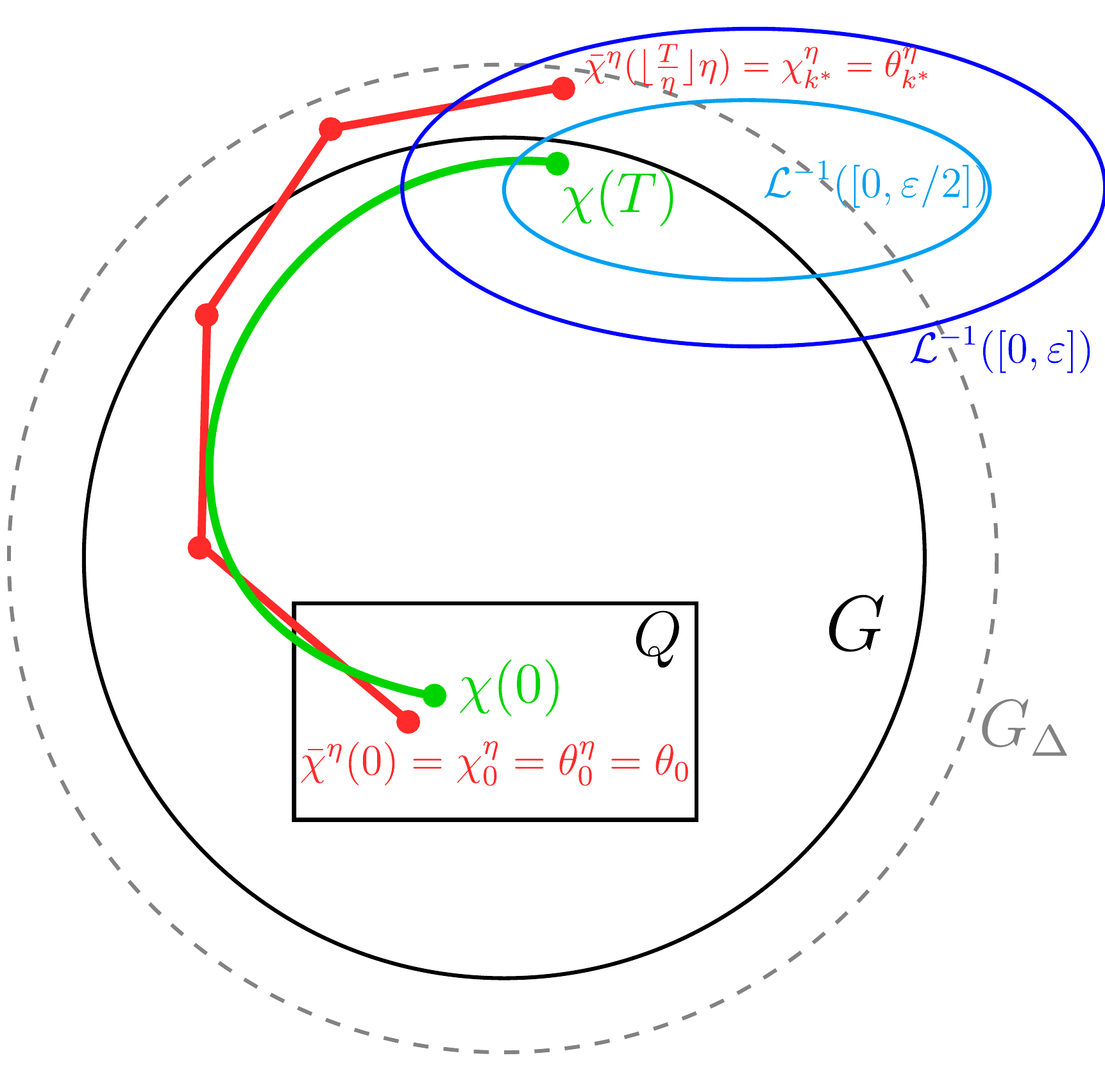}
		\centering
		\caption{Illustration of the sets and flows that appear in the proof of Theorem~\ref{P:abstract_convergence}}
	\end{figure}

	Let $\eta\in (0,\eta_0)$, $\theta_0\in Q$ be such that there exists $\chi\colon [0, T]\to \RR^D$ initialized in $Q$, which is $\Delta$ close to the (truncated) piecewise affine interpolated process $\bar{\chi}^\eta\colon [0, T]\to\RR^D$ associated with $(\chi_k^\eta)_{k\in \NN}$ defined above (with the initial condition $\chi_0^\eta=\theta_0$).
	By~\eqref{eq:PP}, this happens with probability at least $1-\delta$.
	By the discussion above, such $\chi$ never escapes the set $G$, whence $\bar{\chi}^\eta$ remains in the set $G_\Delta$.
	As a consequence, denoting $k^\ast = \lfloor \frac{T}{\eta} \rfloor$ to be maximal index such that $\chi_{k^\ast}^\eta$ lies on the curve $\bar{\chi}^\eta\colon [0,T]\to\RR^D$, we infer that $\chi_k^\eta$ also remains in $G_\Delta$ for all $k\in [k^\ast]$ and therefore $\tau^\eta > k^\ast$, which by definition implies that $\chi_k^\eta = \theta_k^{\eta}$ for all $k\in [k^\ast]$.

	Combining all the above observations, we deduce that with probability at least $1-\delta$, conditioned on a value of $\theta_0\in Q$, for a.e. $\eta\in(0,\eta_0)$ and $k^\ast= \lfloor \frac{T}{\eta} \rfloor$
	\begin{align*}
	\tilde{\calL}(\theta_{k^\ast}^\eta) 
	= 
	\tilde{\calL}\left( \chi_{k^\ast}^{\eta} \right)
	&=
	\tilde{\calL}\left( \bar{\chi}^\eta\left(
	\eta k^\ast 
	\right) \right)
	\\&\le 
	\tilde{\calL}\left( \chi\left(
	\eta k^\ast 
	\right) \right)
	+
	\sup\{\,
	\|v\|\colon v\in\partial f(x),\; x\in G_\Delta
	\,\}
	\cdot 
	\vert \chi( \eta k^\ast ) - \bar{\chi}(\eta k^\ast ) \vert 
	\\&= 
	\tilde{\calL}\Big( \theta\Big(
	\frac{\eta  b}{N}
	\Big\lfloor
	\frac{ NT^\ast}{ b\eta} + \frac{1}{\eta}
	\Big\rfloor 
	\Big) \Big) 
	+
	\sup\{\,
	\|v\|\colon v\in\partial f(x),\; x\in G_\Delta
	\,\}
	\cdot 
	\vert \chi( \eta k) - \bar{\chi}(\eta k) \vert 
	\\&\le 
	\frac{\varepsilon}{2}
	+
	L\Delta
	\le 
	\varepsilon
	\end{align*}
	as desired.
\end{proof}

\section{Training hidden layer only}\label{A:one_layer}
In this section, we apply the theory designed in the prequel to the setting in which only the first weight matrix $W$ is being updated using GD (whereas the second $V$ is fixed to some specific values). This is the training setup, which was introduced in \cite{oymak}. We combine the results from \cite{oymak} with our technique which results in a simpler approach as compared to the original one for showing overparametrization in such a training setup. However, the order of overparametrization that we obtain by proceeding is slightly off compared to the original work.

Consider the model with output dimension $d_2=1$, i.e.,
\[
    \RR^{N\times d_0} \ni X 
    \mapsto 
    \hat{y} 
    \bydef
    \phi(XW)v 
    \in \RR^N,
\]
where $X\in\RR^{d_0\times d_1}$, $W\in\RR^{d_0\times d_1}$, $v\in\RR^{d_1}$. 
In the sequel, we work under the following assumptions on the data and initialization, which are the same as in~\cite{oymak}.
\begin{assumption}\label{Asm:one_layer}
	Matrix $X$ has i.i.d. rows from the uniform distribution on the unit sphere in $\mathbb{R}^{d_0}$.
	The output vector $y$ is such that $\vert y_i \vert = \calO(1)$ for all $i$, so that $\|y\|_2=\calO(\sqrt{N})$.
	Vector $v$ is initialized s.t. half of its entries are $\|y\|_2/\sqrt{Nd_1}$ and the other half is set to $-\|y\|_2/\sqrt{Nd_1}$.
	Weight matrix $W$ has i.i.d. $\mathcal{N}(0,1)$ entries.
\end{assumption}
We restrict our attention to the case, where only $W$ is trained and $v$ remains fixed, i.e., we have $\calL=\calL(W)$ and we consider the following DI problem 
\begin{equation}\label{eq:DI_W}
	\frac{d}{ds} W(s)
	= 
	-\partial\calL(W)
	\quad
	\text{for a.e. }
	s\in \RR_+.
\end{equation}

The following theorem is the main result of this section.
\begin{theorem}\label{T:one_layer}
	Let the data satisfy Assumption~\ref{Asm:one_layer}.
	Assume that 
	\begin{equation}
		d_0 \in [\sqrt{N}, N],\quad 
		d_1 \ge 
		C\Bigl(
			\max\bigl(
				\frac{N\log N}{d_0},
				\frac{N^5}{d_0^4}
			\bigr)
		\Bigr)
	\end{equation}
	for some absolute constant $C\ge 0$.
	Then WHP any solution $W$ to the DI~\eqref{eq:DI_W} can be extended to a solution on $\RR_+$ and any such extension satisfies $\calL(W(t))\le \calL(W(0))\exp(-ct\beta_0^2)$ for some absolute $c>0$.
\end{theorem}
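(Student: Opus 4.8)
The plan is to transplant the argument behind Corollary~\ref{C:g_flow_convergence} to the one-layer setting, replacing the matrix $H^T(\theta)$ by the network Jacobian $J(W)\in\RR^{N\times d_0d_1}$ of the map $W\mapsto\phi(XW)v$, so that $\beta_0=\sigma_{min}(J(W_0))$ plays the role of $\alpha_0(0)$. First I would prove the analog of Proposition~\ref{P:DI_existence} for~\eqref{eq:DI_W}: since $\calL(\cdot)=\tfrac12\|y-\phi(X\cdot)v\|^2$ (with $v$ fixed) is locally Lipschitz and satisfies the chain rule~\eqref{eq:chain_rule}, Theorem~\ref{T:DI_solutions_existence} and Lemma~\ref{L:DI_solutions_property_chain_rule} give local solutions and the energy identity; moreover, by the $W$-block of Lemma~\ref{L:derivatives}, every $v'\in\partial\calL(W)$ is of the form $v'=\bar J^T(y-\hat y)$ for a convex combination $\bar J$ of Jacobians evaluated near $W$, so $\|v'\|^2\ge\sigma_{min}(\bar J)^2\|y-\hat y\|^2$. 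Together with a uniform lower bound on $\sigma_{min}(\bar J)$ along the trajectory (obtained below), Gr\"onwall gives $\calL(W(t))\le\calL(W(0))\exp\!\big(-2\int_0^t\beta_0^2(s)\,ds\big)$, where $\beta_0(s)$ is the running least singular value.

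Next I would control the drift. Because only $W$ moves, the analog of Lemma~\ref{L:W_increments_bound} is one line: $\|\dot W(s)\|\le\|J(W(s))\|_{op}\sqrt{2\calL(W(s))}\le\sqrt2\,\|v\|_2\|X\|_{op}\sqrt{\calL(W(s))}$, using $\|J(W)\|_{op}\le\|v\|_2\|X\|_{op}$ (valid for every ReLU activation pattern), hence $\|W(t)-W(0)\|_F\le\sqrt2\,\|v\|_2\|X\|_{op}\,\bar\calL(t)$ with $\bar\calL(t)=\int_0^t\sqrt{\calL(W(s))}\,ds$. For the conditioning of the Jacobian I would invoke the ReLU Jacobian stability estimates of~\cite{oymak}: under Assumption~\ref{Asm:one_layer}, WHP there is an increasing $\Phi$ with $\|J(W)-J(W_0)\|_{op}\le\Phi(\|W-W_0\|_F)$ for all $W$ in a suitable ball around $W_0$, with $\Phi(R)$ a fractional power of $R$ times a polynomial in $N,d_0,d_1$; by Weyl's inequality, $\beta_0(s)\ge\beta_0-\Phi(\sqrt2\,\|v\|_2\|X\|_{op}\bar\calL(s))$ as long as the trajectory has not left that ball (the same estimate covers the convex combinations $\bar J$ above).

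Combining the three facts, $\bar\calL'(t)=\sqrt{\calL(W(t))}\le a\exp\!\big(-\beta_0^2t+2\beta_0\!\int_0^t\Phi(\sqrt2\,\|v\|_2\|X\|_{op}\bar\calL(s))\,ds\big)$ with $a=\sqrt{\calL(W_0)}$, a one-dimensional differential inequality of the type~\eqref{eq:ODE_problem}. A bootstrap in the spirit of Lemma~\ref{L:ODE_solutions} then shows that, provided the initialization condition $\Phi\big(C\|v\|_2\|X\|_{op}\,a/\beta_0^2\big)<\beta_0/2$ holds for a suitable absolute $C$, every solution to~\eqref{eq:DI_W} stays in the ball of radius $\asymp\|v\|_2\|X\|_{op}a/\beta_0^2$ about $W_0$, $\bar\calL$ stays bounded by $\calO(a/\beta_0^2)$, the solution extends to $\RR_+$ (it never reaches the boundary of that ball, cf. the proof of Theorem~\ref{T:deterministic_convergence_guarantee}), $\beta_0(t)\ge\beta_0/2$ for all $t$, and hence $\calL(W(t))\le\calL(W_0)e^{-t\beta_0^2/2}$, which is the claim with $c=1/2$.

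It remains to discharge the initialization condition WHP under the stated overparametrization. Three ingredients are needed, each WHP: (i) a lower bound $\beta_0=\sigma_{min}(J(W_0))\gtrsim1$ up to logarithmic factors — obtained by computing $\EE_{W_0}[J(W_0)J(W_0)^T]=\|v\|_2^2\,K$ for the one-neuron arc-cosine kernel $K$, lower-bounding $\lambda_{min}(K)$ via the Khatri--Rao/Hermite estimate of Lemma~\ref{L:lambda_Khatri-Rao} and a Gershgorin bound on the Gram matrix (where $d_0\ge\sqrt N$ enters), and transferring to $J(W_0)$ by the matrix Chernoff inequality, which forces $d_1\gtrsim N\log N/d_0$; (ii) $\|X\|_{op}=\calO(\sqrt{N/d_0})$, $\|v\|_2=\Theta(1)$, and $\calL(W_0)=\calO(N)$, by concentration arguments parallel to Lemma~\ref{L:init_loss}. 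Substituting (i)--(ii) into $\Phi(C\|v\|_2\|X\|_{op}a/\beta_0^2)<\beta_0/2$ and solving for $d_1$ yields the second term $C'N^5/d_0^4$ of the hypothesis. The main obstacle is precisely this Jacobian perturbation bound — the existence and size of $\Phi$: unlike for smooth activations or the continuous kernel $H^T(\theta)$ of the two-layer analysis, $W\mapsto J(W)$ is discontinuous for ReLU, so $\|J(W)-J(W_0)\|_{op}$ admits no Lipschitz estimate and must be controlled by counting, over the randomness of $W_0$, the sign-flipped (sample, neuron) pairs in the ball; the fractional power of $R$ coming out of that count is what degrades the bound relative to the $\tilde\Omega(N^{1.25})$ of Theorem~\ref{T:GD_main}.
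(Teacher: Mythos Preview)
Your proposal is correct and follows essentially the same route as the paper: the paper writes the Jacobian explicitly via the Khatri--Rao product $(X\star(R\dg(v)))^T$ (your $J(W)$), bounds its perturbation by the sign-flip count of Corollary~\ref{cor:max_delta_R_estimate} (your abstract $\Phi$, with the explicit exponent $1/3$), controls the drift exactly as you do, and closes with the same bootstrap on $\bar\calL$ and the same initialization estimates (Proposition~\ref{prop:beta_0} for $\beta_0$, $\sqrt{\calL(0)}\lesssim\|y\|_2$). The final computation verifying that the initialization condition forces $d_1\gtrsim\max(N\log N/d_0,\,N^5/d_0^4)$ is precisely the one you outline.
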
 

Using Theorem~\ref{P:abstract_convergence}, we immediately obtain the following corollary to SGD iterates.

\begin{corollary}\label{cor:one_layer}
	Under the assumptions of Theorem~\ref{T:one_layer}, choose error $\varepsilon>0$, batch size $b=b(N)$ and any family $\{ (W_k^\eta)\colon \eta >0\}$ of $\calL^b$-SGD sequences~\eqref{eq:GD}.
	Then, WHP there exists a step size $\eta_0\in(0,1)$, s.t. for a.e. $\eta\in (0,\eta_0)$, $\calL(W^\eta_{k^\ast})<\varepsilon$ for sufficiently large $k^\ast$.
\end{corollary}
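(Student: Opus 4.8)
The plan is to deduce Corollary~\ref{cor:one_layer} from Theorem~\ref{T:one_layer} in precisely the way Theorem~\ref{T:GD_main} is deduced from Corollary~\ref{C:g_flow_convergence} through the abstract transfer result Theorem~\ref{P:abstract_convergence}; I expect essentially all the work to be the same bookkeeping already present in the proof of Theorem~\ref{T:GD_main}. First I would fix a realization $\tilde X$ of the random data matrix and introduce the ``good initialization'' set
\[
    Q(\tilde X) \bydef \{\, W\in\RR^{d_0\times d_1}\colon \text{the quantitative conditions forcing exponential decay in Theorem~\ref{T:one_layer} hold at } W \,\},
\]
the one-layer analogue of the set $Q(\tilde X)$ from the proof of Theorem~\ref{T:GD_main}: it records the lower bound on $\sigma_{min}(\phi(\tilde X W)^T)$, the upper bound on $\calL(\tilde X, W)$, and the bound on $\|W\|$ that hold WHP at the initialization of Assumption~\ref{Asm:one_layer}. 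This set is compact, and by Theorem~\ref{T:one_layer} together with the concentration inputs used in its proof it captures the (continuously distributed) initialization $W_0$ with conditional probability $1-o(1)$.

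Next I would record a trajectory-confinement statement. Using the one-layer specialization of Lemma~\ref{L:W_increments_bound} (where only $W$ evolves and $v$ is frozen, so the ``$\|V(0)\|_F$'' contribution becomes the fixed constant $\|v\|$), every solution of the DI~\eqref{eq:DI_W} started in $Q(\tilde X)$ remains inside a fixed ball $G(\tilde X)=B(Q(\tilde X),U(\tilde X))$ with $U(\tilde X)<\infty$ by compactness of $Q(\tilde X)$; and by Theorem~\ref{T:one_layer} any such solution extends to $\RR_+$ and obeys $\calL(W(t))\le\calL(W(0))e^{-\gamma(\tilde X)t}$, where $\gamma(\tilde X)>0$ is the infimum over $Q(\tilde X)$ of the exponential rate furnished by Theorem~\ref{T:one_layer}. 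At this point all hypotheses of Theorem~\ref{P:abstract_convergence} are in force for $\tilde\calL(\cdot)=\calL(\tilde X,\cdot)$, which is a finite sum over $i\in[N]$ of the locally Lipschitz, a.e.\ $\mathcal C^2$, chain-rule functions $W\mapsto\tfrac12(y_i-\phi(\tilde X_{i:}W)v)^2$ (here $d_2=1$), together with the compact pair $Q(\tilde X)\subset G(\tilde X)$ and the rate $\gamma(\tilde X)$.

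Applying Theorem~\ref{P:abstract_convergence} conditionally on $X=\tilde X$ with $\delta=\PP(W_0\notin Q(\tilde X))$ then gives some $\eta_0(\tilde X)\in(0,1)$ such that for a.e.\ $\eta\in(0,\eta_0(\tilde X))$ one has $\calL(W_{k^\ast}^\eta)<\varepsilon$ with conditional probability at least $1-\delta$, for a $k^\ast$ that is finite and controlled by $\tfrac{N}{\eta b}\gamma(\tilde X)^{-1}\log(2\varepsilon^{-1}\sup_{Q(\tilde X)}\calL)$. To finish I would remove the conditioning exactly as in the last paragraph of the proof of Theorem~\ref{T:GD_main}: apply $\PP(A\mid B)\le\PP(A)/\PP(B)$, multiply both sides by $\delta$, integrate against the law of $X$, and use $(1-\delta)^2\ge 1-2\delta$, concluding that WHP there exists $\eta_0\in(0,1)$ with $\calL(W_{k^\ast}^\eta)<\varepsilon$ for a.e.\ $\eta\in(0,\eta_0)$; since $\PP(W_0\notin Q(X))=o(1)$ by Theorem~\ref{T:one_layer}, this is the asserted WHP statement.

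The main obstacle is not the transfer step, which is a verbatim reuse of Theorem~\ref{P:abstract_convergence} and of the measurability and integration bookkeeping already carried out for Theorem~\ref{T:GD_main}, but rather verifying the deterministic input in the one-layer regime: namely, that the confinement bound of Lemma~\ref{L:W_increments_bound} survives with $v$ frozen (so that $G(\tilde X)$ is genuinely compact and the DI trajectory cannot escape it), and that the high-probability events underlying Theorem~\ref{T:one_layer} for $d_0\in[\sqrt N,N]$ --- in particular the lower bound on $\sigma_{min}(\phi(\tilde X W_0)^T)$, which here routes through the Khatri--Rao estimate of Lemma~\ref{L:lambda_Khatri-Rao} combined with the observations borrowed from~\cite{oymak} --- occur on the same event that defines $Q(X)$. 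Once those are secured, the corollary follows mechanically.
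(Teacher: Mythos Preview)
Your approach is correct and mirrors the paper's, which simply notes that the corollary follows immediately from Theorem~\ref{P:abstract_convergence} once Theorem~\ref{T:one_layer} supplies the exponential decay and confinement of DI trajectories. One point to correct in your description of $Q(\tilde X)$: in the one-layer setting the quantity that drives the rate is \emph{not} $\sigma_{min}(\phi(\tilde X W)^T)$---that bounds the $V$-gradient, which is irrelevant here since $v$ is frozen---but rather $\beta_0=\sigma_{min}\bigl((\tilde X\star(R_0\dg(v)))^T\bigr)$, the minimal singular value of the Jacobian in $W$; the WHP lower bound you need is Proposition~\ref{prop:beta_0} (routed through~\cite{oymak}, not Lemma~\ref{L:lambda_Khatri-Rao}), and trajectory confinement is already provided by the bound~\eqref{eq:DW_estimate} in the proof of Theorem~\ref{T:one_layer}. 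With $Q(\tilde X)$ redefined to record these one-layer quantities (including the event of Lemma~\ref{lem:oymak_C3} on $W_0$), the rest of your argument goes through verbatim.
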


The proof of Theorem~\ref{T:one_layer} is presented at the end of this section and is an adaptation of the reasoning presented in Section~\ref{sec:di_globconv} together with some ideas from~\cite{oymak}, presented below.

In the case of training one layer only, one has 
\begin{align*}
    \partial \calL(W) 
    =
    X^T \dg(\hat{y}-y) \phi'(WX) \dg(v),
\end{align*}
where $\phi'(WX)_{ij}$ for $(i,j)\in[d_0]\times [d_1]$ is the Clark subdifferential of $W_{i:}X_{:j} \mapsto \phi(W_{i:}X_{:j})$ and $\phi'(WX) = \bigotimes_{ij} \phi'(WX)_{ij}$.
Let $R_s\in \phi'(XW(s))$ be such that $\dot{W}(s) = X^T\dg(\hat{y}(s)-y)R_s\dg(v)$ for a.e. $s$. 

For a vector $x=(x_1,\ldots,x_{d})\in\RR^{d}$ and integer $m\in\{1,\ldots ,d\}$, let $\|x\|_{m-}$ denote $m$-th smallest entry of $\vert x \vert = (\vert x_1 \vert,\ldots , \vert x_d \vert)$.
\begin{lemma}[Lemma C.2 in~\cite{oymak}]\label{lem:oymak_C2}
    If 
    \[ 
        \|W_s-W_0\|_F \le \sqrt{m} \min_{i\in[N]}\| X_{i:}W_0 \|_{m-},
    \] 
    then $\max_{i\in[N]}\| (R_s-R_0)_{i:} \|_2\le \sqrt{2m}$.
\end{lemma}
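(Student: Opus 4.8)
The plan is to fix a row $i\in[N]$ and bound the number of coordinates on that row at which the two subgradient selections disagree. Write $p_0^{(j)}:=X_{i:}(W_0)_{:j}$ and $p_s^{(j)}:=X_{i:}(W_s)_{:j}$ for the pre-activations and set $S_i:=\{\,j\in[d_1]\colon (R_s)_{ij}\neq (R_0)_{ij}\,\}$. Every entry of $R_s$ and of $R_0$ lies in $[0,1]$, so $\bigl((R_s)_{ij}-(R_0)_{ij}\bigr)^2\le 1$, and this quantity vanishes off $S_i$; hence $\|(R_s-R_0)_{i:}\|_2^2\le|S_i|$ and it suffices to prove $|S_i|\le 2m$ for each $i$.

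The first step I would carry out is a pointwise observation: $j\in S_i$ implies $|p_0^{(j)}|\le|p_s^{(j)}-p_0^{(j)}|$. Indeed, if $p_0^{(j)}$ and $p_s^{(j)}$ are both $>0$ then $(R_0)_{ij}=(R_s)_{ij}=1$, and if both are $<0$ then $(R_0)_{ij}=(R_s)_{ij}=0$; thus $j\in S_i$ forces the two pre-activations either to straddle $0$ (one $\ge 0$, the other $\le 0$) or to have one of them equal to $0$, and in each of these cases a short sign computation gives $|p_0^{(j)}|\le|p_s^{(j)}-p_0^{(j)}|=|X_{i:}(W_s-W_0)_{:j}|$. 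Since the rows of $X$ have unit norm under Assumption~\ref{Asm:one_layer}, Cauchy--Schwarz then yields $|p_0^{(j)}|\le\|(W_s-W_0)_{:j}\|_2$ for every $j\in S_i$.

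The second step is a two-part count. Put $\rho:=\min_{k\in[N]}\|X_{k:}W_0\|_{m-}$, so the hypothesis reads $\|W_s-W_0\|_F^2\le m\rho^2$, and split $S_i=A_i\cup B_i$ with $A_i:=\{\,j\in S_i\colon|p_0^{(j)}|<\rho\,\}$ and $B_i:=\{\,j\in S_i\colon|p_0^{(j)}|\ge\rho\,\}$. Because $\|X_{i:}W_0\|_{m-}$ is the $m$-th smallest of the $d_1$ numbers $|p_0^{(1)}|,\dots,|p_0^{(d_1)}|$, at most $m-1$ of them are $<\|X_{i:}W_0\|_{m-}$, and since $\rho\le\|X_{i:}W_0\|_{m-}$ this gives $|A_i|\le m-1$. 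For $B_i$, the first step gives $\|(W_s-W_0)_{:j}\|_2\ge|p_0^{(j)}|\ge\rho$ for every $j\in B_i$, and $B_i\subseteq[d_1]$, so
\[
m\rho^2\ \ge\ \|W_s-W_0\|_F^2\ =\ \sum_{j=1}^{d_1}\|(W_s-W_0)_{:j}\|_2^2\ \ge\ \sum_{j\in B_i}\|(W_s-W_0)_{:j}\|_2^2\ \ge\ |B_i|\,\rho^2 ,
\]
forcing $|B_i|\le m$. Therefore $|S_i|=|A_i|+|B_i|\le(m-1)+m\le 2m$, and maximising over $i$ proves the lemma.

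I expect the pointwise sign-flip step to be the only place requiring care, because of the freedom in the Clarke-subdifferential selections at exact zeros of the pre-activations, where $\phi'(0)=[0,1]$. The analysis above tacitly assumes $\rho>0$; the degenerate case $\rho=0$ forces $W_s=W_0$, occurs with probability zero at initialization under Assumption~\ref{Asm:one_layer}, and is dealt with under the usual convention of evaluating $\phi'$ at $0$ to a fixed value along the trajectory. The remaining ingredients --- Cauchy--Schwarz, the defining property of $\|\cdot\|_{m-}$, and distributing the Frobenius norm over columns --- are routine.
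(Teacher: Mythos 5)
Your proof is correct. The paper does not prove this lemma at all --- it is imported verbatim as Lemma~C.2 of \cite{oymak} --- and your argument (bounding the number of flipped activation entries per row by splitting into the at most $m-1$ coordinates with $|X_{i:}(W_0)_{:j}|$ below the $m$-th smallest preactivation magnitude and the at most $m$ large-preactivation coordinates that the Frobenius budget $\|W_s-W_0\|_F^2\le m\rho^2$ can flip, each flipped entry contributing at most $1$ to $\|(R_s-R_0)_{i:}\|_2^2$) is essentially the counting proof given in that reference, correctly adapted to Clarke-subdifferential selections with values in $[0,1]$ at exact zeros. Your handling of the unit-norm rows from Assumption~\ref{Asm:one_layer} and of the degenerate case $\rho=0$ is adequate, so I see no gap.
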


\begin{lemma}[Lemma C.3 in~\cite{oymak}]\label{lem:oymak_C3}
    If $\| X_{i:}\|_2 = 1$ for all $i$ and $W_0\in\RR^{d_0\times d_1}$ has i.i.d. $\mathcal{N}(0,1)$ entries, then 
    \[
        \min_{i\in [N]} \|X_{i:}W_0\|_{m-} \ge \frac{m}{2d_1}
        \quad
        \text{for all }
        i=1,2,\ldots,N
    \]
    with probability at least $1-Ne^{-m/6}$ for any $m\in\{1,2,\ldots,d_1\}$.
\end{lemma}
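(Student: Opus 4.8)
The plan is to reduce the statement to a Chernoff bound for a binomial random variable, treated one row at a time. Fix $i\in[N]$ and consider $g^{(i)}\bydef X_{i:}W_0\in\RR^{d_1}$. Its $j$-th coordinate equals $\langle X_{i:},(W_0)_{:j}\rangle$, and since the columns $(W_0)_{:j}$, $j\in[d_1]$, are independent $\mathcal{N}(0,\Id_{d_0})$ vectors while $\|X_{i:}\|_2=1$, the coordinates $g^{(i)}_1,\dots,g^{(i)}_{d_1}$ are i.i.d.\ standard Gaussians. I would then set $t\bydef m/(2d_1)$, noting $t\le 1/2$ since $m\le d_1$.

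By the definition of $\|\cdot\|_{m-}$, the bad event $\{\|g^{(i)}\|_{m-}<t\}$ coincides exactly with $\{Z_i\ge m\}$, where $Z_i\bydef\#\{\,j\in[d_1]\colon |g^{(i)}_j|<t\,\}\sim\operatorname{Bin}(d_1,p)$ and $p\bydef\PP(|g|<t)$ for $g\sim\mathcal{N}(0,1)$. Bounding the Gaussian density on $[-t,t]$ by its maximal value $1/\sqrt{2\pi}$ gives $p\le 2t/\sqrt{2\pi}\le t=m/(2d_1)$, hence $\EE Z_i=d_1p\le m/2$.

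Next I would invoke the multiplicative Chernoff inequality. Since increasing the success probability stochastically dominates a binomial, $\PP(Z_i\ge m)$ is bounded by the corresponding probability for a $\operatorname{Bin}(d_1,m/(2d_1))$ variable $Z'$, whose mean is exactly $\mu\bydef m/2$; the standard bound $\PP(Z'\ge 2\mu)\le e^{-\mu/3}$ then gives $\PP(Z_i\ge m)\le e^{-m/6}$. A union bound over $i\in[N]$ completes the argument:
\[
\PP\Big(\min_{i\in[N]}\|X_{i:}W_0\|_{m-}<\tfrac{m}{2d_1}\Big)\le\sum_{i\in[N]}\PP(Z_i\ge m)\le Ne^{-m/6}.
\]

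The only points requiring a little care are keeping track of the range $m\in\{1,\dots,d_1\}$ (so that $\|\cdot\|_{m-}$ is well defined and $t\le 1/2$, which legitimizes the crude density bound) and recording the exact form of the Chernoff inequality that yields the constant $1/6$ in the exponent. I do not expect a genuine obstacle here: the statement is essentially a density-at-the-origin estimate for Gaussians combined with a concentration bound for the number of small coordinates.
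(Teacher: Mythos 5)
Your argument is correct: conditioning on a fixed unit-norm row makes the entries of $X_{i:}W_0$ i.i.d.\ standard Gaussians, the event $\{\|X_{i:}W_0\|_{m-}<m/(2d_1)\}$ is exactly $\{Z_i\ge m\}$ for the binomial count of coordinates below the threshold, the density bound gives $p\le t=m/(2d_1)$, and monotone coupling plus the multiplicative Chernoff bound $\PP(Z'\ge 2\mu)\le e^{-\mu/3}$ with $\mu=m/2$ followed by a union bound yields exactly the stated $1-Ne^{-m/6}$. Note that the paper itself does not prove this lemma---it is imported verbatim as Lemma~C.3 of \cite{oymak}---and your small-ball-plus-binomial-concentration argument is the standard one used there, so there is nothing in-paper to diverge from; the only cosmetic remark is that the crude bound on the Gaussian density by $1/\sqrt{2\pi}$ holds for every $t$, so the restriction $t\le 1/2$ is not actually needed for that step.
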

\begin{corollary}\label{cor:max_delta_R_estimate}
    If $\| X_{i:}\|_2 = 1$ for all $i$ and $W_0\in\RR^{d_0\times d_1}$ has i.i.d. $\mathcal{N}(0,1)$ entries, then 
    \[
        \max_{i\in[N]}\| (R_s-R_0)_{i:} \|_2
        \le
        10\sqrt{{\log N}} 
        + 
        2(d_1\|W_s-W_0\|_F)^{1/3}
    \]
    for all $s$ with probability at least $1-\frac{1}{N}$.
\end{corollary}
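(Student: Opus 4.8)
The plan is to interpolate between Lemma~\ref{lem:oymak_C2} and Lemma~\ref{lem:oymak_C3} by choosing the free integer parameter $m$ adaptively, as a function of $\|W_s-W_0\|_F$. The crucial point is that the random event furnished by Lemma~\ref{lem:oymak_C3} concerns only $X$ and $W_0$, not $s$; so a single high-probability event — obtained by a union bound over the \emph{finitely many} relevant values of $m$ — will serve simultaneously for all times $s$, which is exactly what lets the conclusion hold for all $s$ at once.

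First I would dispose of the degenerate regimes deterministically. Any selection $R_s$ from $\phi'(XW(s))$ has entries in $[0,1]$, so each row of $R_s-R_0$ has $d_1$ entries in $[-1,1]$ and $\max_{i\in[N]}\|(R_s-R_0)_{i:}\|_2\le\sqrt{d_1}$. Hence if $d_1\le c_0\log N$ for a suitable absolute $c_0$, then $\sqrt{d_1}\le 10\sqrt{\log N}$ and the bound is immediate; and if $\|W_s-W_0\|_F$ exceeds a suitable constant multiple of $\sqrt{d_1}$, then already $2(d_1\|W_s-W_0\|_F)^{1/3}\ge\sqrt{d_1}$, so again the trivial bound suffices. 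From now on I may assume $d_1> c_0\log N$ and that $\|W_s-W_0\|_F$ lies below that threshold.

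Next, set $m_0=\lceil c_1\log N\rceil$ with $c_1$ an absolute constant large enough (and $c_0>c_1$, so that $m_0\le d_1$) that the geometric tail satisfies $\sum_{m=m_0}^{d_1}Ne^{-m/6}\le\frac{Ne^{-m_0/6}}{1-e^{-1/6}}\le 1/N$ (for $N$ beyond an absolute threshold). Let $E$ be the event that $\min_{i\in[N]}\|X_{i:}W_0\|_{m-}\ge\frac{m}{2d_1}$ holds for every $m\in\{m_0,\dots,d_1\}$ simultaneously; by Lemma~\ref{lem:oymak_C3} and the union bound, $\PP(E)\ge 1-1/N$. Work on $E$. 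Given $s$, put $m^\ast=\max\bigl(m_0,\ \lceil(2d_1\|W_s-W_0\|_F)^{2/3}\rceil\bigr)$; by the reduction above $m^\ast\le d_1$, so $E$ yields $\min_{i\in[N]}\|X_{i:}W_0\|_{m^\ast-}\ge m^\ast/(2d_1)$, and therefore
\[
\sqrt{m^\ast}\,\min_{i\in[N]}\|X_{i:}W_0\|_{m^\ast-}\ \ge\ \frac{(m^\ast)^{3/2}}{2d_1}\ \ge\ \|W_s-W_0\|_F ,
\]
which is exactly the hypothesis of Lemma~\ref{lem:oymak_C2}; it thus gives $\max_{i\in[N]}\|(R_s-R_0)_{i:}\|_2\le\sqrt{2m^\ast}$. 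Finally, from $m^\ast\le m_0+1+(2d_1\|W_s-W_0\|_F)^{2/3}$ and $\sqrt{a+b}\le\sqrt a+\sqrt b$ one gets $\sqrt{2m^\ast}\le\sqrt{2(m_0+1)}+2^{5/6}(d_1\|W_s-W_0\|_F)^{1/3}$, which is at most $10\sqrt{\log N}+2(d_1\|W_s-W_0\|_F)^{1/3}$, since $\sqrt{2(m_0+1)}=\calO(\sqrt{\log N})$ and $2^{5/6}<2$.

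The only real subtlety is the uniformity in $s$: although $m^\ast$ depends on the trajectory, the sole source of randomness is the event $E$, a finite-index statement about $(X,W_0)$, so one union bound closes the argument for all $s$ at once. The remaining steps — pinning down $c_0,c_1$ through the geometric tail estimate, and the crude constant bookkeeping needed to land the explicit factors $10$ and $2$ — are routine.
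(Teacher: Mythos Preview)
Your proof is correct and follows essentially the same route as the paper: combine Lemmas~\ref{lem:oymak_C2} and~\ref{lem:oymak_C3} with $m$ chosen adaptively of size roughly $(2d_1\|W_s-W_0\|_F)^{2/3}$ plus a logarithmic offset, and fall back on the deterministic bound $\sqrt{d_1}$ in the extreme regimes. The one place you are more careful than the paper is the uniformity in $s$: the paper simply sets $m=\lceil 12\log N + (2d_1\|W_s-W_0\|_F)^{2/3}\rceil$ and invokes the probability bound $1-Ne^{-\min(m,d_1)/6}\ge 1-1/N$ for that single $m$, without making explicit why one event of probability $1-1/N$ covers all $s$ (and hence all values of $m$) simultaneously. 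Your union bound over $m\in\{m_0,\dots,d_1\}$ --- exploiting that the events of Lemma~\ref{lem:oymak_C3} involve only $(X,W_0)$ --- makes this step rigorous, at the cost of slightly fussier constant bookkeeping.
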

\begin{proof}
    Note that $\max_i\| (R_s-R_0)_{i:} \| \le \sqrt{d_1}$ always.
    Therefore, by Lemma~\ref{lem:oymak_C2} and Lemma~\ref{lem:oymak_C3}, if $\|W_s-W_0\|_F \le m\sqrt{m}/2d_1$, then $\max_{i\in[N]} \| (R_s-R_0)_{i:} \|_2\le \sqrt{2m}$ with probability at least $1-Ne^{-\min(m,d_1)/6}$ for any $m\in\NN$.
    Pick 
    \[
        m=
        \lceil
            12\log N + (2d_1\|W_s-W_0\|_F)^{2/3}
        \rceil  
    \]
    so that $\|W_s-W_0\|_F \le m\sqrt{m}/2d_1$.
    As $d_1\ge N$ (this is our global assumption) and 
    \[
        \lceil a + \log N \rceil 
        \le 
        a + \log N + 1
        \le 
        a + 4\log N
    \] 
    for $2\le N \in\NN$ and any $a>0$, then 
    \[
      12 \log N
      \le 
      m
      \le 
      48\log N + (2d_1\|W_s-W_0\|_F)^{2/3}.
    \]
    Therefore, using sub-additivity of the square root we get that 
    \[
        \max_i\| (R_s-R_0)_{i:}\|_2
        \le 
        \sqrt{2m}
        \le 
        10\sqrt{\log N} 
        + 
        2(d_1\|W_s-W_0\|_F)^{1/3}
    \]
    with probability at least $1-Ne^{-\min(m,d_1)/6}\ge 1-Ne^{-2\log N} = 1-\frac{1}{N}$, as desired.
\end{proof}
\begin{proposition}\label{prop:beta_0}
	Denote $\beta_0 \bydef \sigma_{min}( (X \star (R_0\dg(v)))^T )$.
    If 
	\[ 
		d_0\in[\sqrt{N},N] 
		\quad\text{and}\quad 
		d_1 \ge \max(N, C(N\log N)/d_0),
	\] 
	then 
	\[
		c\frac{\| y \|}{\sqrt{N}} 
		\le 
		\beta_0 
		\le 
		C
		\frac{\| y \|_2}{\sqrt{d_0}}	
	\] 
	WHP for some absolute constants $c,C>0$.
\end{proposition}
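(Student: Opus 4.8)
The plan is to reduce the statement to a lower bound on the least eigenvalue of a Hadamard product, and to control that eigenvalue by combining the Khatri--Rao/Hermite technique behind Theorem~\ref{T:alpha_0_init} --- applied now to the $\arcsin$ kernel --- with a matrix Chernoff bound; the upper bound comes for free from a trace estimate.

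\emph{Reduction and the upper bound.} Under Assumption~\ref{Asm:one_layer} we have $\dg(v)^2 = \tfrac{\|y\|_2^2}{Nd_1}\Id_{d_1}$, and at $s=0$ almost surely $(R_0)_{ik}=\mathbf{1}\{\langle X_{i:},(W_0)_{:k}\rangle>0\}$, so $R_0$ and $\beta_0$ are a.s.\ well defined. Writing $A\bydef X\star(R_0\dg(v))\in\RR^{N\times d_0d_1}$, a direct check gives $AA^T=\tfrac{\|y\|_2^2}{Nd_1}\big((XX^T)\odot(R_0R_0^T)\big)$, with $\odot$ the entrywise (Hadamard) product and the signs of the entries of $v$ cancelling; hence $\beta_0^2=\sigma_{min}(A^T)^2=\lambda_{min}(AA^T)=\tfrac{\|y\|_2^2}{Nd_1}\lambda_{min}\big((XX^T)\odot(R_0R_0^T)\big)$. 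The upper bound is then deterministic, via $\lambda_{min}(M)\le\tfrac1N\tr(M)$ for positive semidefinite $M\in\RR^{N\times N}$: indeed $\beta_0^2\le\tfrac1N\tr(AA^T)=\tfrac1N\|A\|_F^2=\tfrac1N\|R_0\dg(v)\|_F^2=\tfrac{\|y\|_2^2}{N^2d_1}\sum_{i,k}(R_0)_{ik}\le\tfrac{\|y\|_2^2}{N}\le\tfrac{\|y\|_2^2}{d_0}$, using $\|X_{i:}\|_2=1$, $(R_0)_{ik}\in\{0,1\}$ and $d_0\le N$.

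\emph{The lower bound: the conditional mean.} Fix $X$ and put $r^{(k)}\bydef(R_0)_{:k}\in\{0,1\}^N$, so $(XX^T)\odot(R_0R_0^T)=\sum_{k=1}^{d_1}Y_k$ with $Y_k\bydef\dg(r^{(k)})XX^T\dg(r^{(k)})\succeq0$, the $Y_k$ being i.i.d.\ given $X$. Since $X(W_0)_{:k}\mid X\sim\mathcal{N}(0,XX^T)$ and $(XX^T)_{ii}=1$, the bivariate normal orthant probability yields $\EE[r^{(k)}(r^{(k)})^T\mid X]=\tfrac14(\mathbf{1}\mathbf{1}^T+\Sigma)$, where $\Sigma_{ij}\bydef\tfrac2\pi\arcsin\langle X_{i:},X_{j:}\rangle$, $\Sigma_{ii}=1$, and $\Sigma\succeq0$ (being the conditional second-moment matrix of the sign vector $(\operatorname{sign}\langle X_{i:},(W_0)_{:k}\rangle)_i$). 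Hence $\mu\bydef\lambda_{min}\big(\sum_k\EE[Y_k\mid X]\big)=d_1\lambda_{min}\big(\tfrac14(XX^T+(XX^T)\odot\Sigma)\big)\ge\tfrac{d_1}{4}\lambda_{min}\big((XX^T)\odot\Sigma\big)\ge\tfrac{d_1}{4}\lambda_{min}(\Sigma)$, the first step using $XX^T\succeq0$ and the second the Schur bound $\lambda_{min}(A\odot B)\ge\lambda_{min}(A)\min_iB_{ii}$ (for $B\succeq0$) with $A=\Sigma$, $B=XX^T$, $\min_i(XX^T)_{ii}=1$. To bound $\lambda_{min}(\Sigma)$ below, use $\tfrac2\pi\arcsin(t)=\sum_{n\ge0}b_nt^{2n+1}$ with every $b_n>0$; by the dot-product-kernel identity $(XX^T)^{\odot m}=(X^{\star m})(X^{\star m})^T$ this gives $\Sigma=\sum_{n\ge0}b_n(X^{\star(2n+1)})(X^{\star(2n+1)})^T\succeq b_2(X^{\star5})(X^{\star5})^T$. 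A Gershgorin estimate exactly as in~\eqref{eq:gershgorin}, with the sub-Gaussian concentration $\max_{i\ne j}|\langle X_{i:},X_{j:}\rangle|\le C_0\sqrt{\log N/d_0}$ used in the proof of Theorem~\ref{T:alpha_0_init}, gives $\lambda_{min}\big((X^{\star5})(X^{\star5})^T\big)\ge1-NC_0^5(\log N/d_0)^{5/2}\ge\tfrac12$ WHP, as $d_0\ge\sqrt N$ makes $N(\log N/d_0)^{5/2}\le(\log N)^{5/2}N^{-1/4}\to0$. So $\mu\ge c_0d_1$ with $c_0\bydef b_2/8$, WHP.

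\emph{The lower bound: concentration, and the main obstacle.} Condition on $X$ lying in the WHP event on which $\max_{i\ne j}|\langle X_{i:},X_{j:}\rangle|\le C_0\sqrt{\log N/d_0}$ and $\|X\|_{op}^2\le L\bydef C_1N/d_0$ both hold (the second from the standard spectral-norm bound for matrices with i.i.d.\ sub-Gaussian isotropic rows --- cf.\ the $\|X\|_{op}$ estimate in Lemma~\ref{L:init_loss}, applied after rescaling the rows of $X$ to norm $\sqrt{d_0}$ and using $d_0\le N$). On this event $\|Y_k\|_{op}\le\|\dg(r^{(k)})\|_{op}^2\|XX^T\|_{op}\le\|X\|_{op}^2\le L$ surely, so the matrix Chernoff inequality \cite[Theorem~1.2]{tropp2012tailbounds} (the tool used for Lemma~\ref{L:lambda_chernoff_bound}), applied to $\sum_{k=1}^{d_1}Y_k$ with the mean bound $\mu\ge c_0d_1$, gives $\PP\big(\lambda_{min}\big((XX^T)\odot(R_0R_0^T)\big)\le\tfrac12c_0d_1\mid X\big)\le N\exp(-c'd_1d_0/N)$ for an absolute $c'>0$. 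Since $d_1\ge CN\log N/d_0$ forces $c'd_1d_0/N\ge c'C\log N$, this is $\le N^{1-c'C}\to0$ once $C$ is a large enough absolute constant; undoing the normalization, $\beta_0^2=\tfrac{\|y\|_2^2}{Nd_1}\lambda_{min}\big((XX^T)\odot(R_0R_0^T)\big)\ge\tfrac{c_0}{2N}\|y\|_2^2$ WHP, which is the lower bound. The crucial --- and easily missed --- point is the operator-norm bound $\|Y_k\|_{op}\le\|X\|_{op}^2=O(N/d_0)$: it is the smallness of $\|X\|_{op}^2$ here (because the rows of $X$ have unit norm), rather than the crude $\|Y_k\|_{op}\le\|r^{(k)}\|_2^2\asymp N/2$, that yields the $d_0$ in the denominator and makes overparametrization $d_1\gtrsim N\log N/d_0$ suffice. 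A secondary point is verifying positivity of the Taylor coefficients of $\arcsin$ and choosing the Khatri--Rao exponent (here $5$) large enough that the factor $N$ in the Gershgorin bound is beaten even when $d_0$ is as small as $\sqrt N$.
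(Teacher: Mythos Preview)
Your argument is correct. The paper's own proof simply invokes \cite[Lemmas~6.6 and~6.7]{oymak} and records the relevant parameter values ($\|v\|_2=\sqrt{d_1}\|v\|_\infty$, $\|X\|_{op}\lesssim\sqrt{N/d_0}$, $\lambda(X)\sim\text{const}$), so you have effectively unpacked those citations into a self-contained proof using the same toolkit---matrix Chernoff, a positive kernel expansion, and a Gershgorin bound on Khatri--Rao powers---that underlies both Oymak's argument and the paper's Theorem~\ref{T:alpha_0_init}.

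A few remarks on differences of detail. First, your upper bound via the trace is deterministic and in fact gives the sharper $\beta_0\le\|y\|_2/\sqrt{N}$, which matches your lower bound up to constants; the paper's stated $\|y\|_2/\sqrt{d_0}$ is weaker (since $d_0\le N$) and comes from Weyl-type operator-norm estimates rather than a trace. Second, your lower-bound route goes through the $\arcsin$ kernel for $\EE[R_0R_0^T\mid X]$ directly, whereas the paper's general machinery (Lemma~\ref{L:lambda_Khatri-Rao} and the proof of Theorem~\ref{T:alpha_0_init}) works with the Hermite expansion of $\phi$; the two are equivalent here because the Jacobian involves indicators rather than ReLU values, and the $\arcsin$ Taylor series plays the role of the Hermite coefficients. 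Your Schur bound $\lambda_{min}(A\odot B)\ge\lambda_{min}(A)\min_iB_{ii}$ (which follows from $x^T(A\odot B)x=\tr(A\,\dg(x)B\dg(x))\ge\lambda_{min}(A)\sum_iB_{ii}x_i^2$) and the choice of Khatri--Rao exponent $5$ are both sound; the latter is exactly the threshold that beats the $N$ factor in Gershgorin once $d_0\ge\sqrt{N}$.
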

\begin{proof}
    To get the first estimate, apply \cite[Lemma~6.7]{oymak} and note that $\| v \|_2 = \sqrt{d_1}\| v \|_\infty$ and that for the uniform distribution on the unit sphere, $\|X\|_{op}\lesssim \sqrt{N/d_0}$ and $\lambda(X)\sim \operatorname{const}$ (note that $\beta_0 = \sigma_{min}(\mathcal{J}(W_0))$ in their notation).
    The second estimate follows from Weyl's inequality, cf.~\cite[Lemma~6.6]{oymak}. 
\end{proof}

\begin{proof}[Proof of Theorem~\ref{T:one_layer}]
	For any $W$, we have
	\begin{align*}
		\min_{w\in \partial L(W)} \| w \|_F^2
		&=
		\min_{R \in \phi'(XW)}
		\|  (X \star (R\dg(v)))^T (\hat{y}-y) \|_2^2
		\\ &\ge 
		2 \calL(W) \cdot 
		\min_{R \in \phi'(XW)} \sigma_{min}^2( (X \star (R\dg(v)))^T )
		,
	\end{align*}
	where in the first equality we have used Lemma~\ref{lem:mtx_inequality} below with $A=X^T$, $x=\hat{y}-y$ and $B=R\dg(v)$.
	Denote $\beta_s \bydef \sigma_{min}( (X \star (R_s\dg(v)))^T )$ and write $\calL(t)\bydef \calL(W(t))$ for short. 
	Then, by Proposition~\ref{P:DI_existence} and Gr\"{o}nwall's lemma 
	\begin{align}\label{eq:calL_decay_gronwall}
	\begin{split}
		\calL(t)
		\le 
		\calL(0)
		\exp\bigl(
			-2 \int_0^t \beta_s^2 \,ds
		\bigr)
		\le 
		\calL(0)
		\exp\bigl(
			-2t\beta_0^2
			+4\beta_0 \int_0^t \vert \beta_s - \beta_0 \vert \,ds
		\bigr).
	\end{split}
	\end{align}
	By Weyl's inequality, Lemma~\ref{lem:mtx_inequality} applied with $A=X^T$ and $B=(R_s-R_0)\dg(v)$ and by Corollary~\ref{cor:max_delta_R_estimate} we get
	\begin{align}\label{eq:beta_w_estimate}
	\begin{split}
		\vert
			\beta_s - \beta_0
		\vert
		&\le 
		\| (X \star ((R_s-R_0)\dg(v)))^T \|_{op}
		\\&=
		\sup_{x\colon \| x \|_2 = 1}
		\| (X \star ((R_s-R_0)\dg(v)) )^T x \|_2
		\\&\le 
		\| X \|_{op} \max_{i\in[N]} \| v_i(R_s-R_0)_{i:} \|_2
		\\&\le
		\| X \|_{op} \| v\|_\infty \max_{i\in[N]} \| (R_s-R_0)_{i:} \|_2
		\\&\le 
		\| X \|_{op} \| v\|_\infty
		\bigl[
			10\sqrt{{\log N}} 
			+ 
			2(d_1\|W_s-W_0\|_F)^{1/3}
		\bigr]
		,
	\end{split}
	\end{align}
	cf. Lemma C.1 in~\cite{oymak}.

	Passing with the norm under the integral, using Lemma~\ref{lem:mtx_inequality} and estimating $\| (R_s)_{i:} \|_2 \le \sqrt{d_1}$, we obtain
    \begin{align}\label{eq:DW_estimate}
    \begin{split}
        \|W_t-W_0\|_F
        &\le 
        \int_0^t
        \|\dot{W}_s\|_F \,ds
        \\&=
        \int_0^t
        \|
        (X\ast (R_s\dg(v)))^T (\hat{y}(s)-y))
        \|_2 
        \,ds
        \\&\le 
        \sqrt{2}
        \int_0^t
        \|
        X\ast (R_s\dg(v))
        \|_{op}
        \sqrt{\calL(s)}
        \,ds
		\\&\le 
        \sqrt{2}
        \| X \|_{op}
        \| v \|_\infty 
		\int_0^t
		\sqrt{\calL(s)}
		\max_{i\in[N]}
		\| (R_s)_{i:} \|_2
		\,ds  
		\le 
        \sqrt{2d_1}
        \| X \|_{op}
        \| v \|_\infty 
		\bar{\calL}(t)
    \end{split}
    \end{align}
	Combining~\eqref{eq:calL_decay_gronwall},~\eqref{eq:beta_w_estimate} and~\eqref{eq:DW_estimate}, noting that $\|X\|_{op}\|v\|_{\infty} \le C\frac{\| y\|_2}{\sqrt{d_0d_1}}$ WHP for some absolute constant $C>0$ and using monotonicity of $\mathcal{L}$, we get that 
	\begin{align}
	\begin{split}
		\calL(t)
		&\le 
		\calL(0)
		\exp\bigl(
			-2t\beta_0^2
			+4\beta_0
			\| X \|_{op} \| v\|_\infty
			\int_0^t \bigl[
				10\sqrt{{\log N}} 
				+ 
				2(d_1\|W_s-W_0\|_F)^{1/3}
			\bigr] \,ds
		\bigr)
		\\&\le 
		\calL(0)
		\exp\Bigl(
			-2t\beta_0^2
			+C\frac{\beta_0 \| y\|_2}{\sqrt{d_0d_1}}
			\int_0^t \bigl[
				\sqrt{{\log N}} 
				+ 
				(d_1\|W_s-W_0\|_F)^{1/3}
			\bigr] \,ds
		\Bigr)
		\\&\le 
		\calL(0)
		\exp\Bigl(
			-2t\beta_0^2
			+C\frac{\beta_0 \| y\|_2}{\sqrt{d_0d_1}}
			\int_0^t \bigl[
				\sqrt{{\log N}} 
				+ 
				\bigl(
					\frac{d_1\| y\|_2 \bar{\calL}(s)}{\sqrt{d_0}}
				\bigr)^{1/3}
			\bigr] \,ds
		\Bigr)
		\\&\le 
		\calL(0)
		\exp\Bigl(
			-2t\beta_0^2
			+Ct\frac{\beta_0 \| y\|_2}{\sqrt{d_0d_1}}
			\bigl[
				\sqrt{{\log N}} 
				+ 
				\bigl(
					\frac{d_1\| y\|_2 \bar{\calL}(t)}{\sqrt{d_0}}
				\bigr)^{1/3}
			\bigr]
		\Bigr)
		\\&\le 
		\calL(0)
		\exp\Bigl(
			-2t\beta_0^2
			\Bigl[
			1 -
			C
			\frac{\| y\|_2 }{\beta_0\sqrt{d_0d_1}}
			\bigl[
				\sqrt{{\log N}} 
				+ 
				\bigl(
					\frac{d_1\| y\|_2 \bar{\calL}(t)}{\sqrt{d_0}}
				\bigr)^{1/3}
			\bigr]
			\Bigr]
		\Bigr).
	\end{split}
	\end{align}
	Therefore, $\bar{\calL}$ satisfies the following differential inequality 
	\[
		y(0)=0,\quad 
		y'(t)
		\le 
		\sqrt{\calL}(0)
		\exp\Bigl(
			-2t\beta_0^2
			\Bigl[
			1 -
			C
			\frac{\| y\|_2 }{\beta_0\sqrt{d_0d_1}}
			\bigl[
				\sqrt{{\log N}} 
				+ 
				\bigl(
					\frac{d_1\| y\|_2 y(t)}{\sqrt{d_0}}
				\bigr)^{1/3}
			\bigr]
			\Bigr]
		\Bigr).
	\]
	Reasoning in the same way as in Lemma~\ref{L:ODE_solutions}, we get that $\calL(t)\le \calL(0)\exp(-ct\beta_0^2)$ if the following condition holds asymptotically at initialization:
	\begin{equation}\label{eq:cond_1_one_layer}
		\frac{\| y\|_2 }{\beta_0\sqrt{d_0d_1}}
		\bigl[
			\sqrt{{\log N}} 
			+ 
			\bigl(
				\frac{d_1\| y\|_2 }{\sqrt{d_0}}
				\cdot 
				\frac{\sqrt{\calL(0)}}{\beta_0^2}
			\bigr)^{1/3}
		\bigr]
		= 
		o(1).
	\end{equation}
	Using Proposition~\ref{prop:beta_0} we get that $\|y\|_2/\beta_0 \lesssim \sqrt{N}$.
	Since $\sqrt{\calL}(0) \lesssim \| y \|_2$ WHP, then~\eqref{eq:cond_1_one_layer} is implied by 
	\begin{equation*}
		\frac{\sqrt{N}}{\sqrt{d_0d_1}}
		\bigl[
			\sqrt{{\log N}} 
			+ 
			\bigl(
				d_1 \frac{N}{\sqrt{d_0}}
			\bigr)^{1/3}
		\bigr]
		= 
		o(1),
	\end{equation*}
	which is equivalent to 
	\begin{equation*}
		d_1 
		\ge 
		\Omega\Bigl(
			\max\bigl(
				\frac{N\log N}{d_0},
				\frac{N^5}{d_0^4}
			\bigr)
		\Bigr)
	\end{equation*}
	as desired.
\end{proof}

\section{Linear algebra lemmas}
\begin{lemma}\label{L:mtx_product_norm_inequality}
	If $A_1,\ldots,A_k$ are any matrices such that $A_1\cdots A_k$ is well defined, then\begin{align}\label{eq:mtx_product_norm_inequality}
	\| A_1\cdots A_k \|_F &\le 
	\min_{j\in [k]}
	\| A_j \|_{F} \prod_{i\neq j} \|A_{i}\|_{op}.
	\end{align}
\end{lemma}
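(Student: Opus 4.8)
The plan is to reduce everything to the two-factor case $\|AB\|_F\le\|A\|_{op}\|B\|_F$ (and its transpose) and then split the product around the distinguished index $j$. First I would record the two elementary bounds for conformable matrices $A,B$:
\[
\|AB\|_F\le\|A\|_{op}\|B\|_F
\qquad\text{and}\qquad
\|AB\|_F\le\|A\|_F\|B\|_{op}.
\]
The first follows by applying the defining inequality $\|Ax\|\le\|A\|_{op}\|x\|$ columnwise: $\|AB\|_F^2=\sum_i\|AB_{:i}\|^2\le\|A\|_{op}^2\sum_i\|B_{:i}\|^2=\|A\|_{op}^2\|B\|_F^2$. The second is immediate from the first by transposition, since both $\|\cdot\|_F$ and $\|\cdot\|_{op}$ are invariant under transposition: $\|AB\|_F=\|B^TA^T\|_F\le\|B^T\|_{op}\|A^T\|_F=\|B\|_{op}\|A\|_F$.

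Next I would fix an index $j\in[k]$ and write $A_1\cdots A_k=P\,A_j\,S$, where $P\bydef A_1\cdots A_{j-1}$ and $S\bydef A_{j+1}\cdots A_k$, with the convention that an empty product (occurring when $j=1$ or $j=k$) is the identity matrix of the appropriate size, whose operator norm equals $1$. Applying the first two-factor bound with $(P,A_jS)$ and then the second with $(A_j,S)$ gives
\[
\|A_1\cdots A_k\|_F
\le\|P\|_{op}\,\|A_jS\|_F
\le\|P\|_{op}\,\|A_j\|_F\,\|S\|_{op}.
\]
By submultiplicativity of the operator norm, $\|P\|_{op}\le\prod_{i<j}\|A_i\|_{op}$ and $\|S\|_{op}\le\prod_{i>j}\|A_i\|_{op}$, so that $\|A_1\cdots A_k\|_F\le\|A_j\|_F\prod_{i\neq j}\|A_i\|_{op}$. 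Since $j\in[k]$ was arbitrary, taking the minimum over $j$ yields~\eqref{eq:mtx_product_norm_inequality}.

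There is essentially no genuine obstacle here: the argument is a routine induction repackaged as a single split. The only points needing a moment of care are the bookkeeping for the empty products at the endpoints $j=1$ and $j=k$ (handled by the identity-matrix convention) and the observation that both norms are preserved under transposition, which is what lets us deploy the bound on the ``heavy'' factor $A_j$ from either side.
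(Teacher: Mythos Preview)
Your proof is correct and follows essentially the same approach as the paper: split the product as $P A_j S$, peel off $P$ and $S$ using the two-factor bounds $\|AB\|_F\le\|A\|_{op}\|B\|_F$ and $\|AB\|_F\le\|A\|_F\|B\|_{op}$, apply submultiplicativity of $\|\cdot\|_{op}$, and minimize over $j$. The only cosmetic difference is that the paper derives the second two-factor bound by a direct row-wise expansion rather than by your transposition trick.
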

\begin{proof}
	Indeed, for any $j\in[k]$, let $A^{\leftarrow j}=A_1\cdots A_{j-1}$ and $A^{j\rightarrow}=A_{j+1}\cdots A_k$, where we identify an empty product with the identity operator.
	Then, for any $j\in [k]$
	\begin{align*}
	\| A_1\cdots A_k \|_F^2
	&= 
	\sum_{i}
	\| A^{\leftarrow j} (A_j \cdot  A^{j\rightarrow})_{:i} \|^2
	\\&\le 
	\| A^{\leftarrow j} \|_{op}^2
	\sum_i
	\| (A_j \cdot  A^{j\rightarrow})_{:i} \|^2
	\\&= 
	\| A^{\leftarrow j} \|_{op}^2
	\| A_j A^{j\rightarrow} \|^2_F
	\\&=
	\| A^{\leftarrow j} \|_{op}^2
	\sum_{i}
	\| (A_j)_{i:} \cdot A^{j\rightarrow}\|^2
	\le 
	\| A^{\leftarrow j} \|_{op}^2
	\| A^{j\rightarrow} \|_{op}^2
	\| A_j \|_F^2	
	\end{align*}
	and~\eqref{eq:mtx_product_norm_inequality} follows by taking square roots, using sub-multiplicity of the operator norm and taking minimum over all possible choices of $j\in [k]$.
\end{proof}

\begin{lemma}\label{lem:mtx_inequality}
    For any matrices $A,B$ and any vector $x$ such that $A\dg(x)B$ exists one has
    \begin{equation}\label{eq:norm_estimate_1}
        \| (A^T \star B)^T x \|_2
        =
        \| A \dg(x) B \|_F 
        \le 
        \| A \|_{op}
        \| x \|_2 
        \max_i \| B_{i:} \|_2.
    \end{equation}
\end{lemma}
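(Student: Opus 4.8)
The plan is to first establish the claimed identity $\|(A^T \star B)^T x\|_2 = \|A\dg(x)B\|_F$ by a direct entrywise comparison, and then deduce the inequality from submultiplicativity of the Frobenius norm (a special case of Lemma~\ref{L:mtx_product_norm_inequality}) together with a crude row-wise estimate on $\dg(x)B$.

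For the identity, write $A\in\RR^{m\times n}$, $x\in\RR^n$, $B\in\RR^{n\times p}$. Since $A^T$ and $B$ both have $n$ rows, $A^T\star B\in\RR^{n\times mp}$ has $i$-th row $(A^T)_{i:}\otimes B_{i:}$, so its entry indexed by $i\in[n]$ and $(k,l)\in[m]\times[p]$ equals $A_{ki}B_{il}$. Consequently the $(k,l)$-coordinate of $(A^T\star B)^Tx$ is $\sum_{i}A_{ki}B_{il}x_i$, which is exactly the $(k,l)$-entry of $A\dg(x)B$. Hence $(A^T\star B)^Tx$ is a reindexing of the entries of the matrix $A\dg(x)B$, and taking Euclidean norms gives $\|(A^T\star B)^Tx\|_2=\|A\dg(x)B\|_F$.

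For the inequality, apply Lemma~\ref{L:mtx_product_norm_inequality} with $k=2$, $A_1=A$, $A_2=\dg(x)B$, choosing $j=2$ in the minimum, to get $\|A\dg(x)B\|_F\le\|A\|_{op}\|\dg(x)B\|_F$. The $i$-th row of $\dg(x)B$ is $x_iB_{i:}$, whence
\[
\|\dg(x)B\|_F^2=\sum_{i}x_i^2\|B_{i:}\|_2^2\le\Big(\max_i\|B_{i:}\|_2^2\Big)\sum_i x_i^2=\|x\|_2^2\max_i\|B_{i:}\|_2^2.
\]
Combining the two displayed bounds yields $\|A\dg(x)B\|_F\le\|A\|_{op}\|x\|_2\max_i\|B_{i:}\|_2$, which together with the identity proves \eqref{eq:norm_estimate_1}. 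There is no real obstacle here: the only point requiring care is bookkeeping the index conventions for the Khatri--Rao product and the vectorization in the first step, after which the remaining estimates are one-liners.
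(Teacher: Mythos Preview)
Your proof is correct and follows essentially the same approach as the paper: establish the identity by an entrywise computation (the paper just says ``square and expand both hand sides''), then bound $\|A\dg(x)B\|_F\le\|A\|_{op}\|\dg(x)B\|_F$ and expand $\|\dg(x)B\|_F^2$ row-wise. Your version is in fact more careful than the paper's, which contains a typo ($x_i$ instead of $x_i^2$) in the row-sum step.
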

\begin{proof}
    Square and expand both hand sides to check that $\| (A^T \star B)^T x \|_2 = \| A \dg(x) B \|_F $. 
    The inequality follows as 
    \begin{align*}
        \| A \dg(x) B \|_F^2 
        &\le
        \| A \|_{op}^2
        \| \dg(x) B \|_F^2 
        \\&= 
        \| A \|_{op}^2
        \sum_i x_i \| B_{i:} \|_2^2
        \le 
        \| A \|_{op}^2
        \| x \|_2^2
        \max_i \| B_{i:} \|_2^2
    \end{align*}
    as desired.
\end{proof}

\end{document}